\newcommand{\Exp}{\mathbb{E}}
\newcommand{\R}{\mathbb{R}}
\newcommand{\N}{\mathbb{N}}
\newcommand{\calN}{\mathcal{N}}
\newcommand{\bz}{\mathbf{z}}
\newcommand{\nablatwo}{\nabla^{2}\,}
\newcommand{\nabthet}{\nabla_{\btheta}}
\newcommand{\fro}{\mathrm{F}}
\newcommand{\rmd}{\mathrm{d}}
\newcommand{\op}{\mathrm{op}}
\newcommand{\rmD}{\mathrm{D}}
\newcommand{\Expw}{\Exp_{\bw \sim p}}
\newcommand{\Variance}{\normalfont{\textbf{Var}}}
\newcommand{\sighatsq}{\hat{\sigma}^2}
\newcommand{\iidsim}{\overset{\mathrm{i.i.d.}}{\sim}}
\newcommand{\gin}{g_{\mathrm{in}}}
\newcommand{\gout}{g_{\mathrm{out}}}
\newcommand{\namefont}[1]{{\normalfont #1}}
\newcommand{\fobg}{\namefont{FoBG}}
\newcommand{\zobg}{\namefont{ZoBG}}
\newcommand{\barbw}{\bar{\bw}}
\newcommand{\bx}{\mathbf{x}}
\newcommand{\bw}{\mathbf{w}}
\newcommand{\bu}{\mathbf{u}}
\newcommand{\btheta}{\boldsymbol{\theta}}
\newcommand{\dirac}{\updelta}
\newcommand{\nabhat}{\hat{\nabla}}
\newcommand{\nabbar}{\bar{\nabla}}
\newcommand{\nabhatzero}{\nabhat^{[0]}}
\newcommand{\nabhatone}{\nabhat^{[1]}}
\newcommand{\nabbarone}{\nabbar^{[1]}}
\newcommand{\nabbarzero}{\nabbar^{[0]}}
\theoremstyle{plain}
\newtheorem{theorem}{Theorem}[section]
\newtheorem{proposition}[theorem]{Proposition}
\newtheorem{lemma}[theorem]{Lemma}
\newtheorem{claim}[theorem]{Claim}
\newtheorem{corollary}[theorem]{Corollary}
\theoremstyle{definition}
\newtheorem{definition}[theorem]{Definition}
\newtheorem{assumption}[theorem]{Assumption}
\theoremstyle{remark}
\newtheorem{remark}[theorem]{Remark}
\newtheorem{example}[theorem]{Example}
\crefname{equation}{eq}{eqs}
\Crefname{equation}{Eq}{Eqs}
\Crefname{claim}{Claim}{Claims}
\crefname{lemma}{lemma}{lemmas}
\Crefname{lemma}{Lemma}{Lemmas}
\crefname{assumption}{assumption}{assumptions}
\Crefname{assumption}{Assumption}{Assumption}
\icmltitlerunning{Do Differentiable Simulators Give Better Policy Gradients?}
\begin{document}

\twocolumn[
\icmltitle{
Do Differentiable Simulators Give Better Policy Gradients?

}



\icmlsetsymbol{equal}{*}

\begin{icmlauthorlist}
\icmlauthor{H.J. Terry Suh}{yyy}
\icmlauthor{Max Simchowitz}{yyy}
\icmlauthor{Kaiqing Zhang}{yyy}
\icmlauthor{Russ Tedrake}{yyy}
\end{icmlauthorlist}

\icmlaffiliation{yyy}{Department of Electrical Engineering and Computer Science, Massachusetts Institute of Technology, Cambridge, USA}

\icmlcorrespondingauthor{H.J.Terry Suh}{hjsuh@mit.edu}

\icmlkeywords{Differentiable Simulation, Policy Search, Policy Gradients}

\vskip 0.3in
]



\printAffiliationsAndNotice{}  

\begin{abstract}


Differentiable simulators promise faster computation time for reinforcement learning by replacing zeroth-order gradient estimates of a stochastic objective with an estimate based on first-order gradients. However, it is yet unclear what factors decide the performance of the two estimators on complex landscapes that involve long-horizon planning and control on physical systems, despite the crucial relevance of this question for the utility of differentiable simulators. We show that characteristics of certain physical systems, such as stiffness or discontinuities, may compromise the efficacy of the first-order estimator, and analyze this phenomenon through the lens of bias and variance. We additionally propose an $\alpha$-order gradient estimator, with $\alpha \in [0,1]$, which correctly utilizes exact gradients to combine the efficiency of first-order estimates with the robustness of zero-order methods. We demonstrate the pitfalls of traditional estimators and the advantages of the $\alpha$-order estimator on some numerical examples.

\end{abstract}
\section{Introduction}\label{sec:introduction}
\vskip -0.15 true in
Consider the problem of minimizing a \emph{stochastic objective},
\begin{align*}
    \min_{\btheta} F(\btheta) = \min_{\btheta} \Exp_{\bw} f(\btheta,\bw).
\end{align*}
At the heart of many algorithms for reinforcement learning (RL) lies  \emph{zeroth-order} estimation of the gradient $\nabla F$ \cite{policygradienttheorem,ppo}. Yet, in domains that deal with structured systems, such as linear control, physical simulation, or robotics, it is possible to obtain \emph{exact} gradients of $f$, which can also be used to construct a \emph{first-order} estimate of $\nabla F$. The availability of both options begs the question: given access to exact gradients of $f$, which estimator should we prefer?

In stochastic optimization, the theoretical benefits of using first-order estimates of $\nabla F$ over zeroth-order ones have mainly been understood through the lens of variance and convergence rates \cite{ghadimilan,montecarlo}: the first-order estimator often (\emph{not always}) results in much less variance compared to the zeroth-order one, which leads to faster convergence rates to a local minima of general nonconvex smooth objective functions.

However, the landscape of RL objectives that involve long-horizon sequential decision making (e.g. policy optimization) is challenging to analyze, and convergence properties in these landscapes are relatively poorly understood, except for structured settings such as finite-state MDPs \cite{agarwal,kaiqing} or linear control \cite{fazel,russo}. In particular, physical systems with contact, as we show in \Cref{fig:systems}, can display complex characteristics including nonlinearities, non-smoothness, and discontinuities \cite{hybridsystems, robotmanipulation, suh2021bundled}.

\begin{figure}[t]
\centering\includegraphics[width = 0.48\textwidth]{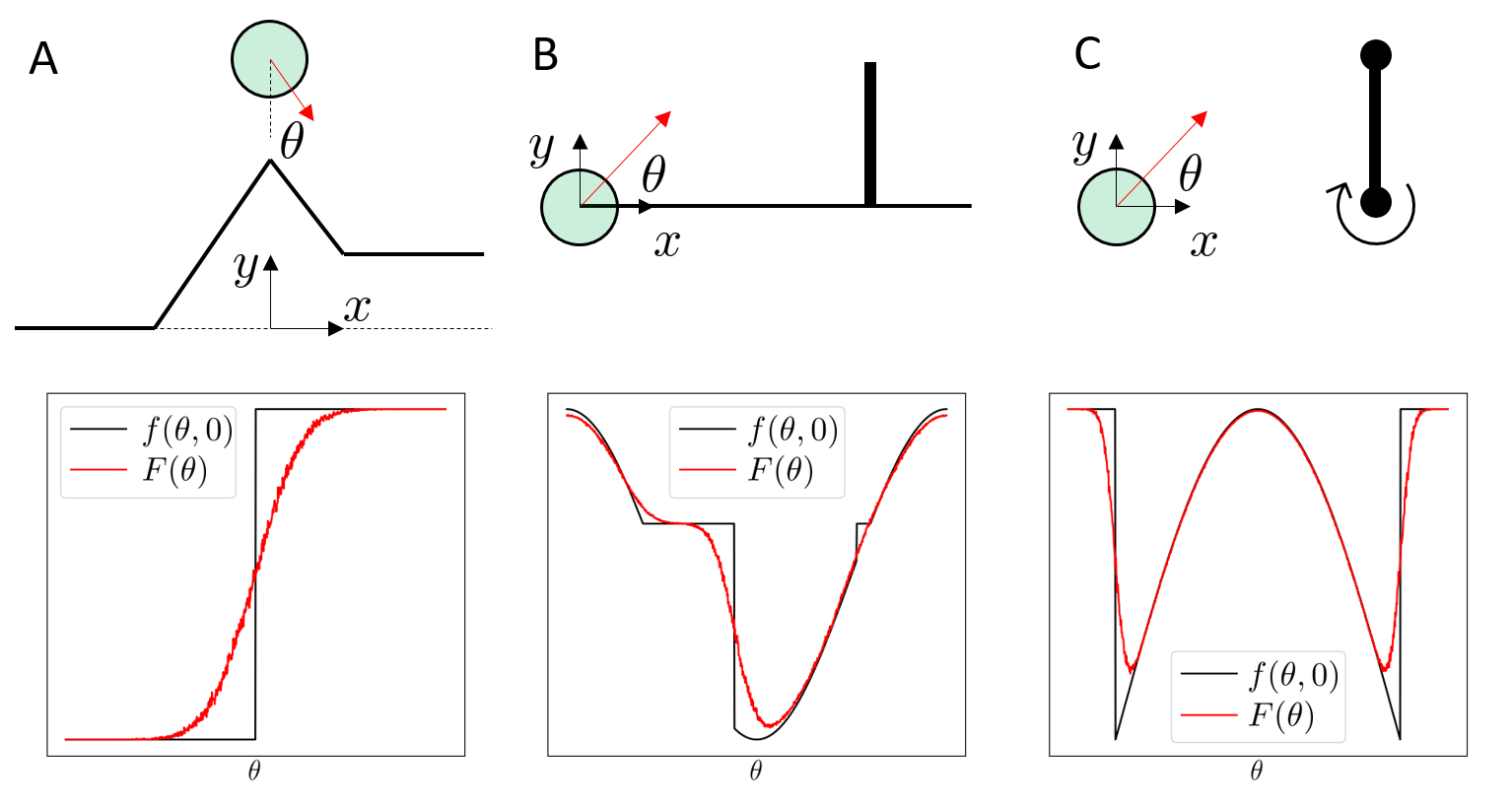}
\caption{Examples of simple optimization problems on physical systems. Goal is to: A. maximize $y$ position of the ball after dropping. B. maximize distance thrown, with a wall that results in inelastic impact. C. maximize transferred angular momentum to the pivoting bar through collision. Second row: the original objective and the stochastic objective after randomized smoothing.}
\label{fig:systems}
\vskip -0.28 true in
\end{figure}

Nevertheless, lessons from convergence rate analysis tell us that there may be benefits to using the exact gradients even for these complex physical systems. Such ideas have been championed through the term ``differentiable simulation'', where forward simulation of physics is programmed in a manner that is consistent with automatic differentiation \cite{brax,difftaichi,drake,werling,add,dojo}, or computation of analytic derivatives \cite{pinocchio}. These methods have shown promising results in decreasing computation time compared to zeroth-order methods \cite{plasticinelab,brax,deluca,d3pg,filipe,pods}.


Existing literature in differentiable simulation mainly focuses on the use of exact gradients for \emph{deterministic} optimization. However, \cite{suh2021bundled, randomizedsmoothing} show that using exact gradients for a deterministic objective can lead to suboptimal behavior of certain systems due to their landscapes. In these systems, stochasticity can be used to \emph{regularize} the landscapes with randomized smoothing \cite{duchi1}. We illustrate how the landscapes change upon injecting noise (\Cref{fig:systems}), and list some benefits of considering a \emph{surrogate} stochastic objective.

\begin{itemize}\itemsep -0.2em
\item \textbf{Stochasticity smooths local minima}. As noted in \cite{suh2021bundled,metz2021gradients}, stochasticity can alleviate some of the high-frequency local minima that deterministic gradients will be stuck on. For instance, the small discontinuity on the right side of Figure \ref{fig:systems}.B is filtered by Gaussian smoothing.
\item \textbf{Stochasticity alleviates flat regions}. In systems of Figure \ref{fig:systems}, the gradients in some of the regions can be completely flat. This stalls progress of gradient descent. The stochastic objective, however, still has non-zero gradient as some samples escape the flat regions and provide an informative direction of improvement.
\item \textbf{Stochasticity encodes robustness.} In Figure \ref{fig:systems}.C, following the gradient to increase the transferred momentum causes the ball to miss the pivot and land in a high-cost region. In contrast, the stochastic objective has a local minimum within the safe region, as the samples provide information about missing the pivot.
\end{itemize}

Thus, our work attempts to compare two versions of gradient estimators in the stochastic setting: the first-order estimator and the zeroth-order one. This setting rules out the case that zeroth-order estimates perform better simply because of stochasticity, and sets equal footing for the two methods.

When $f$ is continuous, these quantities both converge to the same quantity ($\nabla F$) in expectation. We first show that even with continuous $f$, the first-order gradient estimate \emph{can} result in more variance than the zeroth-order one due to the \emph{stiffness} of dynamics or due to compounding of gradients in chaotic systems \cite{parmas,metz2021gradients}.

In addition, we show that the assumption of continuous $f$ can be violated in many relevant physical systems that are nearly/strictly \emph{discontinuous} in the underlying landscape. These discontinuities are commonly caused by contact and geometrical constraints. We provide minimal examples to highlight specific challenges in Figure \ref{fig:systems}. These are not mere pathologies, but abstractions of more complicated examples that are rich with contact, such as robotic manipulation.

We show that the presence of such discontinuities causes the first-order gradient estimator to be {\it biased}, while the zeroth-order one still remains unbiased under discontinuities. Furthermore, we show that stiff continuous approximations of discontinuities, even if asymptotically unbiased, can still suffer from what we call \emph{empirical bias} under finite-sample settings. This results in a bias-variance tradeoff between the biased first-order estimator and the \emph{often} high-variance, yet unbiased zeroth-order estimator. Intriguingly, we find that the bias-variance tradeoff in this setting manifests itself not through convergence rates, but through different local minima. This shows that the two estimators may fundamentally operate on different landscapes implicitly.

The presence of discontinuities need not indicate that we need to commit ourselves to uniformly using one of the estimators. Many physical systems are \emph{hybrid} by nature \cite{hybridsystems}; they consist of smooth regions that are separated by manifolds of non-smoothness or discontinuities. This suggests that we may be able to utilize the first-order estimates far away from these manifolds to obtain benefits of convergence rates, while switching to zeroth-order ones in the vicinity of discontinuities to obtain unbiased estimates.

For this purpose, we further attempt to answer the question: how can we then correctly utilize exact gradients of $f$ for variance reduction when we know the objective is nearly discontinuous? Previous works show that the two estimators can be combined by interpolating based on empirical variance \cite{parmas,pathologies,montecarlo}. However, we show that in the presence of near-discontinuities, selecting based on empirical variance alone can lead to highly inaccurate estimates of $\nabla F$, and propose a robustness constraint on the accuracy of the interpolated estimate to remedy this effect.

\textbf{Contributions.} We 1) shed light on some of the inherent problems of RL using differentiable simulators, and answer which gradient estimator can be more useful under different characteristics of underlying systems such as discontinuities, stiffness, and chaos; and 2) present the  $\alpha$-order  gradient estimator, a robust interpolation strategy between the two gradient estimators that utilizes exact gradients without falling into the identified pitfalls of the previous methods. 

We hope both contributions inspire algorithms for policy optimization using differentiable simulators, as well as design guidelines for new and existing simulators. 


\section{Preliminaries} 
\vskip -0.15 true in
\textbf{Notation. } We denote the expectation of a random vector $\bz$ as $\Exp[\bz]$, and its variance  as $\Variance[\bz] := \Exp[\|\bz - \Exp[\bz]\|^2]$. Expectations are defined in almost-sure sense, so that the law of large numbers holds (see \Cref{app:formal_prelim} for details).

\textbf{Setting. } We study a discrete-time, finite-horizon, continuous-state control problem with states $\bx \in \R^n$, inputs $\bu \in \R^m$,  transition function $\phi:\R^n \times \R^m \to \R^n$, and horizon $H \in \N$. Given a sequence of costs $c_h:\R^n \times \R^m \to \R$, a family of policies $\pi_h:\R^n \times \R^d \to \R^m$ parameterized by $\btheta \in \R^d$, and a sequence of injected noise terms $\bw_{1:H} \in (\R^m)^H$, we define the cost-to-go functions
 \begin{align*}
&\qquad V_h(\bx_h,\bw_{h:H},\btheta) = \textstyle \sum_{h' = h}^H c_h(\bx_{h'},\bu_{h'}), \quad \\
\text{s.t. } &\bx_{h'+1} = \phi(\bx_{h'},\bu_{h'}), \,\, \bu_{h
'}= \pi(\bx_{h'},\btheta) +  \bw_{h'}, \,\, h' \ge h.
\end{align*}
Our aim is to minimize the policy optimization objective 
\begin{align}
F(\btheta) := \Exp_{\bx_1 \sim \rho} \Exp_{\bw_{h}\iidsim p} V_1(\bx_1,\bw_{1:H},\btheta), \label{eq:F_theta_gen}
\end{align}
where $\rho$ is a distribution over initial states $\bx_1$, and  $\bw_{1},\dots,\bw_{H}$ are independent and identically distributed according to some distribution $p$. In the main text, we make the following assumption on the distributions $\rho$ and $p$:

\begin{assumption}\label{asm:dist}
We assume that $\rho$ has finite moments, and that $p = \mathcal{N}(0,\sigma^2 I_n)$ for some $\sigma > 0$.
\end{assumption}
Our rationale for Gaussian $p$ is that we view $\bw_{1:H}$ as \emph{smoothing} to regularize the optimization landscape \cite{duchi2, gradientapproximation}. To simplify the main text, we take $\bx_1$ to be deterministic ($\rho$ is a dirac-delta), with general $\rho$ being addressed in the appendix. Setting $\bar{\bw} = \bw_{1:H}$, $\bar{p} = \calN(0,\sigma^2 I_{nH})$, and $f(\btheta,\barbw)= V_1(\bx_1,\bar{\bw},\btheta)$, we can express $F(\btheta)$ as a \emph{stochastic optimization problem},
\begin{align*}
F(\btheta) := \Exp_{\barbw\sim \bar{p}} f(\btheta,\barbw).
\end{align*}

\textbf{Trajectory optimization.} Our parametrization also includes open-loop trajectory optimization. Letting the policy parameters be an open-loop sequence of inputs $\btheta = \{\btheta_h\}_{h=1}^H$ and having no feedback $\pi(\bx_h,\btheta)=\btheta_h$, we optimize over sequence of inputs to be applied to the system.

\textbf{One-step optimization.} We illustrate some key ideas in the open-loop case where $H=1$: $\pi:\mathbb{R}^d\rightarrow\mathbb{R}^m$ is the identity function with $\bar{\bw} = \bw \in \R^m$, $d=m$ and $c: \R^m \to \R$, 
\begin{align}
 F(\btheta) = \Exp_{\bw \sim p}f(\btheta,\bw), \quad   f(\btheta,\bw) = c(\btheta +  \bw).
\end{align}

\subsection{Gradient Estimators}
\vskip -0.1 true in
In order to minimize $F(\btheta)$, we consider iterative optimization using stochastic estimators of its gradient $\nabla F(\btheta)$. We say a function $\psi: \R^{d_1} \to \R^{d_2}$ has \emph{polynomial growth} if there exist constants $a,b$ such that, for all $\bz \in \R^{d_1}$, $\|\psi(\bz)\| \le a(1+\|\bz\|^b)$. The following assumption ensures these gradients are well-defined.
\begin{assumption}\label{asm:polynomial}
We assume that the policy $\pi$ is continuously differentiable everywhere, and the dynamics $\phi$, as well as the cost $c_h$ have polynomial growth.
\end{assumption}

Even when the costs or dynamics are \emph{not} differentiable, the expected cost $F(\btheta)$ is differentiable due to the smoothing $\barbw$. $\nabla F(\btheta)$ is referred to as the \emph{policy gradient}. 

\textbf{Zeroth-order estimator.} The policy gradient can be estimated only using samples of the function values.
\begin{definition}  
    Given a single zeroth-order estimate of the policy gradient $\nabhatzero F_i(\btheta)$, we define the zeroth-order batched gradient (\zobg{}) $\nabbarzero F(\btheta)$ as the sample mean,
    \begin{align*}\small
        \nabhatzero F_i(\btheta) & \coloneqq \frac{1}{\sigma^2}V_1(\bx_1, \bw^i_{1:H},\btheta) \bigg[\sum^H_{h=1} \rmD_{\btheta} \pi(\bx_h^i,\btheta)^\intercal \bw^i_h\bigg] \\\small
        \nabbarzero F(\btheta) & \coloneqq \textstyle \frac{1}{N} \sum^N_{i=1} \nabhatzero F_i(\btheta),
    \end{align*}
    where $\bx^i_h$ is the state at time $h$ of a trajectory induced by the noise $\bw^i_{1:H}$, $i$ is the index of the sample trajectory, and $\rmD_{\btheta} \pi$ is the Jacobian matrix $\partial \pi/\partial \btheta\in\mathbb{R}^{m\times d}$.
\end{definition}

The hat notation denotes a per-sample Monte-Carlo estimate, and bar-notation a sample mean. The \zobg{} is also referred to as the REINFORCE \cite{reinforce}, score function, or the likelihood-ratio gradient. 

\textbf{Baseline.} In practice, a baseline term $b$ is subtracted from $V_1(\bx_1,\bw^i_{1:H},\btheta)$ for variance reduction. We use the zero-noise rollout as the baseline $b=V_1(\bx_1,\mathbf{0}_{1:H},\btheta)$ \cite{gradientapproximation}:
\begin{align*}\small
    \frac{1}{\sigma^2}\bigg[V_1(\bx_1, \bw^i_{1:H},\btheta) - b\bigg] \bigg[\sum^H_{h=1} \rmD_{\btheta} \pi(\bx_h^i,\btheta)^\intercal \bw^i_h\bigg].
\end{align*}
\textbf{First-order estimator.}  In differentiable simulators,  the gradients of the dynamics $\phi$ and costs $c_h$ are available \emph{almost surely} (i.e., with probability one). Hence, one may compute the exact gradient $\nabla_{\btheta} V_1(\bx_1,\bw_{1:H},\btheta)$ by automatic differentiation and average them to estimate $\nabla F(\btheta)$.
\begin{definition} Given a single first-order gradient estimate $\nabhatone F_i(\btheta)$, we define the first-order batched gradient (\fobg{}) as the sample mean:
    \begin{align*}\small
        \nabhatone F_i(\btheta) & \coloneqq \nabla_{\btheta} V_1(\bx_1, \bw_{1:H}^i, \btheta) \\\small
        \nabbarone F(\btheta) & \coloneqq \textstyle \frac{1}{N} \sum^N_{i=1} \nabhatone F_i(\btheta).
    \end{align*}
\end{definition}
The \fobg{} is also referred to as the reparametrization gradient \cite{kingma}, or the pathwise derivative \cite{stochasticcompuationgraph}. Finally, we define the empirical variance.

\begin{definition}[Empirical variance] For $k \in \{0,1\}$, we define the empirical variance by
    \begin{align*}
        \sighatsq_{k} &= \textstyle \frac{1}{N-1}\sum_{i=1}^N\|\nabhat^{[k]} F_i(\btheta)  - \nabbar^{[k]} F(\btheta)\|^2.
        \end{align*}
\end{definition}




\newcommand{\cE}{\mathcal{E}}
\newcommand{\I}{\mathbb{I}}
\newcommand{\erf}{\mathrm{erf}}

\section{Pitfalls of First-order Estimates}\label{sec:bias_variance}

What are the cases for which we would prefer to use the \zobg{} over the \fobg{} in policy optimization using differentiable simulators? Throughout this section, we analyze the performance of the two estimators through their bias and variance properties, and find pathologies where using the first-order estimator blindly results in worse performance.

\subsection{Bias under discontinuities}

Under standard regularity conditions, it is well-known that both estimators are unbiased estimators of the true gradient $\nabla F(\btheta)$. However, care must be taken to define these conditions precisely. Fortunately,  the \zobg{} is still unbiased under mild assumptions.

\begin{lemma}\label{lem:ZOBG_bias} 
Under \Cref{asm:dist} and \Cref{asm:polynomial}, the \zobg{} is an unbiased estimator of the stochastic objective.
    \begin{align*}
        \Exp[\nabbarzero F(\btheta)] = \nabla F(\btheta).  
    \end{align*}
\end{lemma}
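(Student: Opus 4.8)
The plan is to exploit the i.i.d.\ structure together with the score-function (likelihood-ratio) identity, the essential maneuver being a change of variables from the injected noise $\bw_{1:H}$ to the realized actions $\bu_{1:H}$. Since $\nabbarzero F(\btheta)$ is the empirical mean of $N$ i.i.d.\ copies of $\nabhatzero F_i(\btheta)$, linearity of expectation reduces the claim to showing that a single estimate is unbiased, i.e.\ $\Exp[\nabhatzero F_1(\btheta)] = \nabla F(\btheta)$. I would prove this for deterministic $\bx_1$; the general $\rho$ of \Cref{asm:dist} then follows by taking an outer expectation over $\bx_1 \sim \rho$ and invoking the finite-moment hypothesis for integrability.

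Next I would reparametrize. Writing $\bu_h = \pi(\bx_h,\btheta) + \bw_h$ and $\bx_{h+1} = \phi(\bx_h,\bu_h)$, the map $\bw_{1:H} \mapsto \bu_{1:H}$ (with $\bx_1$ fixed) is block lower-triangular with identity diagonal blocks, since $\bu_h$ depends on $\bw_{1:h}$ with $\partial \bu_h / \partial \bw_h = I$; hence it has unit Jacobian. Letting $q$ denote the density of $\calN(0,\sigma^2 I)$, the actions thus have joint density $\prod_{h=1}^H q(\bu_h - \pi(\bx_h,\btheta))$, where each $\bx_h$ is now regarded as a function of $\bu_{1:h-1}$ alone, \emph{independent of} $\btheta$. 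Crucially, in these coordinates the return $V_1$ becomes a fixed, $\btheta$-independent function of $\bu_{1:H}$ (once the actions are fixed, the states and hence the costs are pinned down by $\phi$ and the $c_h$), so all dependence on $\btheta$ has been transported into the smooth Gaussian density.

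With the $\btheta$-dependence isolated in the density, I would differentiate under the integral sign and apply the log-derivative trick. Using $\nabla_\btheta \log q(\bu_h - \pi(\bx_h,\btheta)) = \tfrac{1}{\sigma^2}\rmD_{\btheta}\pi(\bx_h,\btheta)^\intercal(\bu_h - \pi(\bx_h,\btheta))$ and summing over $h$, the trajectory score is exactly $\tfrac{1}{\sigma^2}\sum_{h}\rmD_{\btheta}\pi(\bx_h,\btheta)^\intercal \bw_h$. Reverting to noise coordinates yields $\nabla F(\btheta) = \Exp[V_1(\bx_1,\bw_{1:H},\btheta)\,\tfrac{1}{\sigma^2}\sum_h \rmD_{\btheta}\pi(\bx_h,\btheta)^\intercal\bw_h] = \Exp[\nabhatzero F_1(\btheta)]$, the desired identity.

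The main obstacle is rigorously justifying the interchange of $\nabla_\btheta$ and the integral, i.e.\ a dominated-convergence / Leibniz argument. This is where \Cref{asm:polynomial} and the Gaussian tails of $p$ enter: over the finite horizon $H$, compositions of the polynomial-growth maps $\phi$ and $c_h$ keep $V_1$ of polynomial growth in $\bu_{1:H}$, while $\pi \in C^1$ controls the score, so that a polynomial times a Gaussian (and its $\btheta$-gradient) is integrable and locally uniformly dominated in a neighborhood of $\btheta$. I would stress that this argument never differentiates $V_1$ in its arguments---the $\btheta$-gradient falls entirely on the density---which is precisely why the \zobg{} stays unbiased even when the costs or dynamics are discontinuous, in sharp contrast to the \fobg{}.
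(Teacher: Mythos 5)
Your proof takes a genuinely different route from the paper's. You marginalize out the (degenerate) state distribution and work directly with the joint density of the actions $\bu_{1:H}$, so that all $\btheta$-dependence sits inside a product of Gaussian densities and the log-derivative trick applies in one shot. The paper instead deliberately avoids forming any trajectory-level density: its formal proof (\Cref{prop:reinforce_rl}) decomposes $F(\btheta)-F(\btheta')$ via the performance-difference lemma into $H$ single-timestep perturbations, each of which is a ``benign separable'' function $\gout(\gin(\btheta)+\bw_h)$ of a single noise variable; unbiasedness is then proved by a translation change-of-variables in $\bw_h$ alone plus a second-order Taylor expansion of the density (\Cref{prop:reinforce}), followed by a telescoping step that uses $\Exp[\nabla\psi(\bw_h)]=0$ to convert $\sum_h V_h\cdot(\text{score}_h)$ into $V_1\cdot\sum_h(\text{score}_h)$. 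Your route is closer to the textbook REINFORCE derivation and avoids that last telescoping step entirely; the paper's route buys a cleaner handling of the measure-theoretic issues (only translations of a single Gaussian are ever needed) at the cost of more bookkeeping.

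The one step you should repair is your justification of the action density. You argue that $\bw_{1:H}\mapsto\bu_{1:H}$ is block lower-triangular with identity diagonal blocks and ``hence has unit Jacobian,'' but the off-diagonal blocks $\partial\bu_h/\partial\bw_{h'}$ for $h'<h$ involve derivatives of $\phi$ --- and the entire point of this lemma is that $\phi$ may be non-differentiable or even discontinuous, so that Jacobian matrix need not exist and the smooth change-of-variables theorem does not apply. The conclusion you want is nevertheless true: conditionally on $\bu_{1:h-1}$ (equivalently $\bw_{1:h-1}$), the state $\bx_h$ is a fixed measurable quantity and $\bu_h$ is a translate of a Gaussian, so the joint law of $\bu_{1:H}$ has density $\prod_h q(\bu_h-\pi(\bx_h,\btheta))$ by disintegration, for merely Borel-measurable $\phi$; state it that way. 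Your dominated-convergence argument for exchanging $\nabla_\btheta$ with the integral is the right idea, though to dominate the score locally uniformly in $\btheta$ one needs growth control of $\rmD_{\btheta}\pi(\bx,\cdot)$ in $\bx$ --- the appendix's ``benign planning problem'' hypotheses supply this, while the main-text statement of \Cref{asm:polynomial} alone is not quite enough, a gap your sketch shares with the main text rather than introduces.
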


In contrast, the FoBG requires strong continuity conditions in order to satisfy the requirement for unbiasedness. However, under Lipschitz continuity, it is indeed unbiased.

\begin{lemma}\label{lem:FOBG_bias} 
Under \Cref{asm:dist} and \Cref{asm:polynomial}, and  if $\phi(\cdot,\cdot)$ is locally Lipschitz and $c_h(\cdot,\cdot)$ is continuously differentiable, then $\nabbarone F(\btheta)$ is defined almost surely, and
\begin{align*}
    \Exp[\nabbarone F(\btheta)] = \nabla F(\btheta).
\end{align*}
\end{lemma}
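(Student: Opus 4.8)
The plan is to reduce the claim to an interchange of differentiation and expectation, namely
\begin{align*}
\nabla_{\btheta}\,\Exp_{\barbw\sim\bar{p}}\big[V_1(\bx_1,\barbw,\btheta)\big] = \Exp_{\barbw\sim\bar{p}}\big[\nabla_{\btheta} V_1(\bx_1,\barbw,\btheta)\big].
\end{align*}
Since the $N$ samples are i.i.d.\ and the sample mean is linear, $\Exp[\nabbarone F(\btheta)] = \Exp_{\barbw}[\nabla_{\btheta}V_1(\bx_1,\barbw,\btheta)]$, and because $F(\btheta) = \Exp_{\barbw}[V_1(\bx_1,\barbw,\btheta)]$, establishing this exchange simultaneously shows that $\nabbarone F$ is well defined almost surely and that its expectation equals $\nabla F(\btheta)$. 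The standard tool is the Lebesgue differentiation-under-the-integral-sign theorem, which requires two ingredients on a neighborhood of $\btheta$: (i) for $\bar{p}$-almost every $\barbw$, the map $\btheta' \mapsto V_1(\bx_1,\barbw,\btheta')$ is differentiable; and (ii) an integrable function dominating the relevant difference quotients (equivalently, the gradient) uniformly over that neighborhood.

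For ingredient (i) --- which is precisely the ``defined almost surely'' assertion --- I would argue as follows. Because $\pi$ is continuously differentiable and each $c_h$ is continuously differentiable, the only source of non-smoothness in the composition defining $V_1$ is the dynamics $\phi$. By the chain rule, $\btheta' \mapsto V_1$ is differentiable at $(\btheta,\barbw)$ whenever $\phi$ is Fr\'echet differentiable at every visited transition point $(\bx_h,\bu_h)$ for $h = 1,\dots,H-1$. Since $\phi$ is locally Lipschitz, Rademacher's theorem guarantees that its non-differentiability set $N_\phi \subseteq \R^n\times\R^m$ is Lebesgue-null. It then suffices to show that the trajectory almost surely avoids $N_\phi$, i.e.\ $\Pr[(\bx_h,\bu_h)\in N_\phi] = 0$ for each $h$. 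Here the additive Gaussian structure is essential: conditioning on $\bw_{1:h-1}$ fixes $\bx_h$, after which $\bu_h = \pi(\bx_h,\btheta) + \bw_h$ is absolutely continuous on $\R^m$ (a shifted Gaussian), so a Fubini slicing argument applied to the null set $N_\phi$ forces the conditional probability of landing in its $\bx_h$-slice to vanish; integrating over the conditioning variables gives the claim.

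The main obstacle is exactly this slicing step: a Lebesgue-null set in $\R^n\times\R^m$ may have non-null slices over an exceptional (null) set of base points $\bx$, so the argument requires that the law of $\bx_h$ not charge that exceptional set. I would discharge this by induction on $h$, tracking how the noise injected at each stage propagates enough absolute continuity down the trajectory; note that for the pure open-loop parametrization the difficulty evaporates, since there $V_1$ depends on $\btheta$ and $\barbw$ only through $\btheta+\barbw$, reducing (i) to the a.e.\ differentiability of a single locally Lipschitz function. For ingredient (ii), the chain rule expresses $\nabla_\btheta V_1$ as sums of products of $\rmD\pi$, $\nabla c_h$, and the a.e.\ Jacobians of $\phi$; combining the polynomial-growth hypotheses of \Cref{asm:polynomial} and the local Lipschitz bounds with the fact that the Gaussian $\bar{p}$ has finite moments of all orders yields an integrable dominating function, uniform on a small ball around $\btheta$ --- a technical but routine point. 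With (i) and (ii) in hand, the mean-value form of the dominated convergence theorem justifies the exchange, and the resulting right-hand side is $\nabla F(\btheta)$ by definition of $F$, completing the proof.
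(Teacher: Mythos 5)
Your high-level strategy (almost-sure differentiability of $\btheta' \mapsto V_1(\bx_1,\barbw,\btheta')$ plus a dominated-convergence exchange) matches the paper's, and your ingredient (ii) is handled essentially as the paper handles it. The gap is in ingredient (i), at precisely the step you flag as ``the main obstacle,'' and the repair you propose cannot work. You apply Rademacher's theorem to $\phi$ alone, obtaining a Lebesgue-null non-differentiability set $N_\phi \subset \R^n\times\R^m$, and then try to show the trajectory avoids $N_\phi$ by propagating absolute continuity of the law of $\bx_h$ inductively. But the pushforward of an absolutely continuous law under a locally Lipschitz map need not be absolutely continuous: such dynamics can collapse a positive-measure set of inputs onto a single state, which is exactly the generic situation for the saturation/contact dynamics this lemma is meant to cover. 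Concretely, take $n=m=1$, $\phi(x,u) = \min(\max(x+u,0),1) + \max(x-1,0)\sin(u)$, $\bx_1 = 0$, and $\pi(0,\btheta)=2$. Then $\bx_2 = \min(\max(\bu_1,0),1)$ equals $1$ with probability greater than $1/2$, while the slice of $N_\phi$ over $x=1$ is co-null in $\R$ (the term $\max(x-1,0)\sin(u)$ fails to be differentiable in $x$ at $x=1$ whenever $\sin(u)\ne 0$). Since $\bu_2$ is absolutely continuous given $\bx_2=1$, the pair $(\bx_2,\bu_2)$ lands in $N_\phi$ with positive probability, so your intermediate claim is false and no induction on absolute continuity of the state marginals can rescue it. Note that the lemma itself remains true for this $\phi$: on the event $\bx_2=1$ the map $\bw_1\mapsto\bx_2$ is locally constant, so the noise-to-value composite is still differentiable even though the chain rule through $\phi$ is unavailable at the visited point. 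This shows that ``every visited transition avoids $N_\phi$'' is the wrong sufficient condition to aim for.

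The paper's proof (\Cref{prop:lipschitz_rl}) sidesteps the issue by applying Rademacher's theorem not to $\phi$ but to the entire composite $\bw_{1:H}\mapsto \Phi_h(\bx_1,\bw_{1:h-1},\btheta)$, which is locally Lipschitz as a composition of locally Lipschitz maps; this yields differentiability for Lebesgue-almost-every noise vector, and the only law that must be absolutely continuous is that of $\bw_{1:H}$ itself, which is genuinely Gaussian. A separate reparametrization step (\Cref{claim:Phi_diff}) then converts differentiability in the noise into differentiability in $\btheta$, using the additive structure $\bu_h = \pi_h(\bx_h,\btheta)+\bw_h$ to absorb a perturbation of $\btheta$ into a differentiable perturbation of the noise sequence. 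Your open-loop remark is exactly the policy-independent special case of this argument; to repair your proof in the feedback case you should run the same composite-plus-reparametrization argument rather than trying to control the marginal laws of the intermediate states.
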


The proofs and more rigorous statements of both lemmas are provided in \Cref{app:formal}. Notice that \Cref{lem:ZOBG_bias} permits $V_h$ to have discontinuities (via discontinuities of $c_h$ and $\phi$), whereas \Cref{lem:FOBG_bias} does not.

\paragraph{Bias of \fobg{} under discontinuities.} The \fobg{} can fail when applied to discontinuous landscapes. We illustrate a simple case of biasedness through a counterexample. 
\begin{example}[\textbf{Heaviside}]\label{exmp:heavisied}\cite{teg,suh2021bundled} Consider the Heaviside function,
    \begin{align*}
        f(\btheta,\bw) = H(\btheta+\bw), \quad H(t) = \begin{cases}
         1 & t \geq 0 \\
         0 & t < 0 
        \end{cases},
    \end{align*}
    whose stochastic objective becomes the error function
    \begin{align*}
        F(\btheta) = \Exp_{\bw}[H(\btheta + \bw)] & = \erf(-\btheta; \sigma^2 ),
    \end{align*}
    where $\erf(t; \sigma^2 ) := \int_{t}^{\infty} \frac{1}{\sqrt{2\pi}\sigma}e^{-x^2/\sigma^2}\rmd x$ is the Gaussian tail integral. Defining the gradient of the Monte-Carlo objective $H(\btheta + \bw)$ requires subtlety. It is common in physics to define $\nabla_{\btheta} H(\btheta + \bw) = \dirac(\btheta+\bw)$ as a dirac-delta function, where integration is interpreted so that the fundamental theorem of calculus holds. This is \emph{irreconcilable} with using \emph{expectation} to define the integral, which presupposes that the law of large numbers hold. Indeed, since $\nabla_{\btheta} H(\btheta + \bw) = 0$ for all $\btheta \ne - \bw$,
    we have $\Exp_{\bw_i} \delta(\btheta + \bw_i) = 0$. Hence, the \fobg{} is biased, because the gradient of the stochastic objective at any $\btheta$ is non-zero: $\nabla_{\btheta} \text{erf}(-\btheta;\sigma^2) = \frac{1}{\sqrt{2\pi}\sigma}\exp(-(\btheta-\bw)/2\sigma^2)\ne 0$.
    
    It is worth noting that the empirical variance of the \fobg{} estimator in this example is zero, since all the samples are identically zero. On the other hand, the \zobg{} escapes this problem and provides an unbiased estimate, since it always takes finite intervals that include the integral of the delta. 
    \begin{figure}[thpb]
	\centering\includegraphics[width = 0.48\textwidth]{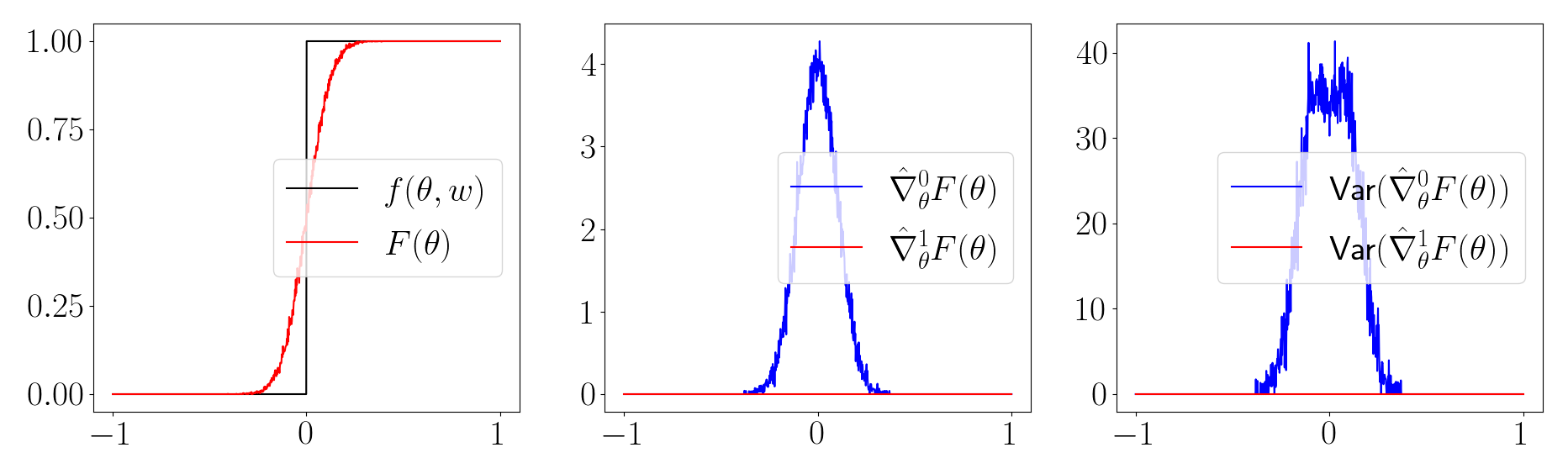}
    \caption{From left: heaviside objective $f(\btheta,\bw)$ and stochastic objective $F(\btheta)$, empirical values of the gradient estimates, and their empirical variance.}
	\label{fig:heaviside}
	\vskip -0.2 true in
    \end{figure}
\end{example}

\subsection{The ``Empirical bias'' phenomenon}\label{sec:empiricalbias}
One might argue that \emph{strict} discontinuity is simply an artifact of modeling choice in simulators; indeed, many simulators approximate discontinuous dynamics as a limit of continuous ones with growing Lipschitz constant \cite{add,Elandt2019APF}. In this section, we explain how this can lead to a phenomenon we call \emph{empirical bias}, where the \fobg{} appears to have low empirical variance, but is still highly inaccurate; i.e. it ``looks'' biased when a  finite number of samples are used. Through this phenomenon, we claim that performance degradation of  first-order gradient estimates do not require strict discontinuity, but is also present in continuous, yet \emph{stiff} approximations of discontinuities. \footnote{We say that a continuous function $f$ is stiff at $x$ if the magnitude of the gradient $\|\nabla f(x)\|$ is high.}

\begin{definition}[Empirical bias] Let $\bz$ be a vector-valued random variable with $\Exp[\|\bz\|] < \infty$. We say $\bz$ has $(\beta,\Delta,S)$-empirical bias if there is a random event $\cE$ such that $\Pr[\cE] \ge 1-\beta$,  and $\|\Exp[\bz \mid \cE] - \Exp[\bz]\| \ge \Delta$, but $\|\bz - \Exp[\bz \mid \cE]\| \le S$ almost surely on $\cE$.
\end{definition}
A paradigmatic example of empirical bias is a random scalar $\bz$ which takes the value $0$ with probability $1-\beta$, and $\frac{1}{\beta}$ with probability $\beta$. Setting $\cE = \{\bz = 0\}$, we see $\Exp[\bz] = 1$, $\Exp[\bz \mid \cE] = 0$, and so $\bz$ satisfies $(\beta,1,0)$-empirical bias. Note that $\Variance[\bz] = 1/\beta-1$; in fact, small-$\beta$ empirical bias implies large variance more generally.

\begin{lemma}\label{lem:variance_empirical_bias} Suppose $\bz$ has $(\beta,\Delta,S)$-empirical bias. Then $\Variance[\bz] \ge \frac{\Delta_0^2}{\beta}$, where $\Delta_0 := \max\{0,(1-\beta)\Delta - \beta \|\Exp[\bz]\|\}$. 
\end{lemma}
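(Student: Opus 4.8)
The plan is to prove the slightly stronger bound $\Variance[\bz] \ge \tfrac{(1-\beta)^2\Delta^2}{\beta}$, from which the stated inequality follows at once because $\Delta_0 \le (1-\beta)\Delta$. Write $p = \Pr[\cE]$, $q = \Pr[\cE^c] = 1-p$, and denote the global, on-$\cE$, and off-$\cE$ means by $\mu = \Exp[\bz]$, $\mu_\cE = \Exp[\bz\mid\cE]$, and $\mu_{\cE^c} = \Exp[\bz\mid\cE^c]$. (If $q=0$ then $\mu_\cE = \mu$, forcing $\Delta=0$ and the claim is trivial; likewise if $\Delta_0=0$ there is nothing to prove, so assume $q>0$.)

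The key observation is that the rare event $\cE^c$ is forced to carry a large deviation. Total expectation gives $\mu = p\mu_\cE + q\mu_{\cE^c}$, which rearranges to the exact identity
\[
\mu_{\cE^c} - \mu = \tfrac{p}{q}\,(\mu - \mu_\cE), \qquad\text{hence}\qquad \|\mu_{\cE^c}-\mu\| = \tfrac{p}{q}\,\|\mu_\cE - \mu\| \ge \tfrac{p}{q}\,\Delta .
\]
Intuitively: since the conditional mean on the likely event $\cE$ is pulled a distance $\ge \Delta$ away from the global mean $\mu$, the unlikely event (probability $q \le \beta$) must compensate by placing its conditional mean a distance $\approx \Delta/q$ from $\mu$; this $1/q$ amplification is exactly what produces the $1/\beta$ blow-up in the bound.

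Next I would convert this displaced conditional mean into a variance lower bound by retaining only the contribution of $\cE^c$ and applying conditional Jensen:
\[
\Variance[\bz] = \Exp\|\bz - \mu\|^2 \ge \Pr[\cE^c]\,\Exp\!\big[\|\bz-\mu\|^2 \,\big|\, \cE^c\big] \ge q\,\big\|\Exp[\bz - \mu \mid \cE^c]\big\|^2 = q\,\|\mu_{\cE^c}-\mu\|^2 .
\]
Substituting the identity above gives $\Variance[\bz] \ge \tfrac{p^2}{q}\|\mu_\cE-\mu\|^2 \ge \tfrac{p^2\Delta^2}{q}$, and since $p \ge 1-\beta$ and $q \le \beta$ this is at least $\tfrac{(1-\beta)^2\Delta^2}{\beta} \ge \tfrac{\Delta_0^2}{\beta}$, completing the argument.

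I do not expect a serious obstacle; the proof is short. The only points needing care are (i) using the correct direction of the two inequalities, so that dropping the $\cE$ contribution and passing the conditional expectation inside $\|\cdot\|^2$ both \emph{weaken} the bound, and (ii) keeping every deviation centered at the \emph{global} mean $\mu$: the centered quantity $\Exp[\bz-\mu\mid\cE^c]$ equals $\tfrac{p}{q}(\mu-\mu_\cE)$ with no stray $\|\mu\|$ term, so the cleanest argument in fact yields $(1-\beta)\Delta$ rather than $\Delta_0$, and the $\beta\|\Exp[\bz]\|$ subtracted in $\Delta_0$ is merely a harmless slackening. It is worth remarking that neither the concentration radius $S$ nor the near-constancy of $\bz$ on $\cE$ is used anywhere: those govern the \emph{empirical} (finite-sample) variance, whereas this lemma lower-bounds the \emph{true} variance using only $\beta$ and $\Delta$.
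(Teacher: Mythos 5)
Your proof is correct, and it in fact establishes the strictly stronger bound $\Variance[\bz] \ge (1-\beta)^2\Delta^2/\beta$. The underlying mechanism is the same as in the paper's proof: restrict attention to the rare event $\cE^c$, observe that its conditional mean must be displaced from the global mean by roughly $\Delta/\beta$, and apply conditional Jensen to convert that displacement into a variance lower bound. The difference is where you center. The paper manipulates the uncentered quantities $\Exp[\bz\,\I\{\cE^c\}]$ and $\Exp[\|\bz\|^2]$, which is what forces the correction term $-\beta\|\Exp[\bz]\|$ into $\Delta_0$ and means its final chain of inequalities lower-bounds the raw second moment $\Exp[\|\bz\|^2]$ rather than the centered one $\Exp[\|\bz-\Exp[\bz]\|^2]$ appearing in the definition of $\Variance$. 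By keeping every deviation centered at $\mu=\Exp[\bz]$ you get the exact identity $\mu_{\cE^c}-\mu = \tfrac{p}{q}(\mu-\mu_{\cE})$ with no stray mean term, and the argument lands directly on the variance with the cleaner constant $(1-\beta)^2\Delta^2/\beta \ge \Delta_0^2/\beta$. Your edge-case handling ($q=0$ forces $\Delta=0$) and your observation that $S$ plays no role in this lemma are both accurate.
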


\begin{figure}[b]
\vskip -0.2 true in
\centering\includegraphics[width = 0.5\textwidth]{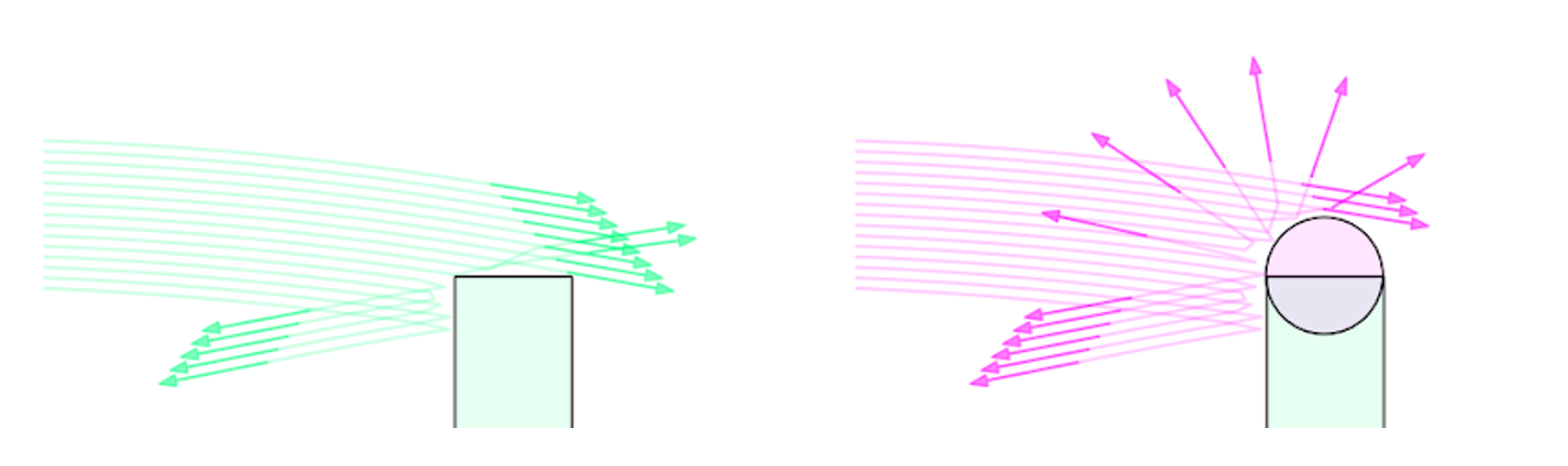}
\caption{Left: More detailed example of ball hitting the wall in \Cref{fig:systems}.B. Left: The green trajectories hit a rectangular wall, displaying discontinuities. Right: the pink trajectories collide with the dome on top, and show continuous but stiff behavior.}
\label{fig:geometry}
\end{figure}

Empirical bias naturally arises for discontinuities or stiff continuous approximations. We give two examples of common discontinuities that arise in differentiable simulation, that permit continuous approximations. 
\begin{example}[\textbf{Coulomb friction}]\normalfont\label{exmp:coulomb}
    The Coulomb model of friction is discontinuous in the relative tangential velocity between two bodies. In many simulators \cite{add,tamsi}, it is common to consider a continuous approximation instead. We idealize such approximations through a piecewise linear relaxation of the Heaviside that is continuous, parametrized by the width of the middle linear region $\nu$ (which corresponds to \emph{slip tolerance}).
    \begin{align*}
        \bar{H}_\nu(t) = \begin{cases}
         2t/\nu & \text{ if } |t|\leq \nu/2 \\
         H(t) & \text{ else}
        \end{cases}.
    \end{align*}
    In practice, lower values of $\nu$ lead to more realistic behavior in simulation \cite{drake}, but this has adverse effects for empirical bias. Considering $f_{\nu}(\btheta,\bw) = \bar{H}_\nu(\btheta + \bw)$, we have $F_{\nu}(\btheta) = \Exp_{\bw}[\bar{H}_\nu(\btheta + \bw)] := \erf(\nu/2-\theta;\sigma^2)$. In particular, setting $c_{\sigma} := \frac{1}{\sqrt{2\pi} \sigma}$, then at $\btheta = \nu/2$, $\nabla F_{\nu}(\btheta) = c_{\sigma}$, whereas, with probability at least $c_{\sigma} \nu$,  $\nabla f_{\nu}(\btheta,\bw) = 0$. Hence, the \fobg{} has $(c_{\sigma}\nu,c_{\sigma},0)$ empirical bias, and its variance scales with $1/\nu$ as $\nu \to 0$. The limiting $\nu = 0$ case, corresponding to the Coulomb model, is the Heaviside from \Cref{exmp:heavisied}, where the limit of high empirical bias, as well as variance, becomes biased in expectation (but, surprisingly, zero variance!). We empirically illustrate this effect in Figure \ref{fig:friction}. We also note that more complicated models of friction (e.g. that incorporates the Stribeck effect \cite{stribeck}) would suffer similar problems.
    
    \begin{figure}[t]
    \centering\includegraphics[width = 0.5\textwidth]{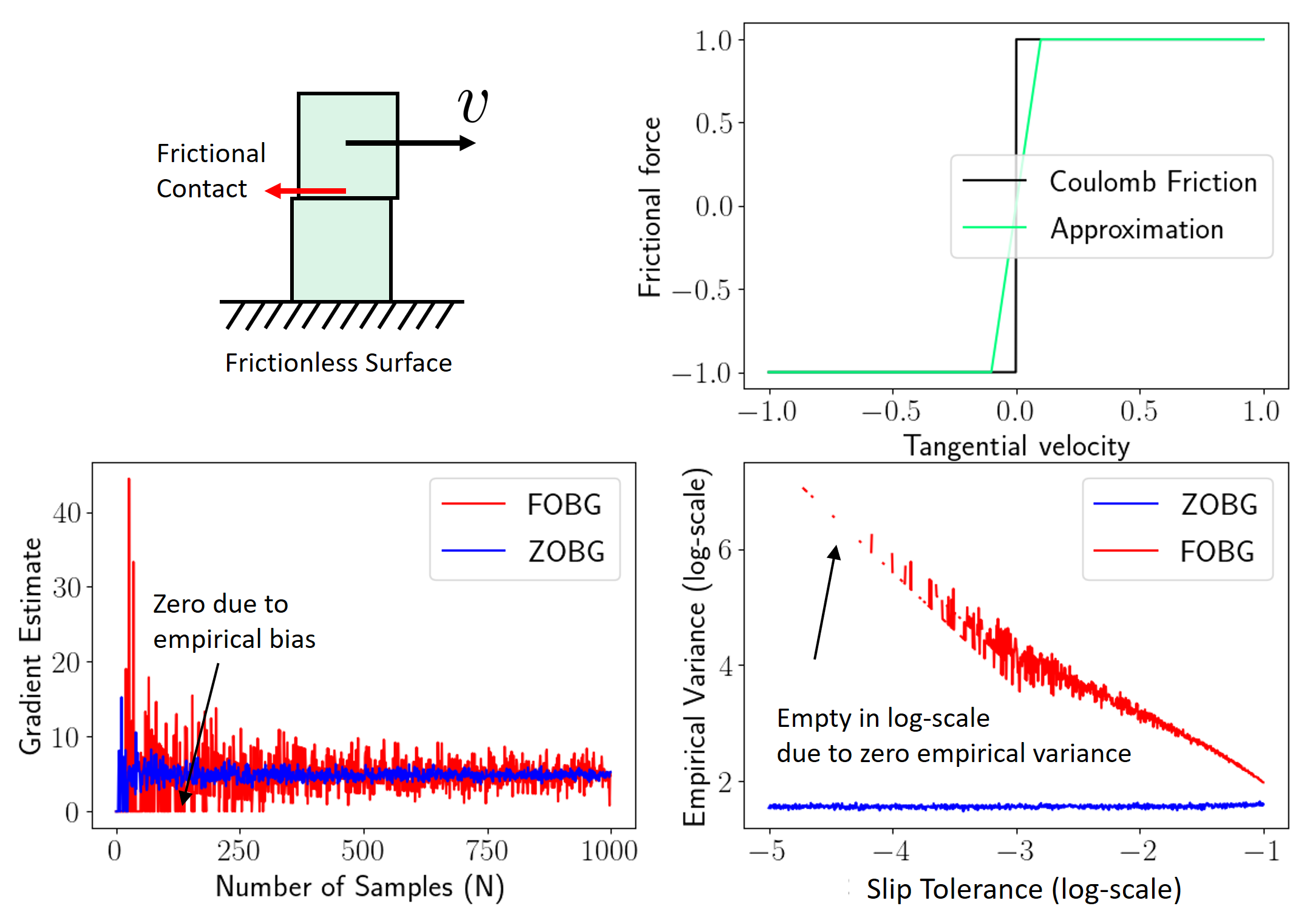}
    \caption{Top column: illustration of the physical system and the relaxation of Coulomb friction. Bottom column: the values of estimators and their empirical variances depending on number of samples and slip tolerance. Values of \fobg{} are zero in low-sample regimes due to empirical bias. As $\nu\rightarrow 0$, the empirical variance of \fobg{} goes to zero, which shows as empty in the log-scale. Expected variance, however, blows up as it scales with $1/\nu$.}
    \label{fig:friction}
    \end{figure}
\end{example}

\begin{figure}[b]
\centering\includegraphics[width = 0.5\textwidth]{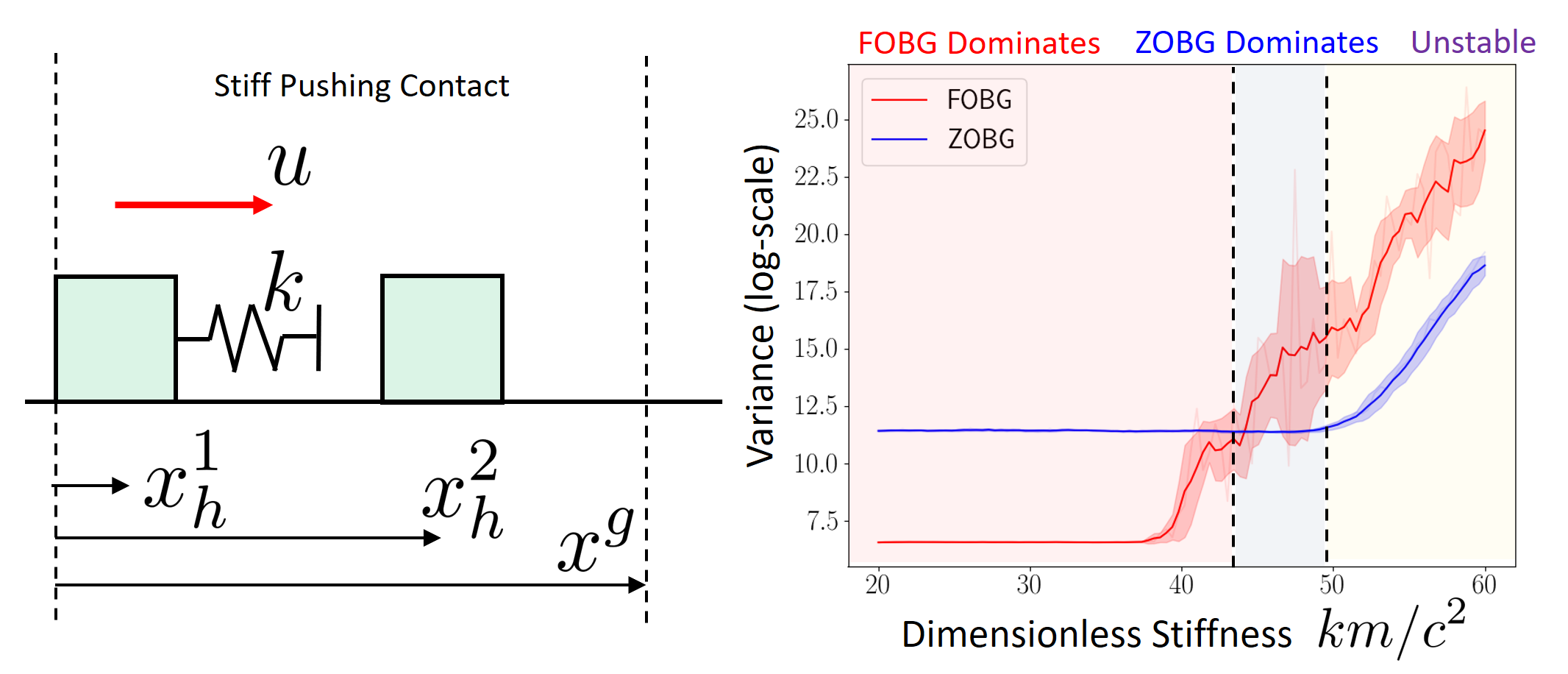}
\caption{The variance of the gradient of $V_1$, with running cost $c_h=\|\bx_h^2-\bx^g\|^2$, with respect to input trajectory as spring constant $k$ increases. Mass $m$ and damping coefficient $c$ are fixed.}
\label{fig:stiff}
\end{figure}

\begin{example}(\textbf{Geometric Discontinuity}). Discontinuity also comes from surface normals. We show this in \Cref{fig:geometry}, where balls that collide with a rectangular geometry create discontinuities. It is possible to make a continuous relaxation \cite{Elandt2019APF} by considering smoother geometry, depicted by the addition of the dome in \Cref{fig:geometry}. While this makes \fobg{} no longer biased asymptotically, the stiffness of the relaxation results in high empirical bias. 
\end{example}
\vskip -0.2 true in
\subsection{High variance first-order estimates}
Even in the absence of empirical bias, we present other cases in which \fobg{} suffers simply due to high variance.

\textbf{Scenario 1: Persistent stiffness.} When the dynamics are \emph{stiff} \footnote{We say that a discrete-time dynamical system is stiff if the mapping function $\phi$ is stiff. Note that when $\phi$ contains forces from spring-like components, $\|\nabla \phi\|$ scales with the spring constant. Thus, the presence of a stiff spring leads to a stiff system.}, such as contact models with stiff spring approximations \cite{huntcrossley}, the high norm of the gradient can contribute to high variance of the \fobg{}.

\begin{example}(\textbf{Pushing with stiff contact}).
We demonstrate this phenomenon through a simple 1D pushing example in \Cref{fig:stiff}, where the \zobg{} has lower variance than the \fobg{} as stiffness increases, until numerical semi-implicit integration becomes unstable under a fixed timestep.

\end{example}

In practice, lowering the timestep can alleviate the issue at the cost of more computation time. Less stiff formulations of contact dynamics \cite{stewarttrinkle,brianmirtich} also addresses this problem effectively.

\textbf{Scenario 2: Chaos.} As noted in \cite{metz2021gradients}, even if the gradient of the dynamics is small at every $h$, their compounding product can cause $\|\nabla_{\btheta} V_1\|$ to be large if the system is chaotic. Yet, in expectation, the gradient of the stochastic objective $\nabla F = \nabla \Exp[V_1]$ can be benign and well-behaved \cite{chaos}.

\begin{example}(\textbf{Chaos of double pendulum}).
We demonstrate this in \Cref{fig:pendulum} for a classic chaotic system of the double pendulum. As the horizon of the trajectory increases, the variance of \fobg{} becomes higher than that of \zobg{}.

\begin{figure}[t]
\centering\includegraphics[width = 0.4\textwidth]{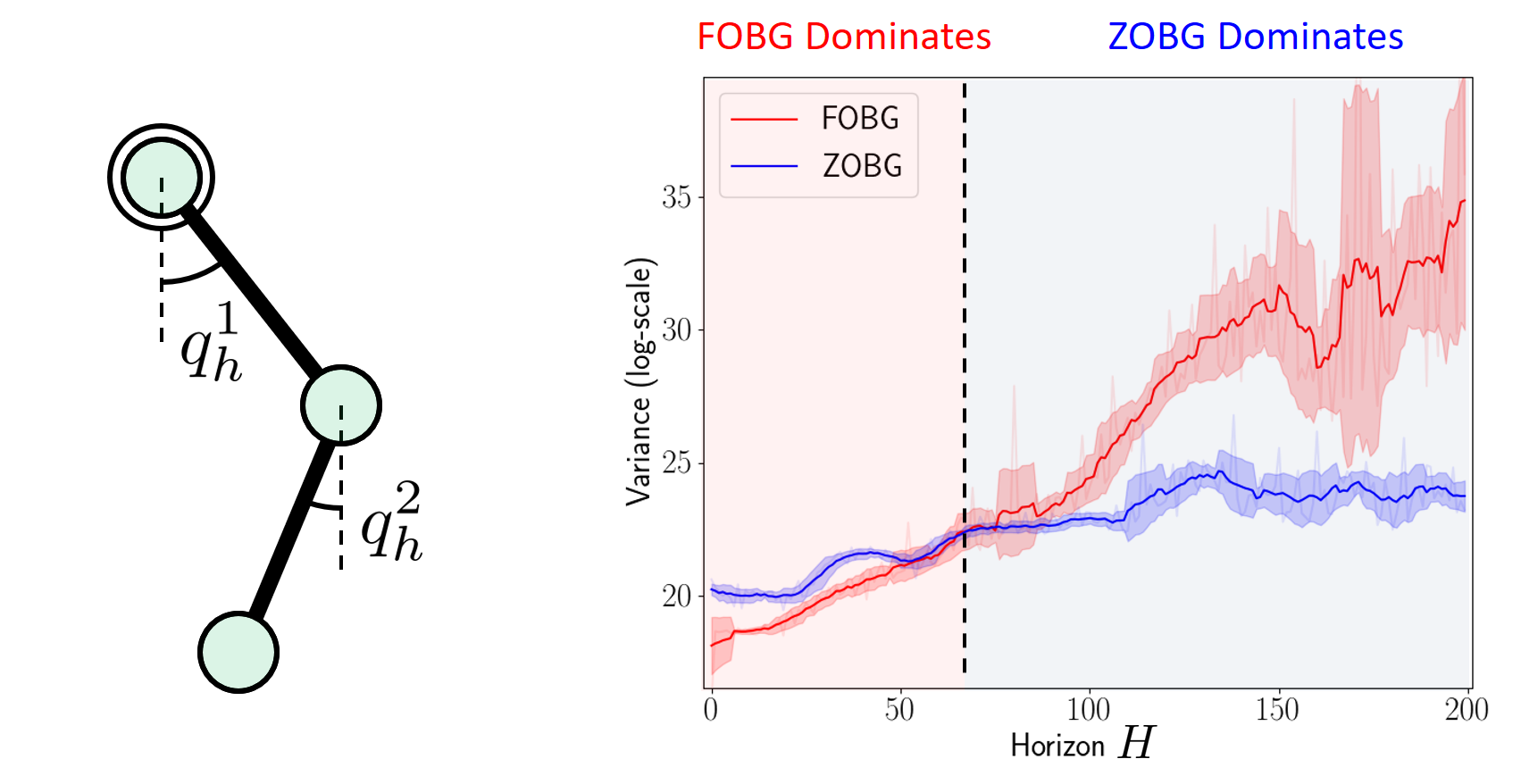}
\caption{Variance of the gradient of the terminal cost $\|q_H - q^g\|^2$ with respect to the initial position $q_1$. As horizon grows through a chaotic system, the \zobg{} dominates the \fobg{}.}
\label{fig:pendulum}
\end{figure}
\end{example} 
\paragraph{Comparison to \zobg{}.}
Compared to the pitfalls of \fobg{}, the \zobg{} variance can be bounded as follows.
\begin{lemma}\label{lem:var_zero_order_one} If for all $\bx$ and $\bar{\bw}$, $|V_1(\bx,\barbw,\btheta)| \le B_V$ and $\|\rmD_{\btheta} \pi(\bx,\btheta)\|_{\mathrm{op}} \le B_{\pi}$, then 
\begin{align*}
\Variance[\nabbarzero F(\btheta)] = \frac{1}{N}\Variance[\nabhatzero F_i(\btheta)] \le \frac{B_V^2B_{\pi}^2}{N}\cdot \frac{Hn}{\sigma^2}.
\end{align*}
\end{lemma}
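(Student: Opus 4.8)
The plan is to exploit the i.i.d.\ structure across samples, reduce the variance to a second-moment bound, and then control that second moment using the fact that each noise term is independent of, and mean-zero relative to, the noise preceding it.

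First I would note that the $\nabhatzero F_i(\btheta)$ are i.i.d.\ over $i$, so $\Variance[\nabbarzero F(\btheta)] = \frac{1}{N}\Variance[\nabhatzero F_i(\btheta)]$, which is the claimed equality. It then suffices to bound a single term by its second moment, $\Variance[\nabhatzero F_i(\btheta)] \le \Exp[\|\nabhatzero F_i(\btheta)\|^2]$, since subtracting the mean only decreases the quantity. Writing $M_h := \rmD_{\btheta}\pi(\bx_h,\btheta)^\intercal$ and using $|V_1| \le B_V$ to pull out the scalar, this reduces the problem to
\[
\Exp[\|\nabhatzero F_i(\btheta)\|^2] \le \frac{B_V^2}{\sigma^4}\,\Exp\!\left[\Big\|\textstyle\sum_{h=1}^H M_h \bw_h\Big\|^2\right].
\]

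The crux is controlling the expected squared norm of the sum. The terms $M_h \bw_h$ are \emph{not} independent, because $\bx_h$, and hence $M_h$, depends on the earlier noise $\bw_{1:h-1}$ through the dynamics. The key observation is that $M_h$ is measurable with respect to $\bw_{1:h-1}$, while $\bw_h$ is independent of $\bw_{1:h-1}$ and mean-zero. Expanding the square, for any $h < h'$ I would condition on $\bw_{1:h'-1}$: this fixes $M_h,\bw_h,M_{h'}$ and leaves $\bw_{h'}$ with conditional mean zero, so $\Exp[\langle M_h\bw_h, M_{h'}\bw_{h'}\rangle]=0$ and all cross terms vanish, giving $\Exp[\|\sum_h M_h\bw_h\|^2] = \sum_{h=1}^H \Exp[\|M_h\bw_h\|^2]$. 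For each diagonal term, conditioning on $\bw_{1:h-1}$ renders $M_h$ constant, and since $\bw_h \sim \calN(0,\sigma^2 I)$ I use $\Exp[\|M\bw\|^2] = \sigma^2\,\mathrm{tr}(M^\intercal M) = \sigma^2\|M\|_{\fro}^2$.

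Finally I would convert the Frobenius norm to the operator norm via $\|M_h\|_{\fro}^2 \le n\,\|M_h\|_{\op}^2 \le n B_\pi^2$, since the noise is $n$-dimensional so $M_h$ has at most $n$ nonzero singular values, each bounded by $B_\pi$. Summing over the $H$ timesteps gives $\Exp[\|\sum_h M_h\bw_h\|^2] \le H\sigma^2 n B_\pi^2$; substituting back yields $\Variance[\nabhatzero F_i(\btheta)] \le B_V^2 B_\pi^2 H n/\sigma^2$, and dividing by $N$ gives the stated bound.

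The main obstacle is the cross-term cancellation: unlike the open-loop one-step case, the Jacobians $M_h$ are random and correlated with the noise history, so the argument genuinely needs the filtration / martingale-difference structure rather than naive independence. The only other point requiring care is matching the dimension factor $n$, which comes from the rank bound relating $\|M_h\|_{\fro}$ and $\|M_h\|_{\op}$ under the $n$-dimensional Gaussian noise.
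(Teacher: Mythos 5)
Your proposal is correct and follows essentially the same route as the paper's proof in \Cref{sec:proof:lem_variance_zero_order}: bound the variance by the second moment, pull out $|V_1|\le B_V$ \emph{before} expanding the square (which is essential, since $V_1$ depends on all of $\bw_{1:H}$), kill the cross terms using that $\rmD_{\btheta}\pi(\bx_{h},\btheta)$ is a function of $\bw_{1:h-1}$ while $\bw_{h'}$ ($h'>h$) is independent and mean-zero, and bound the diagonal terms by $nB_\pi^2\sigma^2$. The only cosmetic difference is in the diagonal step, where you use the exact Gaussian identity $\Exp\|M\bw\|^2=\sigma^2\|M\|_{\fro}^2$ followed by the rank bound $\|M\|_{\fro}^2\le n\|M\|_{\op}^2$, whereas the paper bounds $\|M\bw\|^2\le\|M\|_{\op}^2\|\bw\|^2$ pointwise and then uses $\Exp\|\bw_h\|^2=n\sigma^2$; both give the identical constant.
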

We refer to \Cref{sec:proof:lem_variance_zero_order} for proof. \Cref{lem:var_zero_order_one} is intended to provide a qualitative understanding of the zeroth-order variance: it scales with the horizon-dimension product $Hn$, but \emph{not} the scale of the derivatives. On the other hand, the variance of \fobg{} does; when  $\frac{Hn}{\sigma^2} \gg \Variance[\nabhatone F(\btheta)] = \Variance[\nabla_{\btheta} V(\bx_1,\barbw,\btheta)]$, the \zobg{} has higher variance.

\begin{figure*}[t]
\centering\includegraphics[width = 1.0\textwidth]{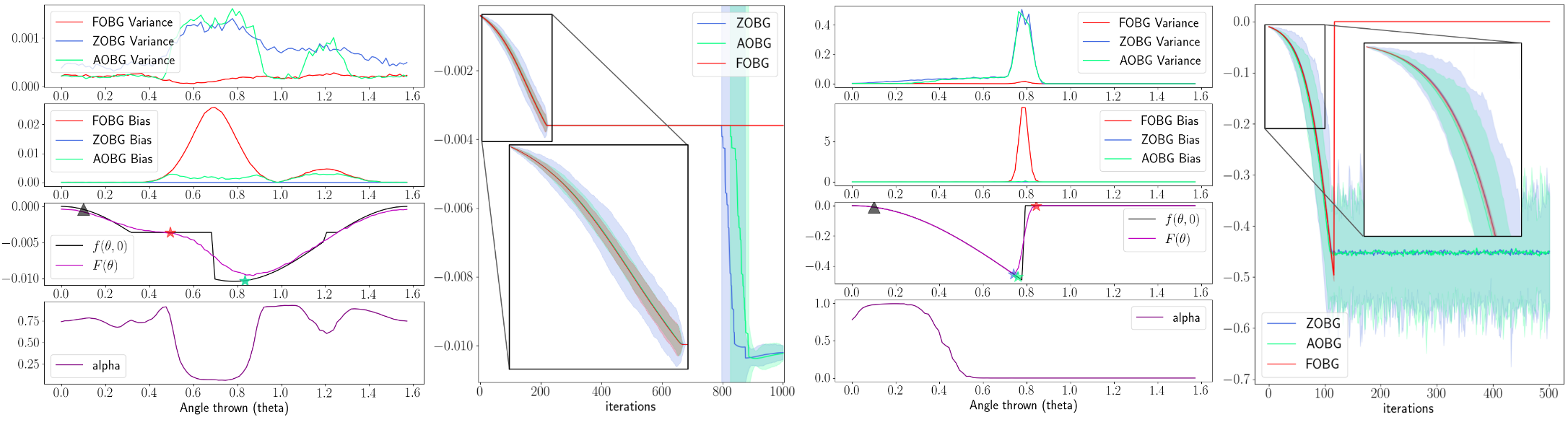}
\vskip -0.1 true in
\caption{First Column: \textbf{Ball with wall} example. In the third row, the triangle is the initial point, and red/blue/green stars are the optimum achieved by \fobg{}, \zobg{}, and AoBG respectively (blue and green stars overlap). Second column: Iteration vs. Cost plot of different gradients. Right columns: Same plot repeated for the \textbf{Momentum Transfer} example. Standard deviation plotted 10 fold for visualization.}
\label{fig:simplesystems}
\end{figure*}
\section{$\alpha$-order Gradient Estimator}
\vskip -0.1 true in
\newcommand{\aobg}{\namefont{AoBG}}
\newcommand{\aoeg}{\namefont{AoEG}}

\newcommand{\nabbaralpha}{\nabbar^{[\alpha]}}
\newcommand{\nabalpha}{\nabla^{[\alpha]}}

\newcommand{\nabk}{\nabla^{[k]}}
Previous examples give us insight on which landscapes are better fit for first-order estimates of policy gradient, and which are better fit for zeroth-order ones. As shown in \Cref{fig:simplesystems}, even on a single policy optimization objective, it is best to adaptively switch between the first and zeroth-order estimators depending on the local characteristics of the landscape. In this section, we propose a strategy to achieve this adaptively, interpolating between the two estimators to reap the benefits of both approaches simultaneously.  
\begin{definition} Given $\alpha\in[0,1]$, 
    we define the alpha-order batched gradient (\aobg) as:
    \begin{align*}
        \nabbaralpha F(\btheta) & = \alpha \nabbarone F(\btheta) + (1-\alpha) \nabbarzero F(\btheta).  
    \end{align*}
\end{definition}
When interpolating, we use independent trajectories to generate $\nabbarone F(\btheta)$  and $\nabbarzero F(\btheta)$ (see \Cref{app:interpolatedbiasvariance}). We consider strategies for selecting $\alpha$ in a \emph{local fashion}, as a function of the observed sample, as detailed below. 

\subsection{A robust interpolation protocol}

A potential approach might be to select $\alpha$ based on achieving minimum variance \cite{parmas,pathologies}, considering empirical variance as an estimate. However, in light of the \emph{empirical bias} phenomenon detailed in \Cref{sec:bias_variance} (or even actual bias in the presence of discontinuities), we see that the empirical variance is unreliable, and can lead to inaccurate estimates for our setting. For this reason, we consider an additional criterion of \emph{uniform accuracy}:

\begin{definition}[Accuracy]\label{def:robust}
    $\alpha$ is $(\gamma,\delta)$-accurate if the bound on the \emph{error} of AoBG is satisfied with probability $\delta$:
    \begin{align}\label{eq:robustness}
        \| \nabbaralpha F(\btheta) - \nabla F(\btheta)\| \le \gamma.
    \end{align}
\end{definition}

To remedy the limitations of considering empirical variance in isolation, we propose an interpolation protocol that can satisfy an accuracy guarantee, while still attempting to minimize the variance.
\begin{equation}
\begin{aligned}
        \min_{\alpha \in [0,1]} & \quad \alpha^2 \sighatsq_1 + (1-\alpha)^2 \sighatsq_0 \\
        \text{s.t.} & \quad \epsilon + \alpha \underbrace{\|\nabbarone F - \nabbarzero F\|}_B \leq \gamma.
    \label{eq:tractableform}
\end{aligned}
\end{equation}
We explain the terms in \Cref{eq:tractableform} below in detail.

\textbf{Objective.} Since we interpolate the \fobg{} and \zobg{} using independent samples, $\alpha^2 \sighatsq_1 + (1-\alpha)^2 \sighatsq_0$ is an unbiased estimate of $N\cdot\Variance[\nabbaralpha F(\btheta)]$. Thus, our objective is to choose $\alpha$ to minimize this variance. 

\textbf{Constraint.} Our constraint serves to enforce accuracy. Since the \fobg{} is potentially biased, we use  \zobg{} as a surrogate of $\nabla F(\btheta)$. For this purpose, we use $\epsilon > 0$ as a confidence bound on $\|\nabbarzero F(\btheta) - \nabla F(\btheta)\|$ from the obtained samples. When $\epsilon$ is a valid confidence bound that holds with probability $\delta$, we prove that our constraint in \Cref{eq:tractableform} guarantees accuracy in \Cref{eq:robustness}.

\begin{lemma}[Robustness]\label{lem:robustness} Suppose that $\epsilon + \alpha B \leq \gamma$ with probability $\delta$. Then, $\alpha$ is $(\gamma,\delta)$-accurate.
\end{lemma}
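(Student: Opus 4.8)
The plan is to reduce the claim to a single application of the triangle inequality, with all of the probabilistic content carried by the confidence bound on the \zobg{}. First I would use the definition of the \aobg{} to re-center the interpolated estimator around the \zobg{}: since $\nabbaralpha F = \alpha \nabbarone F + (1-\alpha)\nabbarzero F = \nabbarzero F + \alpha(\nabbarone F - \nabbarzero F)$, the error against the true policy gradient decomposes as
\[
\nabbaralpha F(\btheta) - \nabla F(\btheta) = \big(\nabbarzero F(\btheta) - \nabla F(\btheta)\big) + \alpha\big(\nabbarone F(\btheta) - \nabbarzero F(\btheta)\big).
\]
Applying the triangle inequality and recognizing the norm of the second term as $\alpha B$ then gives the deterministic bound
\[
\|\nabbaralpha F(\btheta) - \nabla F(\btheta)\| \le \|\nabbarzero F(\btheta) - \nabla F(\btheta)\| + \alpha B.
\]

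Next I would introduce the confidence event $\mathcal{E} = \{\|\nabbarzero F(\btheta) - \nabla F(\btheta)\| \le \epsilon\}$, which holds with probability $\delta$ by the defining property of $\epsilon$ as a valid confidence bound on the \zobg{} error. On $\mathcal{E}$ the first term above is at most $\epsilon$, so the bound becomes $\|\nabbaralpha F(\btheta) - \nabla F(\btheta)\| \le \epsilon + \alpha B$. Invoking the constraint hypothesis $\epsilon + \alpha B \le \gamma$ then yields $\|\nabbaralpha F(\btheta) - \nabla F(\btheta)\| \le \gamma$ on $\mathcal{E}$, which is precisely the $(\gamma,\delta)$-accuracy condition of \Cref{def:robust}.

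The only real subtlety — and the step I would be most careful about — is the probabilistic bookkeeping: the two ``probability $\delta$'' statements (validity of the confidence bound $\epsilon$ and satisfaction of the constraint $\epsilon + \alpha B \le \gamma$) must be tied to the same event rather than treated as independent, since a naive union bound would degrade the guarantee to $2\delta - 1$. The clean way to handle this is to observe that $\alpha$ is selected by the optimization in \Cref{eq:tractableform} to satisfy the constraint for the \emph{realized} samples, so $\epsilon + \alpha B \le \gamma$ holds deterministically whenever the program is feasible; the sole remaining source of randomness is then $\mathcal{E}$, and the $\delta$ passes through unchanged. I would state the feasibility assumption (that a constraint-satisfying $\alpha$ exists, e.g.\ because $\alpha = 0$ is feasible whenever $\epsilon \le \gamma$) explicitly to make the argument airtight.
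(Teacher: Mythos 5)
Your proposal is correct and is essentially the paper's own argument: the paper also reduces the claim to the chain $\|\nabbaralpha F - \nabla F\| \le \|\nabbarzero F - \nabla F\| + \alpha\|\nabbarone F - \nabbarzero F\| \le \epsilon + \alpha B \le \gamma$, reaching that intermediate bound via two applications of the triangle inequality (splitting $\nabla F$ as $\alpha \nabla F + (1-\alpha)\nabla F$) rather than your single application after recentering around the \zobg{}. Your added remark about tying the constraint and the confidence bound to the same event is a correct and slightly more careful treatment of the probabilistic bookkeeping than the paper makes explicit.
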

\begin{proof}
By repeated applications of the triangle inequality. See \Cref{app:triangle} for a detailed proof.
\end{proof}

\textbf{Specifying the confidence $\epsilon > 0$.} We select $\epsilon > 0$ based on a Bernstein vector concentration bound (\Cref{app:bernstein}), which only requires a prior upper bound on the magnitude of the value function $V_1(\cdot)$ and gradients $\rmD_{\btheta} \pi(\cdot,\btheta)$.

\textbf{Asymptotic feasibility.} \Cref{eq:tractableform} is not feasible if $\epsilon>\gamma$, which would indicate that we simply do not have enough samples to guarantee $(\gamma,\delta)$-accuracy. In this case, we choose to side on conservatism and fully use the \zobg{} by setting $\alpha=0$. Asymptotically, as the number of samples $N\rightarrow \infty$, the confidence interval $\varepsilon\rightarrow 0$, which implies that \Cref{eq:tractableform} will always be feasible. 

Finally, we note that \Cref{eq:tractableform} has a closed form solution, whose proof is provided in \Cref{app:closed_form}.
\begin{lemma}\label{lem:closed_form}
    With $\gamma = \infty$, the optimal $\alpha$ is $\alpha_{\infty} := \frac{\hat{\sigma}_0^2}{\hat{\sigma}_1^2 + \hat{\sigma}_0^2}$. For finite $\gamma \ge \epsilon$, \Cref{eq:tractableform} is 
    \begin{equation}
        \alpha_{\gamma} := \begin{cases}
            \alpha_{\infty} & \text{ if } \quad \alpha_{\infty} B \leq \gamma - \varepsilon \\ 
            \frac{\gamma-\varepsilon}{B} & \text{ otherwise } .
        \end{cases}
    \end{equation}
\end{lemma}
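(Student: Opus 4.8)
The plan is to treat \eqref{eq:tractableform} as a one-dimensional convex program and solve it by projecting the unconstrained minimizer onto the feasible interval. First I would observe that the objective $g(\alpha) := \alpha^2 \sighatsq_1 + (1-\alpha)^2 \sighatsq_0$ is a convex quadratic in $\alpha$, since $\sighatsq_0,\sighatsq_1 \ge 0$. Differentiating gives $g'(\alpha) = 2\alpha(\sighatsq_1 + \sighatsq_0) - 2\sighatsq_0$, so the unique stationary point (assuming $\sighatsq_1 + \sighatsq_0 > 0$) is $\alpha_\infty = \sighatsq_0/(\sighatsq_1 + \sighatsq_0)$, which automatically lies in $[0,1]$ because both variances are nonnegative. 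This already settles the $\gamma = \infty$ case, where the accuracy constraint is vacuous and the feasible set is all of $[0,1]$.

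For finite $\gamma \ge \epsilon$, I would rewrite the constraint $\epsilon + \alpha B \le \gamma$ as $\alpha \le U$, where $U := (\gamma - \epsilon)/B \ge 0$ (the case $B = 0$ is handled separately: the constraint then reads $\epsilon \le \gamma$, which holds by hypothesis, so it is vacuous and the answer is $\alpha_\infty$). Intersecting with $\alpha \in [0,1]$ yields the feasible interval $[0, \min\{1, U\}]$. Because $g$ is convex with global minimizer $\alpha_\infty$, it is strictly decreasing on $(-\infty, \alpha_\infty)$ and strictly increasing on $(\alpha_\infty, \infty)$, so its constrained minimizer over any interval is simply the Euclidean projection of $\alpha_\infty$ onto that interval.

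Then I would split into the two cases of the claim. If $\alpha_\infty \le U$ (equivalently $\alpha_\infty B \le \gamma - \epsilon$), then $\alpha_\infty$ is feasible, and since $\alpha_\infty \le 1$ it is the constrained optimum, giving $\alpha_\gamma = \alpha_\infty$. Otherwise $\alpha_\infty > U$, so the feasible set $[0, U]$ lies entirely to the left of $\alpha_\infty$; on this set $g$ is decreasing, hence the minimum is attained at the right endpoint $\alpha = U = (\gamma-\epsilon)/B$. Note that in this branch $U < \alpha_\infty \le 1$, so the value $(\gamma-\epsilon)/B$ indeed lands in $[0,1]$ and the formula is well-defined. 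This recovers exactly the stated $\alpha_\gamma$.

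The computation is entirely routine, so there is no genuine obstacle; the only care needed is in the degenerate boundary cases — $B = 0$ as noted above, and $\sighatsq_0 = \sighatsq_1 = 0$ (where $g \equiv 0$, so every feasible point, including the stated value, is optimal) — and in confirming that the second-case value $(\gamma - \epsilon)/B$ automatically lies in $[0,1]$, which follows from the chain $0 \le U < \alpha_\infty \le 1$ in that branch.
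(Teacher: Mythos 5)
Your proposal is correct and reaches the optimum by essentially the same reasoning as the paper: the objective is a convex quadratic, so the minimizer is either the unconstrained stationary point $\alpha_\infty$ (when feasible) or the point where the accuracy constraint is active. The paper phrases this via KKT stationarity and complementary slackness, whereas you give the elementary one-dimensional projection argument and additionally handle the degenerate cases $B=0$ and $\sighatsq_0=\sighatsq_1=0$ and verify that $(\gamma-\epsilon)/B\in[0,1]$ in the constrained branch, which the paper leaves implicit.
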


We give some qualitative characteristics of the solution:
\begin{itemize}\itemsep -0.2em
\item If we are within constraint and $\hat{\sigma}_0^2\gg\hat{\sigma}_1^2$, as we can expect from benign smooth systems, then $\alpha\approx 1$, and we rely more on the \fobg{}.
\item In pathological cases where we are unbiased yet $\hat{\sigma}_1^2 \gg \hat{\sigma}_0^2$ (e.g. stiffness and chaos), then $\alpha\approx0$.
\item If there is a large difference between the \zobg{} and the \fobg{} such that $B\gg 0$, we expect strict/empirical bias from discontinuities and tend towards using \zobg{}. 
\end{itemize}

\begin{figure*}[t]
\centering\includegraphics[width = 1.0\textwidth]{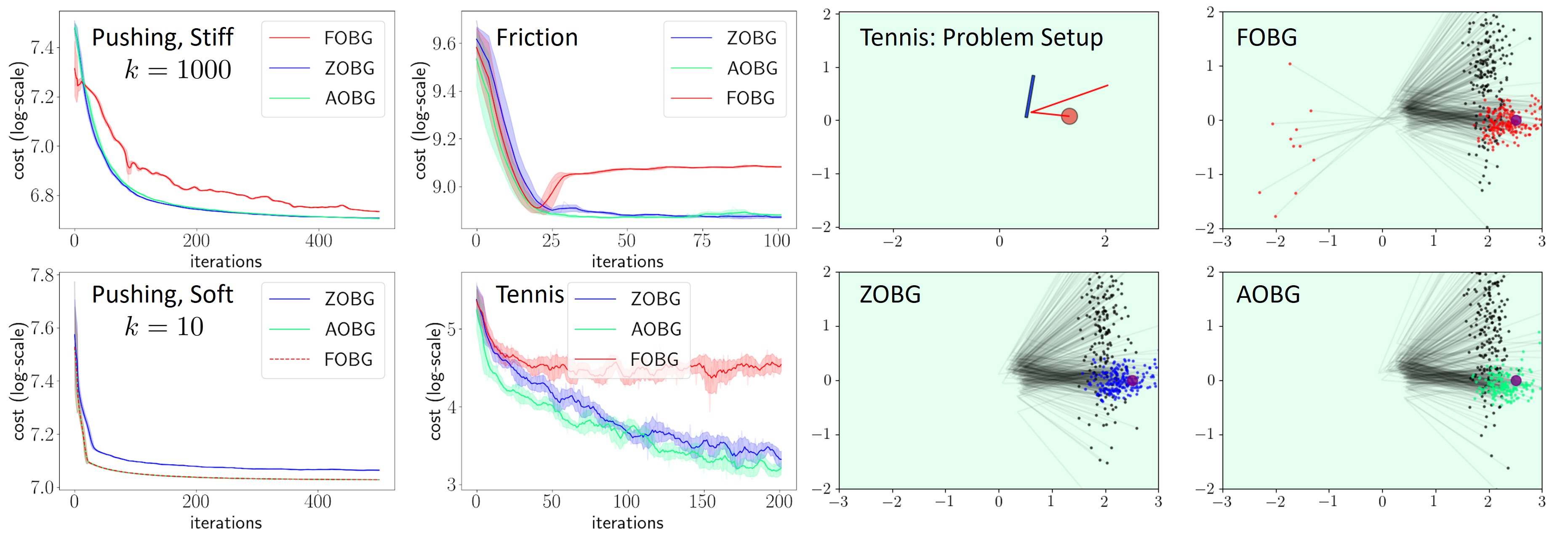}
\vskip -0.1 true in
\caption{1st column: trajectory optimization on pushing example with different contact models. \aobg{} and \fobg{} overlaps in soft pushing example. 2nd column: trajectory optimization on friction contact, and policy optimization on the tennis example. 3rd / 4th column: Visualization of policy performance for tennis. Black dots correspond to initial positions and colored dots correspond to final position, while the shaded lines are visualizations of individual closed-loop trajectories across multiple initial conditions.}
\label{fig:results}
\end{figure*}

\section{Landscape Analysis \& Case Studies}
\subsection{Landscape analysis on  examples}
Though we have characterized the bias-variance characteristics of different gradients, their convergence properties in landscapes of physical systems remain to be investigated. We visualize the performance of fixed-step gradient descent with the \fobg{}, \zobg{}, and AoBG on examples of Figure \ref{fig:systems}.

\textbf{Ball with wall.} On the system of \Cref{fig:systems}.B, the FoBG fails to make progress at the region of flatness, while the \zobg{} and AoBG successfully find the minima of the landscape (\Cref{fig:simplesystems}). In addition, the interpolation scheme switches to prioritizing ZoBG near discontinuities, while using more information from \fobg{} far from discontinuities; as a result, the variance of AoBG is lower than that of \zobg{}.

\textbf{Angular momentum transfer.} Next, we show results for the momentum transfer system of \Cref{fig:systems}.C in Figure \ref{fig:simplesystems}. Running gradient descent results in both the ZoBG and AoBG converging to the robust local minima of the solution. However, the bias of FoBG forces it off the cliff and the optimizer is unable to recover. Again, our interpolation scheme smoothly switches to prioritizing the ZoBG near the discontinuity, enabling it to stay within the safe region while maximizing the transferred momentum. 

\textbf{Bias-variance leads to different minima.} Through these examples with discontinuities, we claim that the bias-variance characteristics of gradients in these landscapes not only lead to different convergence rates, but convergence to different minima. The same argument holds for \emph{nearly discontinuous} landscapes that display high empirical bias. Both estimators are unbiased in expectation, and the high variance of \fobg{} should manifest itself in worse convergence rates. Yet, the high \emph{empirical bias} in the finite-sample regime leads to low empirical variance and different minima, leading to performance that is indistinguishable from when the underlying landscape is truly discontinuous.

Combined with the benefits of stochasticity in \Cref{sec:introduction}, we believe that this might explain why zero-order methods in RL are solving problems for physical systems where deterministic (even \emph{stochastic}) first order methods have struggled.

\subsection{Policy optimization case studies}
\vskip -0.1 true in
To validate our results on policy optimization problems with differentiable simulators, we compare the performance of different gradients on time-stepping simulations written in \texttt{torch} \cite{pytorch}. For all of our examples, we validate the correctness of the analytic gradients by comparing the values of \fobg{} and \zobg{} on a one-step cost. 

To empirically verify the various hypotheses made in this paper, we compare the performance of three gradient estimators: the \fobg{} and \zobg{}, which uniformly utilizes first and zeroth-order gradients, and the AoBG, which utilizes our robust interpolation protocol.

\textbf{Pushing: Trajectory optimization.}  We describe performance of gradients on the pushing (\Cref{fig:stiff}) environment, where contact is modeled using the \emph{penalty method} (i.e. stiff spring) with additional viscous damping on the velocities of the system. We use horizon of $H=200$ to find the optimal force sequence of the first block to minimize distance between the second block and the goal position. Our results in Figure \ref{fig:results} show that for soft springs $(k=10)$, the \fobg{} outperforms the \zobg{}, but stiffer springs $(k=1000)$ results in the \zobg{} outperforming the \fobg{}. This confirms our hypothesis that the stiffness of contact models has direct correlations with the variance of the estimators, which in turn affects the convergence rate of optimization algorithms that use such estimators.

In addition, we note that the interpolated gradient AoBG is able to automatically choose between the two gradients that performs better by utilizing empirical variance as a statistical measure of performance. 

\textbf{Friction: Trajectory Optimization}. We describe performance of gradients on the friction (\Cref{fig:friction}) environment.

Although the \fobg{} initially converges faster in this environment, it is unaware of the discontinuity that occurs when it slides off the box. As a result, the performance quickly degrades after few iterations. On the other hand, the \aobg{} and \zobg{} successfully optimize the trajectory, with \aobg{} showing slightly faster convergence.

\textbf{Tennis: Policy optimization.} Next, we describe the performance of different gradients on a tennis environment (similar to breakout), where the paddle needs to bounce the ball to some desired target location. We use a linear feedback policy with $d=21$ parameters, and horizon of $H=200$. In order to correctly obtain analytic gradients, we use continuous event detection with the time of impact formulation \cite{difftaichi}. The results of running policy optimization is presented in \Cref{fig:results}. 

While the \zobg{} and the AoBG are successful in finding a policy that bounces the balls through different initial conditions, the \fobg{} suffers from the discontinuities of geometry, and still misses many of the balls. Furthermore, the AoBG still converges slightly faster than the \zobg{} by utilizing first-order information.

\section{Discussion}
In this section, we elaborate and discuss on some of the ramifications of our work.

\textbf{Impact on Computation Time.} The convergence rate of gradient descent in stochastic optimization scales directly with the variance of the estimator \cite{ghadimilan}. For smooth and well-behaved landscapes, \fobg{} often converges faster since $\textbf{Var}[\nabbarone F] < \textbf{Var}[{\nabbarzero F}]$. However, when there are discontinuities or near-discontinuities in the landscape, this promise no longer holds since gradient descent using \fobg{} might not converge due to bias. Indeed, \Cref{exmp:coulomb} tells us that bias due to discontinuities can be interpreted as infinite variance. Under this interpretation, the convergence rate of gradient descent is ill-defined.

In practice; however, the computation cost of obtaining the gradient must be taken into consideration as well. Given the same number of samples $N$, the computation of \fobg{} is more costly than the \zobg{}, as \fobg{} requires automatic differentiation through the computation graph while \zobg{} simply requires evaluation. Thus, the benefits of convergence rates using the \fobg{} must justify the additional cost of computing them. 

\textbf{Implicit time-stepping.} In our work, we have mainly addressed two classes of simulation methods for contact. The first uses the penalty method \cite{add,drake}, which approximates contact via stiff springs, and the second uses event detection \cite{difftaichi}, which explicitly computes time-of-impact for automatic differentiation.

In addition to the ones covered, we note a third class of simulators that rely on optimization-based implicit time-stepping \cite{mujoco,bullet,flex,pangsim,dojo}, which can be made differentiable by sensitivity analysis \cite{boyd}. These simulators suffer less from stiffness by considering more long-term behavior across each timestep; however, geometrical discontinuities can still remain problematic. We leave detailed empirical study using these simulators to future work.

\textbf{Analytic Smoothing.} Randomized smoothing relies on smoothing out the policy objective via the process of noise injection and sampling. However, one can also resort to \emph{analytic smoothing}, which finds analytically smooth approximation of the underlying dynamics $\phi$ \cite{plasticinelab,dojo}. Modifying and smoothing $\phi$ directly also has the effect of smoothing the induced value function, though the resulting landscape will be different from the landscaped induced by appending noise to the policy output. 

However, even when $\phi$ can be analytically smoothed, Monte-Carlo sampling is still required for optimization across initial conditions $\rho$. For such settings, the findings of \Cref{sec:empiricalbias} is still highly relevant, as the performance of \fobg{} still suffers from the stiffness of the smoothed approximation of $\phi$. However, as many smoothing methods provide access to parameters that control the strength of smoothing, algorithms may be able to take a curriculum-learning approach where dynamics become more realistic, and less smooth, as more iterations are taken for policy search.

\vspace{-1em}
\section{Conclusion} 
\label{sec:conclusion}
%
\vspace{-1em}
Do differentiable simulators give better policy gradients? We have shown that the answer depends intricately on the underlying characteristics of the physical systems. While Lipschitz continuous systems with reasonably bounded gradients may enjoy fast convergence given by the low variance of first-order estimators, using the gradients of differentiable simulators may \emph{hurt} for problems that involve nearly/strictly discontinuous landscapes, stiff dynamics, or chaotic systems. Moreover, due to the empirical bias phenomenon, bias of first-order estimators in nearly/strictly discontinuous landscapes cannot be diagnosed from empirical variance alone. We believe that many challenging tasks that both RL and differentiable simulators try to address necessarily involve dealing with physical systems with such characteristics, such as those that are rich with contact.

These limitations of using differentiable simulators for planning and control need to be addressed from both the design of simulator and algorithms: from the simulator side, we have shown that certain modeling decisions such as stiffness of contact dynamics can have significant  underlying consequences in the performance of policy optimization that uses gradients from these simulators. From the algorithm side, we have shown we can automate the procedure of deciding which one to use online via interpolation.

\section*{Acknowledgements}
This work was funded by Amazon PO 2D-06310236, Lincoln Laboratory/Air Force Contract No. FA8702-15-D-0001, Defense Science \& Technology Agency No.DST00OECI20300823, NSF Award EFMA-1830901, and the Ocado Group. We would also like to thank the anonymous ICML reviewers for their valuable feedback on the manuscript. 

\bibliography{references}
\bibliographystyle{icml2022}

\appendix
\onecolumn


\onecolumn

~\\
\centerline{{\fontsize{13.5}{13.5}\selectfont \textbf{Supplementary Materials for}}}

\vspace{6pt}
\centerline{\fontsize{13.5}{13.5}\selectfont \textbf{
	  ``Do Differentiable Simulators Give Better Policy Gradients''}}
\vspace{10pt}   

\fontsize{11}{13.2}\selectfont

\newcommand{\cZ}{\mathcal{Z}}
\newcommand{\cD}{\mathcal{D}}
\newcommand{\bL}{\mathbf{L}}
\newcommand{\bh}{\mathbf{h}}
\newcommand{\cC}{\mathcal{C}}
\newcommand{\bv}{\mathbf{v}}
\newcommand{\cN}{\mathcal{N}}

\section{Formal Expected Gradient Computations}\label{app:formal}

This section establishes rigorous unbiasedness guarantees for the \zobg{} (under general conditions) and of the \fobg{} (under more restrictive conditions). Specifically, \Cref{cor:gaussian_zobg} provides a rigorous version of the \zobg{} guarantee, \Cref{lem:ZOBG_bias}, which is a special case of \Cref{prop:reinforce_rl} which holds for general, possibly non-Gaussian noise distributions. The \fobg{} estimator is addressed in \Cref{prop:lipschitz_rl}, which provides the rigorous statement of \Cref{lem:FOBG_bias}. We present a lengthy preliminaries section, \Cref{app:formal_prelim}, to formalize the results that follow. We then follow with formal statements of the results, \Cref{app:formal_results_things}, and defer proofs to \Cref{app:proofs_for_unbiasedness}. The preliminaries below are requisites only for the results and proofs within this section, and are not needed in future appendices. 

\subsection{Preliminaries}\label{app:formal_prelim}
Throughout, $\|\cdot\|$ denotes the Euclidean norm of vectors. We begin by specifying our sense of expectations and derivatives, and then turn to other, less-standard preliminaries. To rigorously describe expectations of non-continuous functions and of derivatives of non-smooth functions, we start with some preliminaries from measure theory.  

\newcommand{\cX}{\mathcal{X}}
\newcommand{\Borel}[1][d]{\mathscr{B}(\R^{#1})}
\newcommand{\Lebsig}[1][d]{\mathscr{Led}(\R^{#1})}

\newcommand{\scrF}{\mathscr{F}}
\newcommand{\ba}{\mathbf{a}}
\newcommand{\Lebz}{\mathscr{L}(\cZ)}
\paragraph{Lebesgue measurability. } For a background on measure theory, we direct the reader to \cite{stein2009real}. Here, we recall a few definitions. We define the set of Lebesgue measurable sets $\Lebsig[D]$ as the collection of subset $\cZ \subset \R^D$ for which the Lebesgue measure is well-defined. We let $\Borel[D'] \subset \Lebsig[D]$ be the collection of Borel measurable sets on $\R^{D'}$. We say a mapping $\Phi: \R^D \to \R^{D'}$ is \emph{Lebesgue measurable} if for all $\cZ' \in \Borel[D']$, $\Phi^{-1}(\cZ') \in \Lebsig[D]$. We say it is \emph{Borel measurable} if, more strongly, it holds that $\Phi^{-1}(\cZ') \in \Borel[D]$. The composition of Borel measurable functions are Borel measurable, but the same is not true more generally for Lebesgue measurable functions. Throughout, all functions are assumed Borel measurable unless otherwise specified, so their compositions are also Borel measurable.

More generally, given a Lebesgue measurable set $\cZ \subset \R^D$, we define $\Lebz$ as the set $\{\cZ \cap \tilde{\cZ} :\tilde{\cZ} \in \Lebsig[R^d] \}$, and say a function $\Phi: \cZ \to \R^{D'}$ is Lebesgue mearuable on its domain if for all $\cZ' \in \Borel[D']$, $\Phi^{-1}(\cZ') \in \Lebsig[D]$.

\paragraph{Lebesgue complete distribution.} We consider probability distributions $\cD$ on $\R^{D}$ which assign probability to \emph{all Lebesgue measurable sets} $\cZ \subset \R^D$: i.e., $\Pr_{\bz \sim \cD}[\bz \in\cZ]$ is well defined. Note that these distribution do not need to have density with respect to the Lebesgue measure: indeed, continuous, discrete, and mixture of continuous and discrete distributions all can be defined to assign probabilities to all Lebesgue-measurable sets.

We say $\cZ \subset \R^D$ is \emph{$\cD$-null} if $\Pr_{\bz \sim \cD}[\bz \in\cZ] = 0$. We assume without loss of generality that $\cD$ is \emph{complete}, so that given a $\cD$-null Lebesgue measurable set $\cZ$, $\Pr_{\bz \sim \cD}[\bz \in\cZ'] $ is well defined and equal to zero for all $\cZ' \subset \cZ$. 
We call distributions which are complete and assign probability to all Lebesgue sets \emph{Lebesgue complete}. We shall assume without comment that all distributions are Lebesgue complete.

\paragraph{Almost-everywhere functions and expectation.} 
Given a Lebesgue complete distribution $\cD$ on $\R^D$, we define expectation of a Lebesgue measurable $\Phi: \R^{D}\to \R^{D'}$ in the standard way. We say a function $\Phi$ is defined \emph{$\cD$-almost-surely} if there exists a Lebesgue-measurable set $\cZ \subset \R^D$ such that $\Phi$ is a Lebesgue measurable as mapping $\cZ \to \R^{D'}$, and $\cZ^c = \R^D \setminus \cZ$ is $\cD$-null. Given such a function $\Phi$, we define its expectation
\begin{align}
\Exp_{\bz \sim \cD}[\Phi(\bz)] := \Exp_{\bz \sim \cD}[\tilde\Phi(\bz)], \quad \text{ where }
\tilde{\Phi}(\bz) = \begin{cases} \Phi(\bz) &\bz  \in \cZ \\
0 & \text{otherwise.} \label{exp:ae}
\end{cases}
\end{align}
One can verify that $\tilde{\Phi}(\bz)$ is Lebesgue measurable.  Note that this definition is independent of the choice of $\cZ$: if $\cZ'$ is another set witnessing the almost-sure definition of $\Phi$, then the induced map $\tilde{\Phi}'$ defined by applying \Cref{exp:ae} with $\cZ'$ is also Lebesgue measurable, $\tilde{\Phi}' = \tilde{\Phi}$ $\cD$-almost surely, so that the integrals coincide. 
\begin{example}[Heaviside, revisited] With definition in \Cref{exp:ae}, we see that the derivative of the example in \Cref{exmp:heavisied} is $0$ almost surely under $\bw \sim p$; that is, the event on which the derivative of the Heaviside is both undefined has probability zero when $\bw \sim p$, and outside this event, its derivative is identically zero. 
\end{example} 

Stated simply, we ignore values of $\Phi$ defined outside the probability-one set $\cZ$. This definition has numerous advantages. For one, it satisfies the law of large numbers. That is, 
\begin{itemize}
\item If $\Exp_{\bz \sim \cD}\|\tilde{\Phi}(\bz)\| < \infty$, then for $\bz^{(1)},\dots,\bz^{(N)} \iidsim \cD$,  $\frac{1}{N}\sum_{i=1}^N \Phi(\bz^{(i)}) $ converges to $\Exp[\Phi(\bz)]$ in probability. 
\item If $\Exp_{\bz \sim \cD}\|\tilde{\Phi}(\bz)\|^2 < \infty$, this convergence holds almost surely. 
\end{itemize}
For further discussion, we direct the readers to a standard reference on probability theory (e.g. \cite{cinlar2011probability}). 

\paragraph{Multivariable derivative.} We provide conditions under which the multivariable function $F(\btheta):\R^d \to \R$ is differentiable. Formally, we say that a function $\Phi: \R^{d_1} \to \R^{d_1}$ is \emph{differentiable} at a point $\bz \in \R^d$ if there exists a linear map $\rmD \Phi(\bz) \in \R^{d_2 \times d_1}$ such that
\begin{align*}
\lim_{\|\bh\| \to 0} \left\|\frac{\Phi(\bh+\bz) - \Phi(\bz)}{\|\bh\|} - \rmD \Phi(\bz) \cdot \bh \right\| = 0.
\end{align*}
The limit is defined in the sense of $\lim_{\|\bh\| \to 0}(\cdot) = \lim_{t \to 0} \sup_{\|\bh\| \le t}(\cdot)$. Existence of a multivariable derivative slightly stronger that $\Phi(\cdot)$ having directional derivatives, and in particular, stronger than the existence of a gradient (see \citep[Chapter 9]{rudin1964principles} for reference).

\paragraph{Finite moments and polynomial growth. } To ensure all expectations are defined, we consider distributions for which all moments are finite. 
\begin{definition} We say that a (Lebesgue complete) distribution $\rho$ over a random variable $\bz$ \emph{has finite moments} if $\Exp_{\bz \sim \rho}\|\bz\|^a < \infty$ for all $a > 0$.
\end{definition}
The class of function which have finite expectations under distributions with finite moments are functions which have polynomial growth, in the following sense. 
\begin{definition}We say that a function $\psi: \R^{d_1} \to \R^{d_2}$ has \emph{polynomial growth} if there exists constants $a,b > 0$ such that $\|\psi(\bx)\| \le a (1 + \|\bz\|^b)$ for all $\bz \in \R^d$. We say that a matrix (or tensor) valued function has polynomial growth if the vector-valued function corresponding to flattening its entries into a vector has polynomial growth (for matrices, this means $\|\psi(\bz)\|_{\fro} \le a (1 + \|\bz\|^b)$). 
\end{definition}
The following lemma is clear. 
\begin{lemma}\label{lem:poly_growth} Suppose $\rho$ is a distribution over variables $\bx$ which has finite moments, and suppose $g(\bx)$ has polynomial growth. Then $\Exp[g(\bx)]$ is well defined.
\end{lemma}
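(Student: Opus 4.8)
The plan is to reduce the claim to the statement that $g(\bx)$ is absolutely integrable under $\rho$, since for a vector-valued (Borel measurable) map, the expectation $\Exp[g(\bx)]$ is well defined in the sense of the almost-everywhere construction \eqref{exp:ae} precisely when $\Exp_{\bx \sim \rho}\|g(\bx)\| < \infty$ (so that each component integral converges). Measurability is already granted by the standing assumption that all functions in play are Borel measurable, so the only thing left to verify is finiteness of this scalar integral.

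To that end, I would first invoke the polynomial-growth hypothesis to fix constants $a,b > 0$ with $\|g(\bx)\| \le a(1 + \|\bx\|^b)$ for all $\bx \in \R^d$. Applying monotonicity of expectation to this pointwise bound between nonnegative functions gives
\begin{align*}
\Exp_{\bx \sim \rho}\|g(\bx)\| \;\le\; \Exp_{\bx \sim \rho}\bigl[a(1 + \|\bx\|^b)\bigr] \;=\; a\bigl(1 + \Exp_{\bx \sim \rho}\|\bx\|^b\bigr).
\end{align*}
The finite-moments assumption on $\rho$ states exactly that $\Exp_{\bx \sim \rho}\|\bx\|^c < \infty$ for every $c > 0$; specializing to $c = b$ shows the right-hand side is finite. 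Hence $\Exp_{\bx \sim \rho}\|g(\bx)\| < \infty$, which establishes absolute integrability and therefore that $\Exp[g(\bx)]$ is well defined.

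There is no serious obstacle here: the argument is a one-line domination combined with the definition of finite moments. The only points worth stating carefully are (i) that ``well defined'' for a vector- or matrix-valued map is interpreted as integrability of its Euclidean (or Frobenius) norm, so that the construction in \eqref{exp:ae} applies coordinatewise, and (ii) that the growth constants $a,b$ may be taken uniform over all $\bx$, which is exactly what the polynomial-growth definition provides. If one wishes to cover the matrix/tensor-valued case explicitly, the identical estimate goes through with $\|\cdot\|_{\fro}$ in place of $\|\cdot\|$, since polynomial growth for such objects is defined through the flattened norm.
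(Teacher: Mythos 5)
Your proof is correct and matches the paper's intent exactly: the paper simply declares this lemma ``clear'' and offers no written proof, and the domination argument $\Exp\|g(\bx)\| \le a(1 + \Exp\|\bx\|^b) < \infty$ via the finite-moments hypothesis is precisely the one-line justification being elided. Your remarks on interpreting ``well defined'' coordinatewise and on the Frobenius norm for matrix-valued maps are consistent with the paper's definitions and add nothing that would conflict with them.
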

A second useful (and straightforward to check) fact is that polynomial growth is preserved under marginalization. 
\begin{lemma}\label{lem:poly_growth_marginalization} Suppose $\rho$ is a distribution over variables $\bx$ which has finite moments, and suppose $g(\bz,\bx)$ has polynomial growth in its argument $(\bz,\bx)$. Then $\bz \mapsto \Exp[g(\bz,\bx)]$ is well defined and has polynomial growth in $\bz$.
\end{lemma}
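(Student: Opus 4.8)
The plan is to establish the two claims separately: first that $\bz \mapsto \Exp_{\bx \sim \rho}[g(\bz,\bx)]$ is well defined pointwise for each $\bz$, and then that this marginal map inherits polynomial growth in $\bz$. Both follow from a single elementary estimate decoupling the joint polynomial-growth bound across the two arguments, combined with the finite-moment hypothesis on $\rho$.

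First I would record the defining inequality: by polynomial growth of $g$ in its joint argument, there exist constants $a,b > 0$ with $\|g(\bz,\bx)\| \le a(1 + \|(\bz,\bx)\|^b)$ for all $(\bz,\bx)$. The key decoupling step is to bound the joint Euclidean norm by the sum of the individual norms, $\|(\bz,\bx)\| = \sqrt{\|\bz\|^2 + \|\bx\|^2} \le \|\bz\| + \|\bx\|$, and then apply the elementary inequality $(u+v)^b \le 2^b(u^b + v^b)$, valid for $u,v \ge 0$ and any $b > 0$ (by convexity when $b \ge 1$, by subadditivity when $b < 1$). This yields the bound $\|g(\bz,\bx)\| \le a(1 + 2^b\|\bz\|^b + 2^b\|\bx\|^b)$, in which the $\bz$- and $\bx$-dependence are now separated.

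For well-definedness, I would fix $\bz$ and read this bound as the statement that $\bx \mapsto g(\bz,\bx)$ has polynomial growth in $\bx$ (with a $\bz$-dependent constant), so that $\Exp_{\bx \sim \rho}[g(\bz,\bx)]$ exists by \Cref{lem:poly_growth} together with the finite-moment assumption on $\rho$. For polynomial growth of the marginal, I would pass norms inside the expectation via the triangle inequality for expectations, $\|\Exp_{\bx}[g(\bz,\bx)]\| \le \Exp_{\bx}\|g(\bz,\bx)\|$, and integrate the displayed bound term by term. Writing $M_b := \Exp_{\bx \sim \rho}\|\bx\|^b < \infty$ (finite by hypothesis), this gives $\|\Exp_{\bx}[g(\bz,\bx)]\| \le a(1 + 2^b M_b) + a2^b\|\bz\|^b$, which is of the form $a'(1 + \|\bz\|^b)$ for an appropriate constant $a'$ depending only on $a,b,M_b$, establishing polynomial growth in $\bz$.

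There is essentially no deep obstacle here; the content is routine once the joint growth bound is decoupled across arguments. The only points requiring genuine care are bookkeeping: confirming that the produced constant $a'$ depends only on $a,b$ and the moment $M_b$ and not on $\bz$, and invoking the finite-moment hypothesis precisely to guarantee $M_b < \infty$ for the specific exponent $b$ arising from $g$. For the matrix- or tensor-valued case one runs the identical argument on the flattened vector using the Frobenius norm, as in the definition of polynomial growth.
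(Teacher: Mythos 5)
Your proof is correct, and since the paper states this lemma as ``straightforward to check'' without supplying its own argument, there is nothing to diverge from: the decoupling $\|(\bz,\bx)\|^b \le 2^b(\|\bz\|^b + \|\bx\|^b)$ followed by integration against the finite moment $\Exp_{\bx\sim\rho}\|\bx\|^b$ is exactly the intended routine verification. The bookkeeping on the constant $a'$ is handled properly, so no further comment is needed.
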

\newcommand{\cU}{\mathcal{U}}
\newcommand{\cB}{\mathcal{B}}
\paragraph{Lipschitz functions.} To establish the unbiasedness of the \fobg{} for non-smooth functions, we invoke the Lipschitz continuity assumption. We say a function $\Phi: \R^{D} \to \R^{D'}$ is locally-Lipschitz if, for every $\bz \in \R^D$, there is a neighborhood a neighborhood $\cU$ of $\bz$ such that there exists an $L > 0$ such that for all $\bz',\bz'' \in \cU$, $\|\Phi(\bz') - \Phi(\bz'')\| \le L\|\bz'-\bz''\|$. Locally Lipschitz functions are continuous, and thus Borel measurable.  
\begin{lemma}[Rademacher's Theorem]\label{lem:Rademacher}Every locally Lipschitz function $\psi:\R^{D} \to \R$ is differentiable on a set of $\cZ \subset \R^D$ such that $\cZ^c = \R^D \setminus \cZ$ has Lebesgue measure zero. 
\end{lemma}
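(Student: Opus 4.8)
The plan is to follow the classical route to Rademacher's theorem, reducing full (Fr\'echet) differentiability to the almost-everywhere existence and mutual compatibility of directional derivatives. First I would reduce to the globally Lipschitz case: since $\psi$ is only locally Lipschitz, I cover $\R^D$ by countably many balls on each of which $\psi$ agrees with a function that is Lipschitz with a uniform constant $L$, and invoke the fact that a countable union of Lebesgue-null sets is null. Hence it suffices to exhibit a full-measure set of differentiability assuming $\psi$ is globally $L$-Lipschitz.

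Next I would establish almost-everywhere existence of directional derivatives. Fixing a unit vector $\bv$, the restriction $t \mapsto \psi(\bz + t\bv)$ is Lipschitz, hence absolutely continuous, hence differentiable for almost every $t$ by the one-dimensional fundamental theorem of calculus. Applying Fubini's theorem in coordinates aligned with $\bv$, the set where $D_{\bv}\psi(\bz) := \lim_{t \to 0}(\psi(\bz + t\bv) - \psi(\bz))/t$ fails to exist is Lebesgue-null. Letting $\bv$ range over the standard basis yields almost-everywhere existence of $\nabla\psi(\bz)$.

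The key compatibility step is to show, for each fixed $\bv$, that $D_{\bv}\psi(\bz) = \langle \nabla\psi(\bz), \bv\rangle$ for almost every $\bz$. I would prove this weakly: for any test function $\zeta \in C_c^\infty(\R^D)$, moving the difference quotient under the integral via dominated convergence (legitimate because the quotients are bounded by $L$ on the fixed compact support of $\zeta$, using a change of variables $\bz \mapsto \bz - t\bv$ to shift the increment onto $\zeta$) gives $\int D_{\bv}\psi \cdot \zeta = -\int \psi \cdot D_{\bv}\zeta$, and likewise $\int \partial_i\psi \cdot \zeta = -\int \psi \cdot \partial_i\zeta$. Since the smooth $\zeta$ satisfies $D_{\bv}\zeta = \sum_i v_i \partial_i\zeta$, combining these identities forces $\int (D_{\bv}\psi - \langle\nabla\psi,\bv\rangle)\,\zeta = 0$ for every $\zeta$, whence the integrand vanishes almost everywhere.

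The main obstacle, and the final step, is upgrading this per-direction compatibility to genuine Fr\'echet differentiability. I would fix a countable dense set $\{\bv_k\}$ on the unit sphere and take $\cZ$ to be the full-measure set on which $\nabla\psi(\bz)$ exists and $D_{\bv_k}\psi(\bz) = \langle\nabla\psi(\bz),\bv_k\rangle$ holds simultaneously for all $k$; its complement is a countable union of null sets, hence null. For $\bz \in \cZ$ I would control the remainder $Q(\bv,t) := (\psi(\bz+t\bv) - \psi(\bz))/t - \langle\nabla\psi(\bz),\bv\rangle$ uniformly over the sphere. The Lipschitz bound makes $\bv \mapsto Q(\bv,t)$ Lipschitz with constant $\le 2L$ independently of $t$ (the quotient part is $L$-Lipschitz in $\bv$, and $\|\nabla\psi(\bz)\| \le L$ makes the linear part $L$-Lipschitz), so choosing a finite $\epsilon$-net from $\{\bv_k\}$ on the compact sphere and using $Q(\bv_k,t) \to 0$ for each net point yields $\sup_{\|\bv\|=1}|Q(\bv,t)| \to 0$ as $t \to 0$. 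This uniform decay is precisely differentiability of $\psi$ at $\bz$, and since $\cZ^c$ is null the theorem follows. The delicate point is exactly this equicontinuity argument: one must exploit the uniform Lipschitz modulus to trade the continuum of directions for a countable dense subset.
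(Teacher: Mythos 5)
The paper does not prove this lemma at all: it is quoted as a standard result with a citation to a measure-theory reference, so there is no in-paper argument to compare against. Your proposal is the classical textbook proof (essentially the one in Evans--Gariepy, which is what such references contain): localization to the globally Lipschitz case, a.e.\ existence of directional derivatives via the one-dimensional Lebesgue differentiation theorem and Fubini, the distributional identity $\int D_{\bv}\psi\,\zeta = -\int \psi\, D_{\bv}\zeta$ to force $D_{\bv}\psi = \langle \nabla\psi,\bv\rangle$ a.e., and the equicontinuity-over-a-countable-dense-set-of-directions argument to upgrade to Fr\'echet differentiability. All four steps are correct and the delicate point you flag (uniformity of the $2L$-Lipschitz modulus of the remainder $Q(\cdot,t)$ in $t$) is exactly the right one; the only micro-gap is that the bound $\|\nabla\psi(\bz)\|\le L$ used for the linear part of $Q$ should be justified on $\cZ$ by noting $|\langle\nabla\psi(\bz),\bv_k\rangle| = |D_{\bv_k}\psi(\bz)|\le L$ on a dense set of unit directions, though any dimension-dependent constant would serve equally well there.
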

The above result is standard (see, e.g. \citet[Chapter 2]{ern2013theory}).

To ensure convergence of integrals, we consider functions  where the Lipschitz constant grows polynomially in the radius of the domain. 
\begin{definition}[Polynomially Lipschitz] We say that 
\begin{itemize}
\item A function $\psi(\bz):\R^{d_1} \to \R^{d_2}$ is  \emph{polynomially-Lipschitz} if there are constants $a,b > 0$ such that for all radii $R \ge 1$ and all $\bz,\bz' \in \R^{d_1}$ such that $\|\bz\|,\|\bz'\| \le R$, $\|\psi(\bz) - \psi(\bz')\| \le a R^b$. 
\item We say a function $\psi(\bz;\bx):\R^{d_1} \times \R^n \to  \R^{d_2}$ is \emph{parametrized-polynomially-Lipschitz}  if for all radii $R \ge 1$ and all $\bz,\bz' \in \R^{d_1}$ and $\bx \in \R^n$ such that $\|\bz\|,\|\bz'\|,\|\bx\| \le R$, $\|\psi(\bz;\bx) - \psi(\bz';\bx)\| \le a R^b$. 
\end{itemize}
\end{definition}
One can check that polynomially Lipschitz functions are locally Lipschitz.

\subsection{Formal results}\label{app:formal_results_things}

We now state our formal results. Throughout, our smoothing noise $w$ has distribution $p$ which has the following form.
\begin{assumption}\label{asm:w_assumptions} The distribution $p$ admits a density $p(\bw) = e^{\alpha - \psi(\bw)}$, where
\begin{itemize} 
    \item[(a)] $\psi(\bw) \ge a \|\bw\| - b$ for some constants $a > 0$ and $b \in \R$. 
    \item[(b)] $\psi$ is twice differentiable everywhere, and  $\nablatwo \psi(\bw)$ has polynomial growth. 
\end{itemize}
\end{assumption}
\begin{example}[Gaussian distribution] The cannonical example is the Gaussian distribution $\bw \sim \cN(0,\sigma^2 I_n)$, where $p(\bw) = \frac{1}{\sqrt{2\pi} \sigma}\exp(\frac{-\|\bw\|^2}{2\sigma^2})$. Here, $\psi(\bw)=\frac{\|\bw\|^2}{2\sigma^2}$, which has polynomial growth and, being quadratic, is twice differentiable. In addition, 
\begin{align}
\nabla \psi(\bw) = \frac{\bw}{\sigma^2}, \quad \Exp[\nabla  \psi(\bw)] = 0. \label{eq:nab_psi_Gauss}
\end{align}
\end{example}
\paragraph{Zeroth-order unbiasedness.} We now stipulate highly general conditions under which the zeroth-order estimator is unbiased. In the interest of generality, we allow time-varying policies and costs.
\begin{definition}We say that a tuple $(\rho,p,\phi,c_{1:H};\pi_{1:H})$ is a \emph{benign planning problem} if (a) $\rho$ has finite moments (b) $p$ satisfies \Cref{asm:w_assumptions}, (c) the dynamics $\phi(\cdot,\cdot)$ and costs $c_h(\cdot,\cdot)$ have polynomial growth (for all $h \in H$), and  (d), for each $\bx \in \R^n$ and $h \in [H]$, $\bu \mapsto \pi_h(\bx,\bu)$ is twice-differentiable in $\bu$ and its second-order derivative has polynomial growth in $\bx$. In addition, we assume $\phi,c_{1:H},\pi_{1:H}$ are all Borel measurable. 
\end{definition}
We consider the resulting stochastic optimization objective. 
\begin{align*}
F(\btheta) &= \Exp_{\bx_1 \sim \rho, \bw_{1:H} \sim p^H}\left[V_1(\bx_{h},\bw_{1:H},\btheta)\right]\\
&\quad \text{s.t. } \bx_{h+1} = \phi(\bx_h,\bu_h), \quad \bu_h = \pi(\bx_h,\btheta) + \bw_h.
\end{align*}
Note that we define the expectation jointly over $\Exp_{\bx_1 \sim \rho, \bw_{1:H} \sim p^H}$, so as not to assume Fubini's theorem holds (even though, under our assumptions, it does). Our first result is a rigorous statement of the unbiasedness of the zeroth-order estimator. 
\begin{proposition}\label{prop:reinforce_rl} Suppose that $(\rho,p,\phi,c_{1:H};\pi_{1:H})$ is a benign planning problem. Then, the objective $F(\btheta)$ defined in \Cref{eq:F_theta_gen} is differentiable, and 
\begin{align*}
\nabthet F(\btheta) &= \Exp_{\bx_1 \sim \rho,\bw_{1:H} \sim p^H}\left[ \sum_{h=1}^H (\rmD_{\btheta} \pi_h(\bx_{h},\btheta))^\intercal  \psi(\bw_h) V_h(\bx_{h},\bw_{h:H},\btheta)\right].
\end{align*}
If, in addition $\Exp_{\bw \sim p}[\nabla \psi(\bw)] = 0$, we also have
\begin{align*}
\nabthet F(\btheta) &= \Exp_{\bx_1 \sim \rho,\bw_{1:H} \sim p^H}\left[ V_1(\bx_{h},\bw_{1:H},\btheta) \sum_{h=1}^H (\rmD_{\btheta} \pi_h(\bx_{h},\btheta))^\intercal  \psi(\bw_h) \right].
\end{align*}
\end{proposition}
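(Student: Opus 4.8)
The plan is to move the entire $\btheta$-dependence out of the (possibly non-smooth) cost $V_1$ and into the smoothing density $p$ via a measure-preserving reparametrization, and then differentiate only the smooth density. Fix a base point $\btheta_0$ and first condition on $\bx_1$. Working in the controls $\bu_{1:H}$ rather than the noises $\bw_{1:H}$, I observe that the map $\bw_{1:H}\mapsto\bu_{1:H}$ defined by $\bu_h = \pi_h(\bx_h,\btheta)+\bw_h$ is a \emph{triangular shift}: each $\bu_h$ is $\bw_h$ plus a function of $\bw_{1:h-1}$ (through the states $\bx_h = \bx_h(\bx_1,\bu_{1:h-1})$). Such maps preserve Lebesgue measure for every fixed $\btheta$ — and, crucially, this needs no differentiability of $\phi$, only measurability — so I can rewrite $F(\btheta) = \int \mathrm{cost}(\bx_1,\bu_{1:H})\prod_{h=1}^H p(\bu_h - \pi_h(\bx_h,\btheta))\,d\bu_{1:H}$, where both $\mathrm{cost}(\cdot) = V_1$ and the states $\bx_h(\bu)$ are now \emph{$\btheta$-free}. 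The non-smoothness of $\phi$ has thus been frozen into $\btheta$-independent quantities, and the only remaining $\btheta$-dependence sits inside the smooth factors $\pi_h$ (twice-differentiable in $\btheta$) and $p$ (differentiable by \Cref{asm:w_assumptions}(b)).

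Next I would differentiate the integrand in $\btheta$. Since $\log p(\bv) = \alpha - \psi(\bv)$, the chain rule gives $\nabthet \log\prod_h p(\bu_h-\pi_h(\bx_h,\btheta)) = \sum_{h=1}^H (\rmD_\btheta\pi_h(\bx_h,\btheta))^\intercal\nabla\psi(\bw_h)$ with $\bw_h := \bu_h-\pi_h(\bx_h,\btheta)$, so that formally $\nabthet F = \Exp[V_1\sum_h (\rmD_\btheta\pi_h)^\intercal\nabla\psi(\bw_h)]$, which is the second ($V_1$) displayed form. (I note in passing that $\psi$ should read $\nabla\psi$ in the statement, since a scalar cannot contract with $(\rmD_\btheta\pi_h)^\intercal$.)

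The hard part will be justifying the interchange of $\nabthet$ and the integral, because a naive dominated-convergence argument fails here: in $\bu$-coordinates the density $\prod_h p(\bu_h-\pi_h(\bx_h,\btheta))$ can be translated by a polynomially large amount as $\btheta$ varies, so the $\sup_\btheta$ of the integrand loses the exponential tail and is not integrable. My plan to circumvent this is a two-pronged argument. First, for each fixed $\btheta$ I change variables back to $\bw$-coordinates, where the density $\prod_h p(\bw_h)$ no longer depends on $\btheta$; there the candidate gradient $G(\btheta):=\Exp_{\bw}[V_1\sum_h(\rmD_\btheta\pi_h)^\intercal\nabla\psi(\bw_h)]$ is a $p$-expectation of a function of polynomial growth in $\bw$ (using that $V_1$, $\rmD_\btheta\pi_h$, and $\nabla\psi$ all have polynomial growth — the last via \Cref{asm:w_assumptions}(b) — together with \Cref{lem:poly_growth,lem:poly_growth_marginalization}). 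This shows both that $G(\btheta)$ is finite and, by dominating the $\btheta$-family on a compact neighborhood in the fixed-density $\bw$-coordinates, that $G$ is \emph{continuous}. Second, along any segment $\btheta_0+t\mathbf{e}$ I apply the fundamental theorem of calculus to $h(\btheta,\bu):=\mathrm{cost}(\bu)\prod_h p(\bu_h-\pi_h(\bx_h,\btheta))$ and then Fubini to swap the $\int_0^t dt$ and $\int d\bu$ integrals; Fubini needs only that $\int\|\nabthet h(\btheta,\bu)\|\,d\bu$ is finite for each $\btheta$ (which the previous step gives), not a single uniform dominating function. This yields $F(\btheta_0+t\mathbf{e})-F(\btheta_0)=\int_0^t\langle\mathbf{e},G(\btheta_0+s\mathbf{e})\rangle\,ds$, and continuity of $G$ upgrades this to differentiability with $\nabthet F=G$. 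The outer integration over $\bx_1\sim\rho$ is then handled by \Cref{lem:poly_growth_marginalization}, which preserves both finiteness and differentiability.

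Finally, to obtain the cost-to-go ($V_h$) form and reconcile the two displayed identities, I would prove the centered-score identity $\Exp_{\bw\sim p}[\nabla\psi(\bw)]=0$ by integration by parts / the divergence theorem, valid because $p=e^{\alpha-\psi}$ decays exponentially by \Cref{asm:w_assumptions}(a) while $\nabla\psi$ grows at most polynomially by (b). With this in hand, a causality (rewards-to-go) argument removes the cross terms between a score $\nabla\psi(\bw_h)$ and a past cost $c_{h'}$ with $h'<h$: conditioning on $\bw_{1:h-1}$, the factors $\rmD_\btheta\pi_h(\bx_h,\btheta)$ and $c_{h'}$ are measurable while $\Exp[\nabla\psi(\bw_h)\mid\bw_{1:h-1}]=0$, so each such term vanishes. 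Collecting terms converts $V_1\sum_h(\cdots)$ into $\sum_h(\cdots)V_h$, so the two expressions coincide precisely under the centered-score condition. I expect the rigorous differentiation-under-the-integral — via FTC$+$Fubini$+$continuity of $G$ rather than brute-force domination — to be the crux, with the measure-preserving triangular change of variables being the key enabling idea.
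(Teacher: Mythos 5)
Your route is genuinely different from the paper's. The paper deliberately avoids the trajectory-density (likelihood-ratio) derivation---it remarks that the state distribution need not admit a Lebesgue density---and instead applies the performance-difference lemma to write $F(\btheta)-F(\btheta')$ as a sum over $h$ of one-step differences, each of the separable form $\gout(\gin(\btheta)+\bw_h)$, which it handles by shifting a \emph{single} noise variable and controlling the second-order Taylor remainder $|p(\bw)-p(\bw+\Delta)-p(\bw)\langle-\nabla\psi(\bw),\Delta\rangle|\le\|\Delta\|^2p(\bw)\tilde g(\bw)$ (\Cref{prop:reinforce}, \Cref{lem:w_facts}). That quantitative remainder yields the Fr\'echet derivative directly, with no appeal to continuity of the candidate gradient or to the fundamental theorem of calculus, and it naturally produces the $\sum_h(\cdots)V_h$ form first. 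Your triangular, measure-preserving change of variables to control coordinates is a correct and elegant way to rescue the standard REINFORCE derivation (conditioning on $\bx_1$ and putting the density on the controls sidesteps the paper's objection), and your Fubini bookkeeping, the $\int\nabla p=0$ identity, and the causality cancellation are all sound. Note only that your derivation yields the $V_1\sum_h(\cdots)$ form first and the $\sum_h(\cdots)V_h$ form after centering---the reverse of the proposition's logical ordering---which is harmless only because $\Exp_{\bw\sim p}[\nabla\psi(\bw)]=0$ is automatic under \Cref{asm:w_assumptions}, as you observe. You are also right that $\psi(\bw_h)$ in the statement should read $\nabla\psi(\bw_h)$.

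There is, however, a genuine gap at the step you yourself flag as the crux: continuity of $G$. You propose to prove it ``by dominating the $\btheta$-family \ldots in the fixed-density $\bw$-coordinates,'' but dominated convergence gives continuity of $\btheta\mapsto\Exp_{\bw}[\Psi(\btheta,\bw)]$ only if $\btheta\mapsto\Psi(\btheta,\bw)$ is continuous at the base point for almost every $\bw$. In $\bw$-coordinates your integrand contains $V_1(\bx_1,\bw_{1:H},\btheta)$ and $\rmD_{\btheta}\pi_h(\bx_h,\btheta)$ with $\bx_h$ built from compositions of $\phi$ and $c_h$, which in a benign planning problem are only Borel measurable with polynomial growth---they need not be continuous anywhere (take $c_1$ to be the indicator of the rationals: then $\btheta\mapsto V_1$ is discontinuous at \emph{every} $\btheta$ for a.e.\ $\bw$, yet $G\equiv 0$ is continuous because the exceptional set moves with $\btheta$). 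So the pointwise-continuity hypothesis of DCT fails and your stated justification does not go through. The repair is to argue continuity of $G$ in $\bu$-coordinates, where the integrand is continuous in $\btheta$ for every fixed $\bu$---but there you meet exactly the domination failure you identified for the derivative interchange, since the shifts $\pi_h(\bx_h(\bu),\btheta)$ grow polynomially in $\bu$. One then needs a truncation argument: on $\{\|\bu\|\le R\}$ the shifts are bounded uniformly over a compact $\btheta$-neighborhood and DCT applies, while the tail is controlled uniformly in $\btheta$ by mapping back to $\bw$-coordinates and using that $p^H$ has all moments. This is provable, but it is precisely the delicate point your proof does not supply; the paper's Taylor-remainder route was engineered to avoid needing it.
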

\Cref{eq:nab_psi_Gauss} yields the following corollary for Gaussian distributions, which recovers \Cref{lem:ZOBG_bias} in the main text. 
\begin{corollary}\label{cor:gaussian_zobg} In the special case where $p = \cN(0,\sigma^2 I)$, we have 
\begin{align*}
\nabthet F(\btheta) &= \frac{1}{\sigma^2}\Exp_{\bx_1 \sim \rho,\bw_{1:H} \sim p^H}\left[ V_1(\bx_{h},\bw_{1:H},\btheta) \sum_{h=1}^H (\rmD_{\btheta} \pi_h(\bx_{h},\btheta))^\intercal  \bw_h \right].
\end{align*}
\end{corollary}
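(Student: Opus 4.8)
The plan is to treat \Cref{cor:gaussian_zobg} as a direct specialization of \Cref{prop:reinforce_rl} to Gaussian smoothing noise, so that the entire argument reduces to two steps: (i) checking that $p = \cN(0,\sigma^2 I)$ satisfies the hypotheses under which the proposition applies, and (ii) substituting the explicit form of $\nabla\psi$ for the Gaussian and factoring out the resulting constant.

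For step (i), I would invoke the computation in the Gaussian example preceding the proposition: writing the density as $p(\bw) = e^{\alpha - \psi(\bw)}$ with $\psi(\bw) = \|\bw\|^2/2\sigma^2$, we have that $\psi$ grows at least linearly, is twice differentiable everywhere, and has constant Hessian $\nablatwo\psi(\bw) = \sigma^{-2} I$ (hence polynomial growth), so \Cref{asm:w_assumptions} holds and the planning problem remains benign. The key additional input is the vanishing-mean condition $\Exp_{\bw \sim p}[\nabla\psi(\bw)] = 0$ recorded in \Cref{eq:nab_psi_Gauss}; this is immediate since $\nabla\psi(\bw) = \bw/\sigma^2$ is an odd function integrated against a symmetric density. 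This condition is precisely what licenses the second (value-weighted) form of \Cref{prop:reinforce_rl}, in which the full return $V_1$ is factored out in front of the sum rather than the per-step cost-to-go $V_h$.

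For step (ii), I would apply the second identity of \Cref{prop:reinforce_rl} and replace each occurrence of $\nabla\psi(\bw_h)$ by $\bw_h/\sigma^2$, pulling the common scalar $1/\sigma^2$ outside the expectation. This yields exactly the claimed expression for $\nabthet F(\btheta)$.

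I expect no genuine obstacle here: all of the analytic difficulty — differentiability of $F$, justification of the exchange of $\nabthet$ with the expectation, and the score-function / integration-by-parts identity that produces the $\nabla\psi$ weighting — is already discharged inside \Cref{prop:reinforce_rl}. The only points requiring care are the two bookkeeping checks above, namely that the Gaussian meets \Cref{asm:w_assumptions} and that $\Exp_{\bw \sim p}[\nabla\psi(\bw)] = 0$; the latter is what allows $V_1$ to appear as a common factor, and both are verified by the explicit Gaussian computations already in hand.
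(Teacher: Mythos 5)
Your proposal is correct and matches the paper's own route exactly: the corollary is obtained by specializing the second (value-weighted) identity of \Cref{prop:reinforce_rl} to the Gaussian, using $\psi(\bw)=\|\bw\|^2/2\sigma^2$ so that $\nabla\psi(\bw)=\bw/\sigma^2$ and $\Exp_{\bw\sim p}[\nabla\psi(\bw)]=0$ as recorded in \Cref{eq:nab_psi_Gauss}, then factoring out $1/\sigma^2$. The two bookkeeping checks you identify are precisely all that the paper relies on.
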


\paragraph{First-order unbiasedness under Lipschitzness.} Next, we turn to the formal result under Lipschitzness. We consider objectives which have the following additional assumptions:
\begin{definition}We say that a tuple $(\rho,p,\phi,c_{1:H};\pi_{1:H})$ is a \emph{benign Lipschitz planning problem} if it is a benign planning problem, and in addition, (a) $c_{h}$  and $\pi_h$ are everywhere-differentiable and their derivatives have polynomial growth, and (b) $\phi$ is polynomially Lipschitz.
\end{definition}
In addition, we require one more technical condition which ensures measurability of the set on which the analytic gradients are defined.
\begin{definition} We say that the distribution $\rho$ is \emph{decomposable} if there exists a Lebesgue-measurable function $\mu: \R^n \to \R_{\ge 0}$ and a countable set of atoms $\ba_1,\ba_2,\dots,$ with weights $\nu_1,\nu_2,\dots$ such that, for any $\cX \subset \R^n$, $\Pr_{\bx_1 \sim \rho}[\bx_1 \in \cX] = \int_{\cX} \mu(\bx_1) \rmd \bx_1 + \sum_{i \ge 1} \ba_i \nu_i$. 
\end{definition}
More general conditions can be established, but we adopt the above for simplicity. We assume that the distribution over initial state $\bx_1 \sim \rho$ satisfies decomposability, which in particular encompasses the deterministic distribution over initial states considered in the body of the paper. The following lemma formalizes \Cref{lem:FOBG_bias}.  
\begin{proposition}\label{prop:lipschitz_rl} Suppose that $(\rho,p,\phi,c_{1:H};\pi_{1:H})$ is a benign Lipschitz planning problem. If $\rho$ is decomposable, then
\begin{itemize}
\item[(a)] For each $\btheta$, there exists a set Lebesgue-measurable set $\cZ \subset \R^{n + m H}$ such that $\Pr_{\bx_1 \sim \rho, \bw_{1:H} \sim p^H}[(\bx_1,\bw_{1:H})\in \cZ] = 1$ and  $\btheta \mapsto V_1(\bx_1,\bw_{1:H},\btheta)$ is differentiable for all $(\bx_1,\bw_{1:H})\in \cZ$. 
\item[(b)] $\nabthet F(\btheta) = \Exp_{\bx_1 \sim \rho, \bw_{1:H} \sim p^H}[ \nabla V_1(\bx_1,\bw_{1:H},\btheta)]$, where expectations are taken in the sense of \Cref{exp:ae}.
\end{itemize}
If $\rho$ is not necessarilly decomposable, but for given $\btheta \in \R^D$, the set $\{(\bx_1,\bw_{1:H}): V_1(\bx_1,\bw_{1:H},\btheta) \text{ is differentiable}\}$ is Lebesgue measureable, then points (a) and (b) still hold. 
\end{proposition}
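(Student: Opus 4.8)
The plan is to separate the two claims. Part (a) is an almost-everywhere differentiability statement for the pathwise map $\btheta \mapsto V_1$, while part (b) is a differentiation-under-the-integral-sign statement that, given (a), follows from a dominated-convergence argument. The common preliminary step is to record that $V_1$ is jointly locally Lipschitz — indeed parametrized-polynomially-Lipschitz — in $(\bx_1,\bw_{1:H},\btheta)$. This follows by induction on the horizon: each $\bu_h = \pi_h(\bx_h,\btheta)+\bw_h$ and each $\bx_{h+1}=\phi(\bx_h,\bu_h)$ is a composition of the polynomially-Lipschitz $\phi$ with the everywhere-differentiable, polynomial-growth-derivative maps $\pi_h$, and such compositions preserve the parametrized-polynomial-Lipschitz property; summing the $c_h$ (also differentiable with polynomial-growth derivatives) gives the claim for $V_1$. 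In particular, the local Lipschitz constant of $\btheta \mapsto V_1$ on a bounded neighborhood is dominated by a polynomial in $\|\bx_1\|,\|\bw_{1:H}\|$, which is integrable under $\rho \times p^H$ by the finite-moment assumption; this is the object that will dominate in part (b).

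For part (a), I fix $\btheta$. The only non-differentiable building block is $\phi$, whose non-differentiability set $\mathcal{M}\subset\R^{n+m}$ is Lebesgue-null by Rademacher's theorem (\Cref{lem:Rademacher}). I would argue by induction that, for $p^H$-almost every $\bw_{1:H}$ (and, after integrating $\bx_1$, for $\rho\times p^H$-a.e.\ $(\bx_1,\bw_{1:H})$), the trajectory never visits $\mathcal{M}$, so the chain rule applies at every step and $\btheta\mapsto V_1$ is differentiable. The mechanism is the additive, freshly-injected noise: conditioned on $\bw_{1:h-1}$ (hence on $\bx_h$), the input $\bu_h=\pi_h(\bx_h,\btheta)+\bw_h$ is a rigid translate of $\bw_h$, so the event $\{(\bx_h,\bu_h)\in\mathcal{M}\}$ pulls back to a translate of the slice $\mathcal{M}_{\bx_h}$, which is Lebesgue-null and therefore $p$-null since $p$ admits a density; integrating over the past via Tonelli shows the cumulative bad event is null. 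This is exactly the phenomenon that fails for discontinuous $f$ (cf.\ the Heaviside in \Cref{exmp:heavisied}, where the bad set carries full probability) and that is rescued here by the absolute continuity of the smoothing distribution. The role of decomposability of $\rho$ is to guarantee that the resulting differentiability set is measurable and to let us integrate the almost-sure statement over the atomic and absolutely-continuous parts of $\bx_1$ separately; when measurability is assumed outright, the same conclusion holds.

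Given (a), part (b) is a standard interchange of $\nabthet$ and $\Exp$. For a coordinate direction $e_j$ and $t\to 0$, the difference quotient $t^{-1}(V_1(\bx_1,\bw_{1:H},\btheta+te_j)-V_1(\bx_1,\bw_{1:H},\btheta))$ converges to $\partial_{\theta_j}V_1$ almost surely by part (a), and is bounded in absolute value by the local Lipschitz constant from the preliminary step, which is integrable. The dominated convergence theorem then yields $\partial_{\theta_j}F(\btheta)=\Exp[\partial_{\theta_j}V_1]$; assembling the coordinates and using that $\Exp[\nabla V_1]$ is continuous in $\btheta$ (again by dominated convergence and the polynomial-growth bounds) upgrades this to full differentiability of $F$ with $\nabthet F=\Exp[\nabla V_1]$, the assertion of (b).

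The main obstacle is squarely in part (a): propagating almost-everywhere differentiability through the composition with the non-smooth dynamics $\phi$, for a \emph{fixed} $\btheta$ and almost every noise realization. A naive Rademacher-plus-Fubini argument only delivers differentiability for almost every $\btheta$ (the wrong quantifier order), and the upgrade to every $\btheta$ genuinely relies on the absolute continuity of $p$ at each time step to convert the Lebesgue-null kinks of $\phi$ into probability-zero events. The delicate case is when $\bx_h$ itself depends on $\btheta$ and its law is singular in $\R^n$ (e.g.\ when $m<n$), so that the fresh noise controls only the input coordinate; one must then argue that the reached states avoid the relevant slices of $\mathcal{M}$, which is the measure-theoretic heart of the proof and the reason the explicit measurability/decomposability hypotheses are needed. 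The domination in part (b), by contrast, is routine once the polynomial-Lipschitz bookkeeping of the preliminary step is in place.
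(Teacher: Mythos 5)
There is a genuine gap in your argument for part (a), and it is exactly at the step you flag but do not carry out. Your mechanism is: Rademacher applied to $\phi$ alone gives a Lebesgue-null kink set $\mathcal{M}\subset\R^{n+m}$, and the fresh additive noise $\bw_h$ makes $\bu_h$ a translate of $\bw_h$, so the event $\{(\bx_h,\bu_h)\in\mathcal{M}\}$ pulls back to a translate of the slice $\mathcal{M}_{\bx_h}$, ``which is Lebesgue-null.'' That last clause is false in general: Fubini only guarantees that $\mathcal{M}_{\bx}$ is null in $\R^m$ for Lebesgue-almost-every $\bx\in\R^n$, whereas the reached state $\bx_h$ is a deterministic, merely Lipschitz function of $(\bx_1,\bw_{1:h-1})$ whose law can be singular or even atomic in $\R^n$ (e.g.\ if $\phi$ is constant in some coordinate, the trajectory is forced onto a null set of states, and $\mathcal{M}$ may contain the full fiber over exactly those states). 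So the trajectory can hit $\mathcal{M}$ with probability one, and your induction does not close. Note also that this is not merely a technical obstruction to be patched: even when the trajectory does hit $\mathcal{M}$ a.s., the conclusion of the proposition can still hold, because differentiability of the \emph{composition} does not require differentiability of each factor at the visited point. The paper's proof exploits precisely this: it applies Rademacher not to $\phi$ but to the composed map $\bw_{1:h-1}\mapsto \Phi_h(\bx_1,\bw_{1:h-1},\btheta)$ (locally Lipschitz as a composition of locally Lipschitz maps), obtaining a.e.\ differentiability \emph{in the noise variables}, and then converts this into differentiability in $\btheta$ via a reparametrization (\Cref{claim:Phi_diff}): a perturbation $\btheta\mapsto\btheta+\deltheta$ of the policy is rewritten as a perturbation $\bw_{1:h}\mapsto\bw_{1:h}+\tilde\bw_{1:h}(\deltheta)$ of the noise along the same trajectory, with $\tilde\bw_i(\deltheta)=\pi_i(\Psi_i(\deltheta),\btheta+\deltheta)-\pi_i(\Psi_i(\deltheta),\btheta)$ differentiable at $\deltheta=0$ by induction and everywhere-differentiability of $\pi_i$. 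This is the missing idea; without it (or a substitute), your part (a) does not go through.

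Your part (b) is essentially the paper's argument: dominate the difference quotients by a polynomial in $(\|\bx_1\|,\|\bw_{1:H}\|)$ via the polynomial-Lipschitz bookkeeping and apply dominated convergence. One small caveat: you upgrade from directional derivatives to full differentiability of $F$ by claiming continuity of $\btheta\mapsto\Exp[\nabla V_1]$, which is itself delicate (the gradient of a Lipschitz function need not vary continuously on its domain of definition, and the a.e.-differentiability set moves with $\btheta$). The paper sidesteps this entirely by invoking \Cref{prop:reinforce_rl}, which already establishes that $F$ is differentiable, so that equating directional derivatives suffices. You should do the same.
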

\begin{example}[Piecewise linear] As an example, piecewise linear, or piecewise-polynomial dynamics statisfy the conditions of the above proposition. 
\end{example}
\subsubsection{Separable functions}

A key step in establishing the unbiasedness of the zeroth-order estimator for policy optimization is the special case of separable functions. We begin by stating guarantees for simple functions which the noise enters in the following  \emph{separable} fashion. 
\begin{definition}[Benign separability] We say that a function  $f(\btheta,\bw)$ has  \emph{benign separability} if there exists an everywhere differentiable function $\gin(\btheta)$ and a Lebesgue measurable function $\gout(\cdot)$ with polynomial growth such that
\begin{align*}
f(\btheta,\bw) = \gout(\gin(\btheta) + \bw).
\end{align*}
\end{definition}
A slightly more general version of the above definition is as follows. 
\begin{definition}
We say that a $\bx$-parameterized function function $f(\btheta,\bw; \bx)$ has \emph{parametrized benign separability} if there exists Lebesgue-measure functions $\gout(\cdot; \cdot)$ and $\gin(\cdot ; \cdot)$ such that $\gin(\cdot;\cdot)$ is differentiable for all $\bx$, and
\begin{align*}
f(\btheta,\bw ;\bx) = \gout(\gin(\btheta;\bx) + \bw;\bx),
\end{align*}
where (a) $(\bz,\bx) \mapsto \gout(\bz; \bx)$ has polynomial growth, (b) for each $\btheta$,   the mapping $\bx \mapsto \rmD_{\btheta} \gin(\btheta;\bx)$ has polynomial growth, $(\bz,\bx) \mapsto \gout(\bz; \bx)$ has polynomial growth, and  (c) for some $\epsilon_0 > 0$,  there is a function $\tilde{g}(\bx)$ with polynomial growth such that such that for  all $\Delta:\|\Delta\| \le \epsilon$,
\begin{align}
\| \gin(\btheta;\bx) - \gin(\btheta+\Delta;\bx) - \rmD \gin(\btheta;\bx) \cdot \Delta \|  \le \|\Delta\|^2 \tilde{g}(\bx) \label{eq:par_gin_condition}.
\end{align}
\end{definition}
We note that \Cref{eq:par_gin_condition} is satisfied when $\gin(\btheta;\bx)$ has a second derivative by having polynomial growth.

The following gives an expression for the derivative of separable functions. Note that we do not require $\gout(\cdot)$ to be differentiable, and depend only on the derivatives of $\psi(\bw) = \log p(\bw) + \text{ const.}$  from the density $p$, as well as the derivative of $\gin(\btheta)$. The following statement establishes the well-known \cite{reinforce} computation at our level of generality.
\begin{proposition}\label{prop:reinforce} Suppose that $p$ satisfies \Cref{asm:w_assumptions}. Then, if $f(\btheta,\bw)$ is benign separable, then the expectation $F(\btheta) = \Exp_{\bw \sim p}[f(\btheta,\bw)]$ is well defined, differentiable, and has
    \begin{align*}
    \nabla F(\btheta) = \Exp_{\bw \sim p}[\rmD \gin(\btheta)^\intercal \nabla \psi(\bw) \cdot  f(\btheta,\bw) ]. 
    \end{align*}
More generally, if $\rho$ has finite moments and $f(\btheta,\bw;\bx)$ has benign parametrized separability, then $F(\btheta) = \Exp_{\bx \sim \rho,\bw \sim p}[f(\btheta,\bw;\bx)]$ satisfies
\begin{align*}
    \nabla F(\btheta) = \Exp_{\bx \sim \rho,\bw \sim p}[\rmD \gin(\btheta)^\intercal \nabla \psi(\bw) \cdot f(\btheta,\bw;\bx) ]. 
    \end{align*}
\end{proposition}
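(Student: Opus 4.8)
The plan is to route the entire $\btheta$-dependence through the smoothing density and reduce to differentiating a translated integral in a single location parameter, then invoke the chain rule. First I would record two facts used throughout. Since $\psi(\bw) \ge a\|\bw\| - b$, the density $p(\bw) = e^{\alpha - \psi(\bw)}$ has at-most-exponentially-light tails, so $p$ has finite moments of every order and every polynomial-growth function is $p$-integrable. And since $\nablatwo\psi$ has polynomial growth, the identity $\nabla\psi(\bw) = \nabla\psi(0) + \int_0^1 \nablatwo\psi(t\bw)\,\bw\,\rmd t$ shows $\nabla\psi$ also has polynomial growth. Together with the polynomial growth of $\gout$, these guarantee that every expectation in the statement is well defined in the almost-everywhere sense of \Cref{exp:ae}.

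The core step is a location-parameter lemma: define $G(\bz) := \Exp_{\bw\sim p}[\gout(\bz + \bw)]$ and show $G$ is continuously differentiable with $\nabla G(\bz) = \Exp_{\bw\sim p}[\gout(\bz+\bw)\,\nabla\psi(\bw)]$. To prove this I would translate, writing $G(\bz) = \int \gout(\bv)\,p(\bv - \bz)\,\rmd\bv$ by translation-invariance of Lebesgue measure, so the $\bz$-dependence sits entirely inside the smooth factor $p(\bv - \bz) = e^{\alpha - \psi(\bv - \bz)}$, whose gradient in $\bz$ is $p(\bv-\bz)\,\nabla\psi(\bv - \bz)$. Differentiating under the integral then yields, after translating back, the claimed formula; continuity of the resulting partials (again via dominated convergence, since the integrand is continuous in $\bz$ and dominated) upgrades existence of partials to genuine multivariable differentiability, i.e.\ $G \in C^1$, which is what the chain rule will require.

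The main obstacle, and the only real work, is justifying this interchange of derivative and integral. I would fix $\bz_0$, restrict to the unit ball $\bz \in B(\bz_0,1)$, and produce a single $\bv$-integrable dominating function for $\gout(\bv)\,p(\bv-\bz)\,\nabla\psi(\bv-\bz)$. The estimates combine cleanly: $|\gout(\bv)|$ is polynomial in $\|\bv\| \le \|\bv-\bz_0\| + \|\bz_0\|$; $\|\nabla\psi(\bv-\bz)\|$ is polynomial in $\|\bv-\bz\| \le \|\bv-\bz_0\| + 1$; while $p(\bv-\bz) \le e^{\alpha + b + a}e^{-a\|\bv - \bz_0\|}$ decays exponentially in $\|\bv - \bz_0\|$. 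Their product is bounded by a fixed (polynomial)$\times$(exponential) function of $\|\bv - \bz_0\|$, which is integrable on $\R^n$ and independent of $\bz$, so the standard Leibniz/dominated-convergence theorem applies coordinatewise. I would emphasize that routing the $\btheta$-dependence through $G$ is precisely what avoids needing local boundedness of $\rmD\gin$, since $\gin$ is assumed only differentiable (not $C^1$), so its derivative cannot be placed inside a uniform domination bound.

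With the lemma in hand, the unparametrized claim follows from the chain rule applied to $F = G\circ\gin$: using $f(\btheta,\bw) = \gout(\gin(\btheta)+\bw)$ and $G \in C^1$, $\nabla F(\btheta) = \rmD\gin(\btheta)^\intercal \nabla G(\gin(\btheta)) = \Exp_{\bw\sim p}[\rmD\gin(\btheta)^\intercal\nabla\psi(\bw)\,f(\btheta,\bw)]$. For the parametrized statement I would apply the lemma for each fixed $\bx$ to $G(\bz;\bx) := \Exp_\bw[\gout(\bz+\bw;\bx)]$, whose gradient inherits polynomial growth in $(\bz,\bx)$ from the stated hypotheses via \Cref{lem:poly_growth_marginalization}, giving $\nabla_\btheta \Exp_\bw[f(\btheta,\bw;\bx)] = \Exp_\bw[\rmD\gin(\btheta;\bx)^\intercal\nabla\psi(\bw)\,f(\btheta,\bw;\bx)]$; here the Taylor-remainder condition \Cref{eq:par_gin_condition} is exactly what makes the $\btheta$-difference quotient of $\bx\mapsto\Exp_\bw[f(\btheta,\bw;\bx)]$ admit a polynomial-in-$\bx$ bound. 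A final differentiation under the $\bx$-expectation, dominated using finite moments of $\rho$ together with \Cref{lem:poly_growth_marginalization}, produces the parametrized formula.
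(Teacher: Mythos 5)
Your proof is correct, and it rests on the same core device as the paper's: translate the integration variable so that all $\btheta$-dependence enters through the density $p(\cdot-\gin(\btheta))$, then differentiate the density via $\nabla p = -p\,\nabla\psi$. The difference is in the packaging. The paper never introduces the intermediate map $G$; it directly estimates the increment, using a quantitative second-order remainder for $p$ (\Cref{lem:w_facts}, part (c)) to get a bound of the form $\left|F(\btheta)-F(\btheta')-\left\langle \gin(\btheta)-\gin(\btheta'),\,\Exp_{\bw}[\nabla\psi(\bw)f(\btheta,\bw)]\right\rangle\right|\le C_w\|\gin(\btheta)-\gin(\btheta')\|^2$, and then divides by $\|\btheta-\btheta'\|$ and uses differentiability of $\gin$. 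You instead factor $F=G\circ\gin$ with $G(\bz)=\Exp_{\bw}[\gout(\bz+\bw)]$, prove $G\in C^1$ by Leibniz/dominated convergence with an explicit $(\text{polynomial})\times(\text{exponential})$ dominating function, and invoke the chain rule; your remark that continuity of the partials is what upgrades existence of partials to Fr\'echet differentiability (without which the chain rule is unavailable) is exactly the right subtlety, and this modular route is arguably cleaner for the unparametrized claim. The one place your sketch is thinner than the paper is the last step of the parametrized case: to differentiate under $\Exp_{\bx\sim\rho}$ in the Fr\'echet sense you need the remainder $F(\btheta';\bx)-F(\btheta;\bx)-\langle \rmD\gin(\btheta;\bx)(\btheta'-\btheta),\,\nabla_{\bz}G(\gin(\btheta;\bx);\bx)\rangle$ to be dominated by $\mathrm{poly}(\|\bx\|)\cdot\|\btheta'-\btheta\|^2$ uniformly for small $\|\btheta'-\btheta\|$, not merely to be $o(\|\btheta'-\btheta\|)$ pointwise in $\bx$. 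That uniform bound needs a Lipschitz (or second-derivative) estimate on $\nabla_{\bz}G(\cdot;\bx)$ with polynomial-in-$\bx$ constants, obtained by differentiating under the integral once more — precisely the analogue of \Cref{lem:w_facts}(c) that the paper leans on — combined with \Cref{eq:par_gin_condition} to control the $\gin$ remainder. With that estimate written out, your argument closes and matches the proposition.
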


\subsection{Proofs}\label{app:proofs_for_unbiasedness}
\subsubsection{Proof of \Cref{prop:reinforce}}
    \begin{lemma}\label{lem:w_facts} Let $p$ be the distribution of $\bw$ satisfying \Cref{asm:w_assumptions} (and, by abuse of notation, its density with respect to the Lebesgue measure). Then, the following statements are true
    \begin{itemize}
        \item[(a)] The distribution $p$ has finite moments. In particular, for any function $g(\cdot):\R^m \to \R^d$ with polynomial growth, $\Exp_{\bw\sim p }[\|g(\bw)\|] < \infty$. This only requires \Cref{asm:w_assumptions} part (a).
       \item[(b)] $\nabla \psi(\bw)$ has polynomial growth, and $\Exp[\nabla \psi(\bw)] = 0$. 
        \item[(c)] For any $B > 0$, there exists a function with polynomial growth such that $\tilde{g}(\cdot)$, for all $\Delta:\|\Delta\| \le B$,
        \begin{align*}
        |p(\bw) - p(\bw+\Delta) - p(\bw) \langle -\nabla \psi(\bw),\Delta \rangle| \le \|\Delta\|^2 p(\bw)\cdot \tilde{g}(\bw). 
        \end{align*}
        \item[(d)] Let $g(\cdot,\cdot): \R^m \times \R^n \to \R$ have polynomial growth. Then $\bx \mapsto \Exp_{\bw}  \nabla \psi(\bw)\cdot g(\bw,\bx)$ is well defined and polynomial growth in $\bx$. 
    \end{itemize}
    \end{lemma}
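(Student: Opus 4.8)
The plan is to establish the four claims in sequence, exploiting that (d) rests on (a) and (b), while (b) and (c) both hinge on the two elementary identities for the density derivatives, $\nabla p = -\nabla\psi\cdot p$ and $\nablatwo p = (\nabla\psi\,\nabla\psi^\intercal - \nablatwo\psi)\,p$, each immediate from $p = e^{\alpha-\psi}$. For (a) I would begin from \Cref{asm:w_assumptions}(a), which yields the pointwise tail bound $p(\bw) = e^{\alpha-\psi(\bw)} \le e^{\alpha+b}e^{-a\|\bw\|}$. Integrating the radial variable, $\Exp_{\bw\sim p}\|\bw\|^k \le e^{\alpha+b}\int_{\R^m}\|\bw\|^k e^{-a\|\bw\|}\rmd\bw \propto \int_0^\infty r^{k+m-1}e^{-ar}\rmd r < \infty$ for every $k$, so $p$ has finite moments; the second assertion is then exactly \Cref{lem:poly_growth}, since a polynomial-growth $g$ obeys $\|g(\bw)\|\le a'(1+\|\bw\|^{b'})$ and hence $\Exp\|g(\bw)\|\le a'(1+\Exp\|\bw\|^{b'})<\infty$. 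Note this step uses only part (a) of the assumption, as claimed.

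For (b), polynomial growth of $\nabla\psi$ follows from the fundamental theorem of calculus along the segment from $0$ to $\bw$: $\nabla\psi(\bw)=\nabla\psi(0)+\int_0^1 \nablatwo\psi(t\bw)\,\bw\,\rmd t$, whence $\|\nabla\psi(\bw)\|\le\|\nabla\psi(0)\|+\|\bw\|\sup_{t\in[0,1]}\|\nablatwo\psi(t\bw)\|_{\op}$, and the supremum is polynomial in $\|\bw\|$ by \Cref{asm:w_assumptions}(b). For the mean-zero claim I would integrate by parts, writing $\Exp[\nabla\psi(\bw)]=\int\nabla\psi(\bw)p(\bw)\rmd\bw=-\int\nabla p(\bw)\rmd\bw$; each coordinate integral vanishes since, by Fubini, $\int_{\R}\partial_{w_i}p\,\rmd w_i=0$ once the boundary terms are discarded, and the exponential tail from (a) (which forces $p$, and therefore $\nabla\psi\cdot p$, to vanish at infinity) is precisely what legitimizes dropping those terms.

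The crux is (c), which is a second-order control of the density. I would invoke Taylor's theorem with remainder for the smooth map $\Delta\mapsto p(\bw+\Delta)$: for some $s\in[0,1]$,
\[
p(\bw+\Delta)-p(\bw)-\langle\nabla p(\bw),\Delta\rangle=\tfrac12\Delta^\intercal\nablatwo p(\bw+s\Delta)\,\Delta ,
\]
and the quantity in (c) is, up to the sign convention of its linear term, exactly this remainder, since $\langle\nabla p(\bw),\Delta\rangle=p(\bw)\langle-\nabla\psi(\bw),\Delta\rangle$. Substituting $\nablatwo p=(\nabla\psi\,\nabla\psi^\intercal-\nablatwo\psi)p$ bounds the remainder by $\tfrac12\|\Delta\|^2\big(\|\nabla\psi\|^2+\|\nablatwo\psi\|_{\op}\big)(\bw+s\Delta)\cdot p(\bw+s\Delta)$, whose leading factor is polynomial in $\bw$ uniformly over $\|\Delta\|\le B$ by (b) and \Cref{asm:w_assumptions}(b). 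The main obstacle is the remaining step: dominating $p(\bw+s\Delta)$ by $p(\bw)$ up to a polynomial factor $\tilde g(\bw)$. I would control the log-ratio $\psi(\bw)-\psi(\bw+s\Delta)$ along the segment via the polynomial growth of $\nabla\psi$ and the bounded displacement $\|\Delta\|\le B$. This is genuinely delicate, because the naive log-ratio estimate only produces an exponential-in-$\|\bw\|$ factor; the exponential lower bound on $\psi$ in \Cref{asm:w_assumptions}(a) is exactly the ingredient that absorbs this when the estimate is ultimately used under an integral against a polynomial-growth $\gout$, so that the resulting dominating function is still integrable against $p$.

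Finally, (d) is a direct corollary of the earlier parts. The integrand $\nabla\psi(\bw)\,g(\bw,\bx)$ has polynomial growth jointly in $(\bw,\bx)$, being a product of the polynomial-growth $\nabla\psi$ from (b) and the polynomial-growth $g$. Since $p$ has finite moments by (a), \Cref{lem:poly_growth_marginalization} applies verbatim and shows that $\bx\mapsto\Exp_{\bw}[\nabla\psi(\bw)\,g(\bw,\bx)]$ is well defined and has polynomial growth in $\bx$, completing the lemma.
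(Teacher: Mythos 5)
Your parts (a), (b), and (d) follow the paper's proof essentially verbatim: (a) via the exponential tail bound from \Cref{asm:w_assumptions}(a) together with \Cref{lem:poly_growth}, (b) via the fundamental theorem of calculus applied to $\nabla \psi$, and (d) via \Cref{lem:poly_growth_marginalization}. On (b) you are in fact more complete than the paper, which asserts $\Exp[\nabla \psi(\bw)]=0$ but only proves the polynomial-growth half; your integration by parts, $\Exp[\nabla\psi(\bw)] = -\int \nabla p(\bw)\,\rmd\bw = 0$ with boundary terms killed by the exponential decay from part (a), supplies the missing half.

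Part (c) is where your argument has a genuine gap, and you have put your finger on exactly the step that fails. The Taylor remainder is controlled by $\tfrac12\|\Delta\|^2\, p(\bw+s\Delta)\,\|M(\bw+s\Delta)\|_{\op}$ with $M = \nabla\psi\nabla\psi^\intercal - \nablatwo\psi$, and to reach the claimed form $\|\Delta\|^2 p(\bw)\tilde g(\bw)$ you must dominate the ratio $p(\bw+s\Delta)/p(\bw) = e^{\psi(\bw)-\psi(\bw+s\Delta)}$ by a polynomial in $\|\bw\|$ uniformly over $\|\Delta\|\le B$. As you observe, this cannot be done: already for the Gaussian, $\Delta = -B\bw/\|\bw\|$ gives $p(\bw+\Delta)/p(\bw) = e^{(2B\|\bw\|-B^2)/2\sigma^2}$, and a direct computation shows the remainder itself (not merely its upper bound) divided by $\|\Delta\|^2 p(\bw)$ grows exponentially in $\|\bw\|$, so no polynomial-growth $\tilde g$ satisfying the displayed inequality exists. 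Be aware that the paper's own proof elides the same step: it writes the Taylor bound as $\|\Delta\|^2\, p(\bw)\, M(\bw+t\Delta)$, silently substituting $p(\bw)$ for $p(\bw+t\Delta)$, which is precisely the unjustified move. Your suggested repair --- accepting an exponential-growth $\tilde g$ and absorbing it only when the bound is integrated against $p(\bw)\,\gout(\bw+\gin(\btheta))$, where the Gaussian tail dominates --- is the right way to rescue the downstream use in \Cref{prop:reinforce}, but it establishes a weaker statement (integrability of the remainder against $p$) than (c) as written; to make the lemma correct one should either restate (c) with the exponential factor carried explicitly and verify integrability at the point of use, or strengthen \Cref{asm:w_assumptions} so that the density ratio is polynomially controlled.
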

    \begin{proof} Since $p(\bw)$ decays exponentially in $\bw$, $p$ has finite moments. Thus, part (a) follows from \Cref{lem:poly_growth}. Part (b) follows from the fundamental theorem of calculus:
    \begin{align*}
    \|\nabla \psi(\bw)\| = \left\|\int_{0}^1  \nablatwo \psi(t\bw) \cdot \bw \rmd t\right\| \le  \|\nabla \psi(0)\| + \|\bw\|\max_{t \in [0,1]}\|\nablatwo \psi(t\bw)\|_{\op},
    \end{align*}
    the upper bound on which has polynomial growth since $\|\nablatwo \psi(t\bw)\|_{\op}$ does. 

    To prove part (c), we have that since $p(\bw) = e^{\alpha - \psi(\bw)}$ for $\psi$ differentiable 
    \begin{align*}
    \nabla p(\bw) =  - p(\bw) \cdot \nabla \psi(\bw), \quad \nablatwo p(\bw) = p(\bw) \cdot \underbrace{\left(\nabla \psi(\bw)\nabla \psi(\bw)^\intercal - \nablatwo \psi (\bw)\right)}_{:=M(\bw)}.
    \end{align*}
    Note that, by part (b) and the map that $\nablatwo \psi (\bw)$ has polynomial growth, $M(\bw)$ has polynomial growth. Therefore, for any bound $B > 0$, the function $\tilde{g}(\bw) := \sup_{\|\Delta\| \le B}M(\bw+\Delta)$ has polynomial growth. Finally, by the intermediate value theorem and for any $\Delta: \|\Delta\| \le B$,
    \begin{align*}
     |p(\bw) - p(\bw+\Delta) - p(\bw) \langle \nabla \psi(\bw), \Delta\rangle |   
     &= |p(\bw) - p(\bw+\Delta) - \langle \nabla p(\bw), \Delta\rangle |  \\
     &\le \|\Delta\|^2 p(\bw) M(\bw+t\Delta), \quad \text{ for some } t \in [0,1]\\
     &\le \|\Delta\|^2 p(\bw) \tilde{g}(\bw), 
    \end{align*}
    as needed.

    Part (d) is a consequence of part (b) and \Cref{lem:poly_growth_marginalization}.
    \end{proof}

    \begin{proof}[Proof of \Cref{prop:reinforce}] 

    Consider the non-parametric case. Since $\gout(\cdot)$ has polynomial growth, one can verify that $\bw \mapsto f(\btheta,\bw)$ has polynomial growth. Hence the expectation $F(\btheta)$ is well-defined by \Cref{lem:w_facts}, part (a). We now prove that $F(\btheta)$ is differentiable.  Fix a $\btheta$, and let $\|\btheta' - \btheta\| \le \epsilon$. 
    \begin{align*}
    F(\btheta) - F(\btheta') &= \int \left(f(\btheta,\bw) - f(\btheta',\bw)\right) p(\bw) \rmd \bw\\
    &= \int \left(\gout(\gin(\btheta) + \bw) - \gout(\gin(\btheta')+\bw) \right) p(\bw) \rmd \bw\\
    &= \int \gout(\underbrace{\gin(\btheta) + \bw}_{\bw_1})p(\bw) \rmd \bw - \int \gout(\underbrace{\gin(\btheta')+\bw)}_{\bw_2}  p(\bw) \rmd \bw\\
    &= \int \gout(\bw_1)p(\bw_1 - \gin(\btheta))\rmd \bw_1 -  \int \gout(\bw_2)p(\bw_2 - \gin(\btheta'))\rmd \bw_2\\
     &= \int \left(p(\bw - \gin(\btheta)) - p(\bw - \gin(\btheta'))\right)\cdot \gout(\bw)\rmd \bw\\
     &= \int \left(p(\bw ) - p(\bw + \gin(\btheta)- \gin(\btheta'))\right)\cdot \gout(\bw+\gin(\btheta))\rmd \bw\\
     &= \int \left(p(\bw ) - p(\bw + \gin(\btheta)- \gin(\btheta'))\right)\cdot f(\btheta,\bw)\rmd \bw\\
     &= \Exp_{\bw\sim p} \left[\left(\frac{p(\bw) - p(\bw + \gin(\btheta)- \gin(\btheta'))}{p(\bw)}\right)\cdot f(\btheta,\bw)\right].
    \end{align*}
    Setting $\Delta = \gin(\btheta)- \gin(\btheta')$, \Cref{lem:w_facts} implies that the remainder term enjoyes the following property.
    \begin{align*}
    R(\bw) := p(\bw) - p(\bw+\Delta) - p(\bw) \langle - \nabla \psi(\bw), \Delta\rangle \text{ satisifies } |R(\bw)| \le \|\Delta\|^2 \tilde{g}(\bw) p(\bw),
    \end{align*}
    where $\tilde{g}(\bw)$ has polynomial growth, and thus  $\tilde{g}(\bw)\cdot f(\btheta,\bw)$  integrable under $p$.  Thus, there exists a constant $C_w > 0$ such that
    \begin{align*}
    \left|\left(F(\btheta) - F(\btheta')\right) - \Exp_{\bw\sim p} \left[\langle - \nabla \psi(\bw), \Delta\rangle  f(\btheta,\bw)\right]\right| \le C_w \|\Delta\|^2,
    \end{align*}
    where the integral on the right hand side exists because $\psi(\bw)$ and $f(\btheta,\bw)$ have polynomial growth. Simplying and dividing by $\|\btheta - \btheta'\|$ and substituting again$\Delta = \gin(\btheta)- \gin(\btheta')$,
    \begin{align}\label{eq:F_diff_eq}
    \left|\frac{F(\btheta) - F(\btheta') - \left\langle \gin(\btheta') - \gin(\btheta), \Exp_{\bw} \left[ \nabla \psi(\bw)\cdot f(\btheta,\bw)\right] \right\rangle }{\|\btheta - \btheta'\|}\right|  \le C_w \frac{\| \gin(\btheta') - \gin(\btheta)\|^2}{\|\btheta - \btheta'\|}.
    \end{align}
    The result now follows from taking $\|\btheta - \btheta'\| \to 0$ and using differentiability of $\gout(\cdot)$ concludes.

    \paragraph{Parametrized case.} Now consider the parametrized case, and define $F(\btheta ; \bx) = \Exp_{\bw \sim p}f(\btheta,\bw;\bx)$. Then the analogue of \Cref{eq:F_diff_eq} holds pointwise for each $\bx$:
    \begin{align}\label{eq:F_diff_eq}
    \left|\frac{F(\btheta;\bx) - F(\btheta';\bx) - \left\langle \gin(\btheta';\bx) - \gin(\btheta;\bx), \Exp_{\bw} \left[ \nabla \psi(\bw;\bx)\cdot f(\btheta,\bw;\bx)\right] \right\rangle }{\|\btheta - \btheta'\|}\right|  \le C_w \frac{\| \gin(\btheta';\bx) - \gin(\btheta;\bx)\|^2}{\|\btheta - \btheta'\|}.
    \end{align}
    Using \Cref{eq:par_gin_condition}, the triangle inequality and Cauchy Schwartz, we obtain, for some integrable function $\tilde{g}$ with polynomial growth,
    \begin{align*}
    &\left|\frac{F(\btheta;\bx) - F(\btheta';\bx) - \left\langle\rmD \gin(\btheta;\bx)(\btheta'-\btheta), \Exp_{\bw} \left[ \nabla \psi(\bw;\bx)\cdot f(\btheta,\bw;\bx)\right] \right\rangle }{\|\btheta - \btheta'\|}\right|  \\
    &\qquad\le  \tilde{g}(\bx) \cdot \|\btheta-\btheta'\| \cdot\Exp_{\bw} \left\| \nabla \psi(\bw)\cdot f(\btheta,\bw;\bx)\right\|  +    C_w \frac{\| \gout(\btheta';\bx) - \gout(\btheta;\bx)\|^2}{\|\btheta - \btheta'\|}.
    \end{align*}
    Applying \Cref{eq:par_gin_condition} again, we can bound
    {\small
    \begin{align*}
    \| \gout(\btheta';\bx) - \gout(\btheta;\bx)\|^2 &= \left\| \left(\gout(\btheta';\bx) - \gout(\btheta;\bx) - \|\btheta-\btheta'\|\nabla_{\btheta}\gout(\btheta;\bx)\right) + \|\btheta-\btheta'\|\nabla_{\btheta}\gout(\btheta;\bx)\right\|^2 \\
    &= 2\|\btheta-\btheta'\|^2\|\nabla_{\btheta}\gout(\btheta;\bx)\|^2  + 2\left\| \gout(\btheta';\bx) - \gout(\btheta;\bx) - \|\btheta-\btheta'\|\nabla_{\btheta}\gout(\btheta;\bx) \right\|^2\\
    &\le 2\|\btheta-\btheta'\|^2\|\nabla_{\btheta}\gout(\btheta;\bx)\|^2  + 2 \tilde{g}(\bx)^2\|\btheta-\btheta'\|^4.
    \end{align*}}
    Thus,
    {\small
    \begin{align*}
    &\left|\frac{F(\btheta;\bx) - F(\btheta';\bx) - \left\langle \rmD \gin(\btheta;\bx)(\btheta'-\btheta), \Exp_{\bw} \left[ \nabla \psi(\bw;\bx)\cdot f(\btheta,\bw;\bx)\right] \right\rangle }{\|\btheta - \btheta'\|}\right|   \\
    &\qquad\le  \tilde{g}(\bx) \cdot \|\btheta-\btheta'\| \cdot\left\|\Exp_{\bw\sim p} )\nabla \psi(\bw)\cdot f(\btheta,\bw;\bx)\right\|  +    2C_w\|\btheta-\btheta'\|\|\nabla_{\btheta}\gout(\btheta;\bx)\|^2  + 2 C_w\tilde{g}(\bx)^2\|\btheta-\btheta'\|^3.
    \end{align*}}
    To conclude, observe that, by assumption, $\tilde{g}(\bx)$, $\bx \mapsto \nabla_{\btheta} \gout(\btheta;\bx)$, and (by  \Cref{lem:w_facts} part (d)) $\bx \mapsto \Exp_{\bw\sim p} \left[ \nabla_{\bw} \psi(\bw)\cdot f(\btheta,\bw;\bx)\right]$ all have polynomial growth. Thus, the fact that $\rho$ has finite moments ensures that all terms have expectations under $\bx \sim \rho$, and thus, 
    {\small
    \begin{align*}
    &\left|\frac{\Exp_{\bx \sim \rho}[F(\btheta;\bx) - F(\btheta';\bx)] - \Exp_{\bx \sim \rho}[\left\langle\rmD \gin(\btheta;\bx)(\btheta'-\btheta), 
    \Exp_{\bw} \left[ \nabla \psi(\bw;\bx)\cdot f(\btheta,\bw;\bx)\right] \right\rangle ]}{\|\btheta - \btheta'\|}\right| \\
    &\quad\le  \Exp_{\bx \sim \rho}\left|\frac{F(\btheta;\bx) - F(\btheta';\bx) - \left\langle \rmD \gin(\btheta;\bx)(\btheta'-\btheta), \Exp_{\bw} \left[ \nabla \psi(\bw;\bx)\cdot f(\btheta,\bw;\bx)\right] \right\rangle }{\|\btheta - \btheta'\|}\right|\\
    &\quad\le  \Exp_{\bx \sim \rho} \left[\tilde{g}(\bx) \cdot \|\btheta-\btheta'\| \cdot\left\|\Exp_{\bw\sim p}  \nabla \psi(\bw)\cdot f(\btheta,\bw;\bx)\right\|  +    2C_w\|\btheta-\btheta'\|\|\nabla_{\btheta}\gout(\btheta;\bx)\|^2  + 2 C_w\tilde{g}(\bx)^2\|\btheta-\btheta'\|^3\right]. 
    \end{align*}}
    Again, since all expectations are finite, the all terms on the last line above tend to zero as $\|\btheta-\btheta'\| \to 0$, so that 
    \begin{align*}
    \nabla_{\btheta} F(\btheta) &= \nabla_{\btheta} \left(\Exp_{\bx \sim \rho}[F(\btheta;\bx)]\right)\\
    &= \Exp_{\bx \sim \rho}\left[ \rmD \gin(\btheta;\bx)^\intercal \Exp_{\bw\sim p} \left[ \nabla_{\bw} \psi(\bw)\cdot f(\btheta,\bw;\bx)\right]\right]\\
    &= \Exp_{\bx \sim \rho,\bw \sim p}\left[ \nabla_{\btheta} \rmD \gin(\btheta;\bx)^\intercal   \nabla_{\bw} \psi(\bw)\cdot f(\btheta,\bw;\bx)\right]
    \end{align*}
    where in the last step, measurability and polynomial-growth conditions allow the application of Fubini's theorem. 
    \end{proof}

\subsubsection{Proof of \Cref{prop:reinforce_rl}}
    
    We prove a slightly different proof from that of the standard REINFORCE lemma to accommodate the fact that the state space is continuous, but the distribution over states may not have a density with respect to the Lebesgue measure. Instead, we adopt an approach based on the performance difference lemma \citep[Lemma 5.2.1]{kakade2003sample}.

    To begin, define the expected cost to go function and expected costs
    \begin{align}
    \bar{V}_h (\bx_h,\btheta) &= \Exp_{\bw_{h:H}\iidsim p} V(\bx_h,\bw_{h:H},\btheta) \tag{a}\\
    {V}_{h,\btheta}^+ (\btheta',\bw_h ; \bx_h) &= \bar{V}_{h+1}( \phi(\bx_h,\pi_h(\bx_h,\btheta') + \bw_t),\btheta)\tag{b.1}\\
    \bar{V}_{h,\btheta}^+ (\btheta' ; \bx_h ) &= \Exp_{\bw_{h}\sim p} {V}_h^+ (\btheta',\bw_h; \bx_h,\btheta ). \tag{b.2}\\
    c_h (\btheta,\bw_h ; \bx_h) &= c_h(\bx_h,\pi_h(\bx_h,\btheta) + \bw_t),\quad  \tag{c.1}\\
    \bar{c}_h (\btheta ; \bx_h) &= \Exp_{\bw_{h}\sim p} c_h (\btheta,\bw_h ; \bx_h) \tag{c.2}
    \end{align}
    which describe (a) the expected cost-to-go under $\bx_h,\btheta$, and (b) the expected cost-to-go from the next stage $h$ after starting in state $\bx_h$, acting according to $\btheta'$ in stage $h$, and subsequently acting according to $\btheta$, and (c)  expected cost in state $\bx_h$ under policy $\theta$ and. By the well known performance-difference lemma, we have
    \begin{align}
    &F(\btheta) - F(\btheta') \\
     &= \Exp_{\bx_1 \sim \rho} \left[\bar{V}_1 (\bx_1,\btheta) - \bar{V}_1 (\bx_1,\btheta')\right]\nonumber\\
    &= \sum_{h=1}^H \Exp_{\btheta;h}[\left(\bar{c}_h (\btheta ; \bx_h) - \bar{c}_h (\btheta' ; \bx_h)\right) + \left(\bar{V}_{h,\btheta}^+ (\btheta ; \bx_h) - \bar{V}_{h,\btheta}^+ (\btheta' ;\bx_h  )\right)]\nonumber\\
    &= \sum_{h=1}^H \Exp_{\btheta;h}\Exp_{\bw_h \sim p}[c_h (\btheta,\bw_h ; \bx_h) - c_h (\btheta',\bw_h h \bx_h)] + \Exp_{\bx_h \sim \btheta;h}\Exp_{\bw_h \sim p}[V_{h,\btheta}^+ (\btheta,\bw_h ; \bx_h) - V_{h,\btheta}^+ (\btheta',\bw_h ; \bx_h)]\nonumber\\
    &= \sum_{h=1}^H (F_{h,\btheta;c}(\btheta)-F_{h,\btheta;c}(\btheta')) + (F_{h,\btheta;V}(\btheta) - F_{h,\btheta;c}(\btheta')) \label{eq:FcFV}
    \end{align}
    where $\Exp_{\btheta,h}$ denotes  expectations over $\bx_h$ under the dynamics
    \begin{align*}
    \bx_1 \sim \rho, \quad \bx_{t+1} = \phi(\bx_t,\bu_t), \quad \bu_t = \pi(\bx_t,\btheta) + \bw_t,
    \end{align*}
    and where we define
    \begin{align*}
    F_{h,\btheta;c}(\btheta') := \Exp_{\bx_h \sim \btheta;h}\Exp_{\bw_h \sim p}c_h (\btheta',\bw_h; \bx_h)],\quad  F_{h,\btheta;V}(\btheta') := \Exp_{\bx_h \sim \btheta;h}\Exp_{\bw_h \sim p}V_{h,\btheta}^+ (\btheta',\bw_h \mid \bx_h)].
    \end{align*}
    Hence, if the functions $F_{h,\btheta;c}(\btheta')$ and $F_{h,\btheta;c}(\btheta')$ are differentiable at $\btheta' = \btheta$ for $h = 1,2,\dots,H$, \Cref{eq:FcFV} implies
    \begin{align}
    \nabla_{\btheta} F(\btheta) = \sum_{h=1}^H \left(\nabla_{\btheta'} F_{h,\btheta;c}(\btheta') + \nabla_{\btheta'} F_{h,\btheta;V}(\btheta')\right) \big{|}_{\btheta' = \btheta}.  \label{eq:nab_b_F_V_c}
    \end{align}
    We establish differentiability and compute the derivatives by appealing to \Cref{prop:reinforce}. First, we establish a couple of useful claims. 
    \begin{claim}\label{claim:dist_all_moments} The marginal distribution over $\bx_h$ under $\Exp_{h;\btheta}$ has all moments.
    \end{claim}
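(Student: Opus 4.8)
The plan is to induct on the stage index $h$, exploiting the closure of polynomial growth under the dynamics together with the polynomial-growth machinery already developed. The key reformulation I would use is that a (Lebesgue-complete) distribution has all moments if and only if $\Exp\|g(\cdot)\|<\infty$ for every polynomial-growth map $g$: one direction is exactly \Cref{lem:poly_growth}, and the other follows because $\bz \mapsto \|\bz\|^a$ is of polynomial growth for every $a>0$. The base case $h=1$ is then immediate, since $\bx_1 \sim \rho$ has finite moments by the definition of a benign planning problem.

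For the inductive step I would collapse one step of the closed-loop dynamics into a single transition map
\[
\Psi_h(\bx,\bw) := \phi\big(\bx,\ \pi_h(\bx,\btheta) + \bw\big),
\]
so that along the trajectory $\bx_{h+1} = \Psi_h(\bx_h,\bw_h)$. The central structural fact to verify is that $(\bx,\bw) \mapsto \Psi_h(\bx,\bw)$ has polynomial growth in $(\bx,\bw)$: the dynamics $\phi$ have polynomial growth by the benign-planning assumption, the policy $\pi_h(\cdot,\btheta)$ has polynomial growth in its state argument, and the class of polynomial-growth maps is closed under addition and composition, so $\Psi_h$ inherits polynomial growth (and is Borel measurable, since $\phi$ and $\pi_h$ are). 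Given the inductive hypothesis that $\bx_h$ has all moments, together with $\bw_h \sim p$ having all moments by \Cref{lem:w_facts}(a) and the independence $\bx_h \perp \bw_h$ (as $\bx_h$ is a measurable function of $(\bx_1,\bw_{1:h-1})$ only), the joint vector $(\bx_h,\bw_h)$ has finite moments. Applying \Cref{lem:poly_growth} to the polynomial-growth function $(\bx,\bw) \mapsto \|\Psi_h(\bx,\bw)\|^a$ then yields $\Exp\|\bx_{h+1}\|^a < \infty$ for every $a>0$, closing the induction. (The companion fact \Cref{lem:poly_growth_marginalization} could alternatively be used to track polynomial growth after marginalizing out $\bw_h$, if one prefers to carry the bound as a function of a residual argument.)

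I expect the one genuinely delicate point to be upgrading the regularity of the policy to polynomial growth of the map $\bx \mapsto \pi_h(\bx,\btheta)$ for a \emph{fixed} parameter $\btheta$, since the benign-planning hypothesis is phrased in terms of derivatives in the parameter rather than growth in the state. This is where the structural assumptions on the admissible policy classes are invoked: for the parametrizations actually considered (open-loop inputs $\pi(\bx_h,\btheta)=\btheta_h$, which are constant in $\bx$, and the linear feedback policies of the experiments), $\pi_h(\cdot,\btheta)$ is manifestly of polynomial growth in $\bx$, so the composition in $\Psi_h$ is controlled. Every other step is purely formal manipulation of polynomial-growth bounds and the law-of-large-numbers-compatible notion of expectation fixed in \Cref{app:formal_prelim}.
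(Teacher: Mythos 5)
Your proof is correct and takes essentially the same route as the paper: the paper's one-line argument is that $\bx_h$ is a polynomial-growth function of $(\bx_1,\bw_{1:h-1})$, which have all moments, and your induction on $h$ via the one-step map $\Psi_h$ is just the explicit, step-by-step form of that composition argument. The subtlety you flag about polynomial growth of $\bx \mapsto \pi_h(\bx,\btheta)$ for fixed $\btheta$ is a genuine one — the paper's proof attributes everything to the growth of $\phi$ and its benign-planning conditions only constrain the policy's dependence on the parameter, not its growth in the state — so your treatment is, if anything, more careful than the original.
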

    \begin{proof} Observe that the polynomial growth conditions on the dynamics map $\phi(\cdot,\cdot)$ imply that as a function, $\bx_h = \bx_h(\bx_1,\bw_{1:h-1})$, $\bx_h$ has polynomial growth in $\bx_h(\bx_1,\bw_{1:h-1})$. Thus, since the distributions over $\bx_1$ and $\bw_{1:h-1}$ have all moments, so does the distribution over $\bx_h$. 
    \end{proof}
    \begin{claim}\label{claim:c_claim} The function $\btheta \mapsto c_h (\btheta,\bw_h ; \bx_h) = c_h(\bx_h,\pi(\bx_h;\btheta) + \bw_t)$ satisfies benign parametrized separability. 
    \end{claim}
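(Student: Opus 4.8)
The plan is to exhibit an explicit separable decomposition and then verify each regularity condition in the definition of parametrized benign separability. The natural choice takes the inner map to be the policy and the outer map to be the cost: set $\gin(\btheta;\bx_h) := \pi(\bx_h,\btheta)$ and $\gout(\bz;\bx_h) := c_h(\bx_h,\bz)$, so that
\begin{align*}
\gout(\gin(\btheta;\bx_h) + \bw_h;\bx_h) = c_h(\bx_h,\pi(\bx_h,\btheta) + \bw_h) = c_h(\btheta,\bw_h;\bx_h),
\end{align*}
which is exactly the claimed form (note that $\pi(\bx_h,\btheta)$ and $\bw_h$ both lie in $\R^m$, so the sum is well-defined). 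Since $c_h$ and $\pi$ are Borel measurable as part of a benign planning problem, both $\gin$ and $\gout$ are Lebesgue measurable, and the differentiability of $\btheta \mapsto \gin(\btheta;\bx_h)$ for each fixed $\bx_h$ is the twice-differentiability of $\pi$ in its parameter argument assumed in the definition of a benign planning problem.

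It then remains to verify the three growth conditions. Condition (a), that $(\bz,\bx) \mapsto \gout(\bz;\bx) = c_h(\bx,\bz)$ has polynomial growth, is immediate from the polynomial-growth assumption on $c_h$. For condition (b), I would show that $\bx \mapsto \rmD_\btheta \gin(\btheta;\bx) = \rmD_\btheta \pi(\bx,\btheta)$ has polynomial growth in $\bx$ for each fixed $\btheta$; this follows by writing the first derivative at $\btheta$ as the first derivative at a fixed base point plus the integral of the second derivative along the connecting segment (the same integration argument used in the proof of \Cref{lem:w_facts}(b)), and invoking the assumption that the second-order parameter-derivative of $\pi$ has polynomial growth in $\bx$.

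Finally, for condition (c)---the second-order Taylor bound \Cref{eq:par_gin_condition} on $\gin$---I would invoke the remark immediately following the definition, which states that \Cref{eq:par_gin_condition} holds whenever $\gin$ has a second derivative of polynomial growth. Concretely, Taylor's theorem with integral remainder gives
\begin{align*}
\gin(\btheta+\Delta;\bx) - \gin(\btheta;\bx) - \rmD\gin(\btheta;\bx)\,\Delta = \int_0^1 (1-t)\,\rmD^2_\btheta \pi(\bx,\btheta + t\Delta)(\Delta,\Delta)\,\rmd t,
\end{align*}
whose norm is at most $\tfrac12\|\Delta\|^2 \sup_{t\in[0,1]}\|\rmD^2_\btheta \pi(\bx,\btheta+t\Delta)\|$. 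Taking $\tilde g(\bx) := \tfrac12\sup_{\|\Delta'\|\le\epsilon_0}\|\rmD^2_\btheta \pi(\bx,\btheta+\Delta')\|$ yields a polynomial-growth bound, since the Hessian of $\pi$ has polynomial growth in $\bx$, completing the verification. The only mildly delicate step is establishing polynomial growth of the first derivative in (b) from the stated second-derivative hypothesis, but this is precisely the integration argument already used elsewhere in this appendix, so I expect no genuine obstruction.
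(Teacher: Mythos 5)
Your proposal matches the paper's proof essentially verbatim: the same decomposition $\gin(\btheta;\bx_h)=\pi(\bx_h,\btheta)$, $\gout(\bz;\bx_h)=c_h(\bx_h,\bz)$, with condition (a) from the polynomial growth of $c_h$ and conditions (b)--(c) deduced from the polynomial growth of the second-order $\btheta$-differential of $\pi$ via the integration/Taylor argument. You merely spell out the Taylor remainder and the first-derivative growth step in more detail than the paper does; the approach is identical.
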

    \begin{proof} Take $\gout(\cdot;\bx_h) = c_h(\bx_h,\cdot)$ and $\gin = \pi(\bx_h,\btheta)$. Since $c_h(\cdot,\cdot)$ has polynomial growth, the requisite growth condition on $\gout(\cdot,\cdot)$ holds. The polynomial growth in $\bx$ of the second-order differentials of $\btheta \mapsto \pi_h(\bx,\btheta)$ implies that the first order differential of $\btheta \mapsto \pi_h(\bx,\btheta)$ has polynomial growth in $\bx$, and that $\gin$ also satisfies \Cref{eq:par_gin_condition} by Taylor's theorem. Hence, $\gout,\gin$ satisfy the requisite conditions. 
    \end{proof}

    \begin{claim} The function $\btheta \mapsto V_{h,\btheta_0}^+ (\btheta,\bw_h ; \bx_h) = \bar{V}_{h+1}( \phi(\bx_h,\pi_h(\bx_h,\btheta') + \bw_t),\btheta_0)$ satisfies benign parametrized separability. 
    \end{claim}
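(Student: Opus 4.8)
The plan is to mimic the proof of \Cref{claim:c_claim}, peeling off the policy $\pi_h$ as the inner map and absorbing everything downstream into the outer map. Concretely, I would set $\gin(\btheta;\bx_h) = \pi_h(\bx_h,\btheta)$ and $\gout(\bz;\bx_h) = \bar{V}_{h+1}(\phi(\bx_h,\bz),\btheta_0)$, so that $\gout(\gin(\btheta;\bx_h)+\bw_h;\bx_h) = \bar{V}_{h+1}(\phi(\bx_h,\pi_h(\bx_h,\btheta)+\bw_h),\btheta_0)$ reproduces the target function (here $\btheta_0$ is held fixed and the variable of differentiation enters only through $\pi_h$). The conditions on the inner map $\gin$ are then literally the ones verified in \Cref{claim:c_claim}: twice-differentiability of $\btheta \mapsto \pi_h(\bx_h,\btheta)$ with polynomially-growing second derivatives (part of the benign planning problem assumption) gives both that $\bx_h \mapsto \rmD_{\btheta}\gin(\btheta;\bx_h)$ has polynomial growth and, via Taylor's theorem, the quadratic remainder bound \Cref{eq:par_gin_condition}.

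The only genuinely new ingredient is the polynomial growth of the outer map $(\bz,\bx_h)\mapsto \bar{V}_{h+1}(\phi(\bx_h,\bz),\btheta_0)$. Since $\phi$ has polynomial growth and compositions of polynomial-growth maps again have polynomial growth, it suffices to show that the expected cost-to-go $\bx \mapsto \bar{V}_{h+1}(\bx,\btheta_0)$ has polynomial growth in $\bx$. To see this, I would unroll the dynamics from state $\bx$: the raw cost-to-go $V_{h+1}(\bx,\bw_{h+1:H},\btheta_0)$ is a finite composition of the dynamics $\phi$, the policies $\pi_{h'}$, and the costs $c_{h'}$, each of which has polynomial growth; since a finite composition of polynomial-growth maps is again polynomial-growth, $V_{h+1}$ has polynomial growth in the joint argument $(\bx,\bw_{h+1:H})$. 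Applying \Cref{lem:poly_growth_marginalization}, which states that polynomial growth is preserved under marginalization of a finite-moment variable, to the integration over $\bw_{h+1:H}\sim p^{H-h}$ yields that $\bar{V}_{h+1}(\bx,\btheta_0)$ has polynomial growth in $\bx$, as needed.

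The main obstacle is the unrolling step: one must track that iterating the polynomial-growth bound through the $H-h$ time-steps does not break polynomiality. Because the horizon is finite, each composition only raises the degree of the polynomial bound by a bounded factor, so the resulting growth exponent stays finite; the clean way to organize this is a backward induction on $h'$ from $H$ down to $h+1$, carrying the inductive hypothesis that $\bar{V}_{h'}(\cdot,\btheta_0)$ has polynomial growth and using the polynomial growth of $\phi$, $\pi_{h'}$, $c_{h'}$, together with \Cref{lem:poly_growth_marginalization}, at each step. Once polynomial growth of $\gout$ is in hand, all three defining conditions of parametrized benign separability hold, completing the claim.
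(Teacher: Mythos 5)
Your proposal matches the paper's proof essentially step for step: the same decomposition $\gin(\btheta;\bx_h)=\pi_h(\bx_h,\btheta)$, $\gout(\bz;\bx_h)=\bar V_{h+1}(\phi(\bx_h,\bz),\btheta_0)$, the same appeal to \Cref{claim:c_claim} for the inner map, and the same unroll-then-marginalize argument (via \Cref{lem:poly_growth_marginalization}) for polynomial growth of the outer map. The explicit backward induction you sketch is just a cleaner write-up of the unrolling step the paper handles by reference to \Cref{claim:dist_all_moments}.
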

    \begin{proof} Take $\gout(\bu;\bx_h) = \bar{V}_{h+1}( \phi(\bx_h,\bu),\btheta_0)$ and $\gin = \pi_h(\bx_h,\btheta)$. As shown in \Cref{claim:c_claim}, $\gin$ satisifes the requisite conditions for  benign parametrized separability. To conclude, it suffices to show that $(\bu,\bx_h) \mapsto \bar{V}_{h+1}( \phi(\bx_h,\bu),\btheta_0)$ has polynomial growth. By \Cref{lem:poly_growth_marginalization}, it suffices to show that
    \begin{align*}
    (\bu,\bx_h,\bw_{h+1:H}) \mapsto V_{h+1}( \phi(\bx_h,\bu),\bw_{h+1:H}\btheta_0) 
    \end{align*}
    has polynomial growth. This holds since we have
    \begin{align*}
    V_{h+1}( \phi(\bx_h,\bu),\bw_{h+1:H},\btheta_0)  &= \sum_{i = h+1}^H c_i(\bx_i,\pi_h(\bx_i,\btheta_0)+\bw_i, \quad \text{s.t. } \bx_{i+1} = \phi(\bx_i,\pi_h(\bx_i,\btheta_0))+\bw_i.
    \end{align*}
    Just as in the proof of \Cref{claim:dist_all_moments}, $\bx_i, i > 1$ have polynomial growth when viewed as functions of $\bw_{h+1:H}$,  $\bx_h$ and $\bu$ (since the dynamics $\phi$) have polynomial growth. Since $c_i$ also have polynomial growth, we conclude $V_{h+1}( \phi(\bx_h,\bu),\bw_{h+1:H},\btheta_0) $ must as well. 
    \end{proof}
    The above three claims allow us to invoke \Cref{prop:reinforce}, so that
    \begin{align*}
    \nabla_{\btheta'} F_{h,\btheta;c}(\btheta') &= \Exp_{\bx_h \sim \btheta;h}\Exp_{\bw_h \sim p} \left[\rmD_{\btheta'} \pi_h(\bx_h,\btheta')^\intercal \nabla \psi(\bw_h) c_h (\btheta',\bw_h ;\bx_h)]\right]\\
    \nabla_{\btheta'} F_{h,\btheta;V}(\btheta') &= \Exp_{\bx_h \sim \btheta;h}\Exp_{\bw_h \sim p}\left[\rmD_{\btheta'} \pi_h(\bx_h,\btheta')^\intercal \nabla \psi(\bw_h) \cdot V_{h,\btheta}^+ (\btheta',\bw_h \mid \bx_h)\right].
    \end{align*}
    Therefore, from \Cref{eq:nab_b_F_V_c}, we conclude
    \begin{align*}
    \nabthet F(\btheta)&= \sum_{h=1}^H  \Exp_{\bx_h \sim \btheta;h}\Exp_{\bw_h \sim p}\left[\rmD_{\btheta'} \pi_h(\bx_h,\btheta)^\intercal \nabla \psi(\bw_h) \left( c_h (\btheta',\bw_h ;\bx_h) +  V_{h,\btheta}^+ (\btheta,\bw_h \mid \bx_h)\right)]\right].
    \end{align*}
    Thus, the various polynomial growth conditions imply we can use Fubini's theorem (and the definition of $V_{h,\btheta}^+$) , so that the above is equal to
    \begin{align*}
    \nabthet F(\btheta)&= \sum_{h=1}^H  \Exp_{\bx_1\sim \rho}\Exp_{\bw_{1:H} \sim p^H}\left[\rmD_{\btheta} \pi_h(\bx_h,\btheta)^\intercal \nabla \psi(\bw_h) \left( c_h (\bx_h,\bu_h) +  V_{h+1} (\bx_h,\bw_{h+1:H},\btheta)\right)]\right]\\
    &= \sum_{h=1}^H  \Exp_{\bx_1\sim \rho}\Exp_{\bw_{1:H} \sim p^H}\left[\rmD_{\btheta} \pi_h(\bx_h,\btheta)^\intercal \nabla \psi(\bw_h) \cdot V_{h} (\bx_h,\bw_{h:H},\btheta)]\right]\\
    &= \Exp_{\bx_1\sim \rho}\Exp_{\bw_{1:H} \sim p^H}\left[\sum_{h=1}^H  \rmD_{\btheta} \pi_h(\bx_h,\btheta)^\intercal \nabla \psi(\bw_h) \cdot V_{h} (\bx_h,\bw_{h:H},\btheta)\right].
\end{align*}
This completes the first part of the proof. Next, we simplify in the special case where $\Exp_{\bw \sim p}[\nabla \psi(\bw)] = 0$. Observe that the last line of the above display is equal to 
\begin{align*}
    & \Exp_{\bx_1\sim \rho}\Exp_{\bw_{1:H} \sim p^H}\left[\sum_{h=1}^H  \rmD_{\btheta} \pi_h(\bx_h,\btheta)^\intercal \nabla \psi(\bw_h) \cdot V_{1} (\bx_1,\bw_{1:H},\btheta)\right]\\
    &\qquad- \underbrace{\Exp_{\bx_1\sim \rho}\Exp_{\bw_{1:H} \sim p^H}\left[\sum_{h=1}^H  \rmD_{\btheta} \pi_h(\bx_h,\btheta)^\intercal \nabla \psi(\bw_h) \cdot \left(\sum_{i=1}^{h-1} c_i(\bx_i,\bu_i)\right) \right]}_{(b)},
    \end{align*}
    where in the last line, we use that $V_{1} (\bx_1,\bw_{1:H},\btheta) = \sum_{i=1}^{h-1} c_i(\bx_i,\bu_i) = V_{h} (\bx_h,\bw_{1:H},\btheta)$. It suffices to show term $(b)$ is zero. This follows since, for each $i < h$, we have
    \begin{align*}
     &\Exp_{\bx_1\sim \rho}\Exp_{\bw_{1:H} \sim p^H}\left[\rmD_{\btheta} \pi_h(\bx_h,\btheta)^\intercal \nabla \psi(\bw_h) c_i(\bx_i,\bu_i)\right] \\
     &\qquad= \Exp_{\bx_1\sim \rho}\Exp_{\bw_{1:h-1} \sim p^{h-1}}[\rmD_{\btheta} \pi_h(\bx_h,\btheta)^\intercal c_i(\bx_i,\bu_i)] \cdot  \Exp_{\bw_h\sim p}[\nabla \psi(\bw_h)] = 0.
    \end{align*}
    Here, we used that $\bw_h$ is independent of $\bx_1,\bw_{1:h-1}$, and the assumption that $\Exp_{\bw_h\sim p}[\nabla \psi(\bw_h)] = 0$.

\subsubsection{Proof of \Cref{prop:lipschitz_rl}}
\newcommand{\phitil}{\tilde{\phi}}
\newcommand{\ctil}{\tilde{c}}
\newcommand{\deltheta}{\boldsymbol{\delta}}


First, we establish almost-everywhere differentiability. 
Let $\phitil_h$ denote the transitions under noise $\bw$ and policy $\btheta$, defined as
\begin{align*}
\phitil_h(\bx,\bw,\btheta) &= \phi(\bx,\pi_h(\bx,\btheta)+\bw).
\end{align*}
Set $\Phi_h$ to be their composition
\begin{align*}
\Phi_h(\bx_1,\bw_{1:h-1},\btheta) = \phitil_{h-1}(\cdot,\bw_h,\btheta) \circ \phitil_{h-2}(\cdot,\bw_{h-1},\btheta) \circ \dots \circ \phitil_1(\bx_1,\bw_1,\btheta). 
\end{align*}
Notice that $\bx_h = \Phi_h(\bx_1,\bw_{1:h-1},\btheta)$, where $\bx_h$ is generated according to the dynamics  $\bx_{i+1} =  \phi(\bx_i, \pi_i(\bx_i,\btheta)+\bw_i)$. We now establish two key claims.
\begin{claim}\label{claim:value_Phi_diff_claim} Fix $(\bx_1,\bw_{1:H},\btheta)$. If  $\btheta' \mapsto \Phi_h(\bx_1,\bw_{1:h-1},\btheta')$ for all  is differentiable at $\btheta' = \btheta$ for all $h \in [H]$, then $\btheta' \mapsto V_1(\bx_1,\bw_{1:H},\btheta')$ is differentiable at $\btheta' = \btheta$. 
\end{claim}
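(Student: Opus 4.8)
The plan is to prove this purely pointwise claim by the chain rule for the strong (Fréchet) notion of differentiability fixed in \Cref{app:formal_prelim}, together with the elementary fact that a finite sum of maps differentiable at a point is again differentiable there. No growth or integrability conditions are needed for this step; beyond the standing hypothesis on the $\Phi_h$, the only regularity I will invoke is that, in a benign Lipschitz planning problem, $c_h$ and $\pi_h$ are everywhere-differentiable.

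First I would unfold the value function. With $\bx_h = \Phi_h(\bx_1,\bw_{1:h-1},\btheta')$ and $\bu_h = \pi_h(\bx_h,\btheta') + \bw_h$, we have $V_1(\bx_1,\bw_{1:H},\btheta') = \sum_{h=1}^H c_h(\bx_h,\bu_h)$. I rewrite the $h$-th summand as a composition $G_h(\Psi_h(\btheta'))$, where $\Psi_h(\btheta') := (\Phi_h(\bx_1,\bw_{1:h-1},\btheta'),\btheta')$ is the ``state-and-parameter'' map and $G_h(\bx,\btheta') := c_h(\bx,\pi_h(\bx,\btheta')+\bw_h)$ is the outer map, with $\bw_h$ held fixed.

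Next I would verify the two hypotheses of the chain rule at the base point $\btheta' = \btheta$. The inner map $\Psi_h$ is differentiable at $\btheta$: its first coordinate is differentiable at $\btheta$ by the standing hypothesis on $\Phi_h$, its second coordinate is the identity, and a map into a product is differentiable exactly when each coordinate is. The outer map $G_h$ is differentiable everywhere on $\R^n \times \R^d$, being the composition of $(\bx,\btheta') \mapsto (\bx,\pi_h(\bx,\btheta')+\bw_h)$ (everywhere-differentiable since $\pi_h$ is and translation by $\bw_h$ is affine) with $c_h$ (everywhere-differentiable). In particular $G_h$ is differentiable at the image point $\Psi_h(\btheta) = (\bx_h,\btheta)$. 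The chain rule (\citet[Chapter 9]{rudin1964principles}) then gives that $\btheta' \mapsto G_h(\Psi_h(\btheta'))$ is differentiable at $\btheta$, with derivative $\rmD G_h(\bx_h,\btheta)\,\rmD \Psi_h(\btheta)$.

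Finally, since $V_1(\bx_1,\bw_{1:H},\cdot) = \sum_{h=1}^H G_h \circ \Psi_h$ is a finite sum of maps each differentiable at $\btheta$, it is differentiable at $\btheta$, which is the claim. The argument is essentially bookkeeping, and the only point requiring care is that the hypothesis supplies differentiability of the inner maps $\Phi_h$ \emph{only} at the single point $\btheta$; I must therefore lean on the \emph{everywhere}-differentiability of $c_h$ and $\pi_h$ to guarantee that the outer map $G_h$ is differentiable precisely at the image point $\Psi_h(\btheta)$, which is where the chain rule needs it.
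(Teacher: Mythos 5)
Your argument is correct and is essentially the paper's own proof: the paper defines $\tilde{c}_h(\bx,\btheta;\bw) = c_h(\bx,\pi_h(\bx,\btheta)+\bw)$ (your $G_h$), observes it is everywhere differentiable because $c_h$ and $\pi_h$ are, and concludes by differentiability of the composition with $\btheta' \mapsto \Phi_h(\bx_1,\bw_{1:h-1},\btheta')$ at the single point $\btheta$, then sums over $h$. Your version only spells out the chain-rule bookkeeping (the product map $\Psi_h$ and the identity coordinate) more explicitly.
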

\begin{proof} Defining $\ctil_{h}(\bx,\btheta;\bw) = c_h(\bx,\pi_h(\bx,\btheta)+\bw)$, we have
\begin{align*}
V_1(\bx_1,\bw_{1:H},\btheta) = \sum_{h=1}^H \ctil_{h}(\Phi_h(\bx_1,\bw_{1:h-1},\btheta),\btheta;\bw_h). 
\end{align*}
Since both $c_h(\cdot,\cdot)$ and $\pi_h(\cdot,\cdot)$ are everywhere differentiable (jointly in their arguments),  $(\bx,\btheta) \mapsto \ctil_{h}(\bx,\btheta;\bw) $ is everywhere differentiable. Thus, under the assumptions of the claim, the composition $\btheta' \mapsto \ctil_{h}(\Phi_h(\bx_1,\bw_{1:h-1},\btheta'),\btheta';\bw_h)$ is differentiable at $\btheta' = \btheta$. 
\end{proof}
The next claim provides a sufficient condition for \Cref{claim:value_Phi_diff_claim} to hold. 
\begin{claim}\label{claim:Phi_diff}  Fix $(\bx_1,\bw_{1:H},\btheta)$. Then if $\bw_{1:h-1}' \mapsto \Phi_h(\bx_1,\bw_{1:h-1},\btheta)$ is differentiable at $\bw_{1:h-1}' = \bw_{1:h-1}$ for all $h \in [H]$,  then $\btheta' \mapsto \Phi_h(\bx_1,\bw_{1:h-1},\btheta')$ is differentiable at $\btheta' = \btheta$ for all $h \in [H]$. 
\end{claim}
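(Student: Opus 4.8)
The plan is to reduce $\btheta$-differentiability of $\Phi_h$ to the assumed $\bw$-differentiability by a reparametrization that absorbs a change in $\btheta$ into an \emph{effective noise}, and then to run an induction on $h$. The key algebraic observation is that in $\phitil_i(\bx,\bw,\btheta') = \phi(\bx, \pi_i(\bx,\btheta')+\bw)$ the policy and the noise enter additively, so for every $\btheta'$,
\begin{align*}
\phitil_i(\bx,\bw,\btheta') = \phitil_i\!\left(\bx,\; \bw + \pi_i(\bx,\btheta') - \pi_i(\bx,\btheta),\; \btheta\right).
\end{align*}
I would use this to show, by induction along the trajectory, that rolling out under $\btheta'$ produces the same states as rolling out under $\btheta$ with the shifted noise $\bv_i(\btheta') := \bw_i + \pi_i(\bx_i',\btheta') - \pi_i(\bx_i',\btheta)$, where $\bx_i' := \Phi_i(\bx_1,\bw_{1:i-1},\btheta')$ is the $\btheta'$-trajectory under the original noise. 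This yields the identity
\begin{align*}
\Phi_h(\bx_1,\bw_{1:h-1},\btheta') = \Phi_h\!\left(\bx_1,\; \bv_{1:h-1}(\btheta'),\; \btheta\right),
\end{align*}
valid for $\btheta'$ near $\btheta$, and satisfying $\bv_{1:h-1}(\btheta) = \bw_{1:h-1}$.

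With this identity in hand, I would view $\btheta' \mapsto \Phi_h(\bx_1,\bw_{1:h-1},\btheta')$ as a composition of the inner map $\btheta' \mapsto \bv_{1:h-1}(\btheta')$ with the outer map $\bv_{1:h-1} \mapsto \Phi_h(\bx_1,\bv_{1:h-1},\btheta)$. The outer map is differentiable at $\bv_{1:h-1} = \bw_{1:h-1}$ \emph{precisely by the hypothesis of the claim}, and that evaluation point is the one produced by the inner map at $\btheta' = \btheta$. The inner map is differentiable at $\btheta' = \btheta$ because each coordinate $\bv_i(\btheta') = \bw_i + \pi_i(\bx_i',\btheta') - \pi_i(\bx_i',\btheta)$ is built from $\pi_i$ (everywhere differentiable in a benign Lipschitz planning problem) composed with $\btheta' \mapsto (\Phi_i(\bx_1,\bw_{1:i-1},\btheta'),\btheta')$, and $\Phi_i(\bx_1,\bw_{1:i-1},\cdot)$ is differentiable at $\btheta$ for every $i < h$ by the induction hypothesis. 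Invoking the multivariable chain rule, valid for the Fr\'echet-type derivative of \Cref{app:formal_prelim}, then gives differentiability of $\Phi_h$ in $\btheta$ at $\btheta$.

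I would organize the induction on $h$ so that the base case $h=1$ is trivial ($\Phi_1 \equiv \bx_1$ is constant in $\btheta'$), and the step for $\Phi_h$ uses (i) the assumed $\bw$-differentiability of $\Phi_h$ for the outer map and (ii) the $\btheta$-differentiability of $\Phi_1,\dots,\Phi_{h-1}$ supplied by the induction hypothesis for the inner map. I expect the main obstacle to be conceptual rather than computational: one must recognize that $\phi$ itself is \emph{never} differentiated — it appears only inside the outer map, whose noise-differentiability is assumed — so the mere (local) Lipschitzness of $\phi$ suffices, while all explicit differentiation falls on the smooth policy $\pi_i$. The remaining care is bookkeeping: verifying the trajectory-matching induction for all $\btheta'$ in a neighborhood of $\btheta$ and confirming $\bv_{1:h-1}(\btheta)=\bw_{1:h-1}$, so that the point at which the outer map is evaluated coincides with the point at which it is assumed differentiable.
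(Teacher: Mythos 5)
Your proposal is correct and is essentially the paper's own argument: the paper proves the claim via the identity $\Psi_h(\deltheta) := \Phi_h(\bx_1,\bw_{1:h-1},\btheta+\deltheta) = \Phi_h(\bx_1,\bw_{1:h-1}+\tilde{\bw}_{1:h-1}(\deltheta),\btheta)$ with $\tilde{\bw}_i(\deltheta) = \pi_i(\Psi_i(\deltheta),\btheta+\deltheta)-\pi_i(\Psi_i(\deltheta),\btheta)$, which is exactly your shifted-noise reparametrization $\bv_i(\btheta')$, and then runs the same induction on $h$ composing the smooth inner map with the outer map whose $\bw$-differentiability is assumed. No gaps; your observation that $\phi$ itself is never differentiated is also implicit in the paper's argument.
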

\begin{proof} Fix $\bx_1,\bw_{1:h-1}$. Let $\deltheta \in \R^d$ denote perturbations of $\btheta$. It suffices to show that, for each $h =1,2,\dots,H$, the mapping $\Psi_h(\deltheta) := \Phi_h(\bx_1,\bw_{1:h-1},\btheta+\deltheta)$  is differenitable if $\deltheta$.  By induction, it is straightforward to verify the identity
\begin{align}
\Psi_h(\deltheta) &:= \Phi_h(\bx_1,\bw_{1:h-1},\btheta+\deltheta) = \Phi_h(\bx_1,\bw_{1:h-1}+\tilde{\bw}_{1:h}(\deltheta)),\btheta)  \label{eq:Psi_h}
\end{align}
where we have defined the noise term $\tilde{\bw}_{1:h-1}(\deltheta)$  so as to transition from policy $\btheta$ to policy $\btheta + \deltheta$:
\begin{align}
\tilde{\bw}_{i}(\deltheta) = \pi_i(\Psi_{i}(\deltheta),\btheta+\deltheta) - \pi_i(\Psi_{i}(\deltheta),\btheta). \label{eq:tilw}
\end{align}
We now argue by induction on little $h$ that if $\bw_{1:i-1}' \mapsto \Phi_h(\bx_1,\bw_{1:i-1},\btheta)$ for all $i \le H$, then $\Psi_i(\deltheta)$ is differentiable at $\deltheta = 0$ for all $i \le h$.

For $h = 1$, both maps are the constant map $\Phi_1(\cdot,\cdot,\bx_1) = \bx_1$, so the result holds trivially. Now suppose the inductive hypothesis holds at some $h \ge 1$. Then, since each $\pi_i(\cdot,\cdot)$ is everywhere differentiable in its arguments, and since $\Psi_i(\deltheta)$ is differentiable at $\deltheta = 0$ for all $i \le h$ by inductive hypothesis,  $\tilde{\bw}_{i}(\deltheta)$ defined in \Cref{eq:tilw} is differentiable at $\deltheta = 0$ for each $i \le h$. Hence, $\tilde{\bw}_{1:h}(\deltheta)$ is differentiable at $\deltheta = 0$. Now, by assumption $\bw_{1:h}' \mapsto \Phi_{h+1}(\bx_1,\bw_{1:h}',\btheta)$ is differentiable at $\bw_{1:h}' = \bw_{1:h}$. Therefore, at $\deltheta = 0$, $\Psi_{h}(\deltheta)$ is given by the composition of two maps which are differentiable, and hence is differentiable. 
\end{proof}
\newcommand{\cW}{\mathcal{W}}
Define the set $\cW_{h}(\bx_1)$ as the set of $\bw_{1:H} \in \R^{mH}$ such that the map $\bw_{1:H} \mapsto \Phi_h(\bx_1,\bw_{1:h-1},\btheta)$ is differentiable (for simplicity, we augmented the map to be a function of all noises $\bw_{1:H}$). Synthesizing \Cref{claim:Phi_diff,claim:value_Phi_diff_claim}, we see that if  $\bw_{1:H} \in \bar{\cW}(\bx_1) := \bigcap_{h=1}^H\cW_{h}(\bx_1)$, then $\btheta' \mapsto V_1(\bx_1,\bw_{1:H},\btheta')$ is differentiable at $\btheta' = \btheta$. 

Furthermore, define  $\cZ_{h}$ as the set of $(\bx_1,\bw_{1:H}) \in \R^{d + mH}$ such that  $(\bx_1,\bw_{1:H}) \mapsto \Phi_h(\bx_1,\bw_{1:h-1},\btheta)$ is differentiable.  Here, we've just added $\bx_1$ as a nuissance variable, so \Cref{claim:Phi_diff,claim:value_Phi_diff_claim} also imply that, on $\bar{\cZ} := \bigcap_{h=1}^H\cZ_h$,  $\btheta' \mapsto V_1(\bx_1,\bw_{1:H},\btheta')$ is differentiable at $\btheta' = \btheta$.

We invoke Rademacher's theorem. Since $\bw_{1:H} \mapsto \Phi_h(\bx_1,\bw_{1:h-1},\btheta)$ is given by the composition of locally Lipschitz maps (note that differentiable maps are locally Lipschitz), \Cref{lem:Rademacher} implies that $\R^{mH} \setminus \cW_{h}(\bx_1)$ has Lebesgue measure zero for each $h$, so that $\R^{mH} \setminus \bar{\cW}(\bx_1)$ has Lebesgue measure zero by a union bound for each fixed $\bx_1$. Similarly, $\R^{d+mH} \setminus \bar{\cZ}$ has measure zero. 

\paragraph{Proof under decomposability.} Assume $\rho$ is decomposable with atoms $\ba_1,\ba_2,\dots$. Define the set
 \begin{align*}
 \cZ := \bar{\cZ} \cap \bigcap_{i\ge 1} \{\ba_i\} \times \bar{\cW}(\ba_i).
 \end{align*}
The set $\cZ$ is Lebesgue measurable because it is the intersection of Lebesgue measurable sets. Moroever, by the above discussion, $\btheta \mapsto V_1(\bx_1,\bw_{1:H},\btheta)$ is differentiable everywhere on $\cZ$. Lastly, one can verify by decomposability and the fact that $\bar{\cZ}$ and $\cW$ are the complement of Lebesgue measure-zero sets that $\Pr_{\bx_1 \sim \rho, \bw_{1:H} \sim \rho^H} [(\bx_1,\bw_{1:H}) \in \cZ] = 1$.  This proves part (a). 

To prove part (b), one can use the polynomial Lipschitz conditions to verify that $\nabla_{\btheta} V_1(\bx_1,\bw_{1:H},\btheta)$,   has polynomial growth wherever  defined. Hence, its expectation (in the sense of \Cref{exp:ae}) is well-defined. To prove part (b), one  can verify that, via polynomial-Lipschitzness of the dynamics, policies and costs that   the quotients satisfy
\begin{align*}
\frac{V_1(\bx_1,\bw_{1:H},\btheta) - V_1(\bx_1,\bw_{1:H},\btheta + \deltheta)}{\|\deltheta\|} \le \mathrm{poly}(\|\bx_1\|,\|\bw_{1:H}\|). 
\end{align*}
Hence, the quotients are uniformly integrable, and one can apply the dominate convergence theorem to show that, for any sequence $\deltheta_n \to 0$
\begin{align*}
\lim_{n \to \infty} \frac{F(\btheta+\deltheta_n)-F(\btheta)}{\|\deltheta_n\|} = \Exp_{\bx_1 \sim \rho, \bw_{1:H} \sim p^H}\left[\lim_{n \to \infty}\frac{V_1(\bx_1,\bw_{1:H},\btheta) - V_1(\bx_1,\bw_{1:H},\btheta + \deltheta_n)}{\|\deltheta_n\|}\right].
\end{align*}
By considering $\deltheta_n = t_n \bv$ for a direction $\bv \in \R^d$ and a sequence $t_n \to 0$, one can equate directional derivatives
\begin{align*}
\langle \nabthet F(\btheta), \bv \rangle = \Exp_{\bx_1 \sim \rho, \bw_{1:H} \sim p^H}[\langle  \nabthet V_1(\bx_1,\bw_{1:H},\btheta),\bv \rangle].
\end{align*}
This proves that\footnote{Note that , $\nabthet F(\btheta)$ is differentiable by \Cref{prop:reinforce_rl}, so we do not make the mistake of using existence of partial derivatives to imply differentiability.  }
\begin{align*}
\nabthet F(\btheta) = \Exp_{\bx_1 \sim \rho, \bw_{1:H} \sim p^H}[\nabthet V_1(\bx_1,\bw_{1:H},\btheta)].
\end{align*}

\paragraph{Proof under measurability assumption.} Consider the set
\begin{align*}
\cZ_0 := \{(\bx_1,\bw_{1:H}) \text{~~s.t.~~} \btheta' \mapsto V_1(\bx_1,\bw_{1:H},\btheta') \text{ is differentiable at }\btheta\}
\end{align*}
and define its slices
\begin{align*}
\cZ_0(\bx_1) := \{\bw_{1:H} \text{~~~s.t. } \btheta' \mapsto V_1(\bx_1,\bw_{1:H},\btheta') \text{ is differentiable at }  \btheta\}.
\end{align*}
If we assume that $\cZ_0 $ is Lebesgue measurable, then by Fubini's theorem,
\begin{align*}
\Pr_{\bx \sim \rho,\bw_{1:H}\sim p^H}[(\bx_1,\bw_{1:H}) \in \cZ_0] = \Exp_{\bx \sim \rho}\Pr_{\bw_{1:H}\sim p^H}[\bw_{1:H} \in \cZ_0(\bx_1)].
\end{align*}
Notice that, for any given $\bx_1$, the above proof under decomposability shows that $\cZ_0(\bx_1) \supseteq \bar{\cW}(\bx_1)$, and thus the complement of $\cZ_0(\bx_1)$  in $\R^{mH}$ has Lebesgue measure zero.  Hence $\Pr_{\bw_{1:H}\sim p^H}[\bw_{1:H} \in \cZ_0(\bx_1)] = 1$, so that $\Pr_{\bx \sim \rho,\bw_{1:H}\sim p^H}[(\bx_1,\bw_{1:H}) \in \cZ_0]  = 0$. This proves part (a). Part (b) follows by the same dominated convergence argument.\qed

\section{Additional Proofs from \Cref{sec:bias_variance}}

\subsection{Proof of \Cref{lem:variance_empirical_bias}}\label{sec:proof:lem_variance_empirical_bias}
Recall that empirical bias means there exists an event $\cE$ such that $\|\Exp[\bz \mid \cE] - \Exp[\bz]\| \ge \Delta$, and $\Pr[\cE] \ge 1-\beta$. Since the target lower bound increases as $\beta$ decreases, we may assume that $\Pr[\cE] = 1-\beta$ with equality (since choosing a small $\beta$ so that equality holds gives a larger variance lower bound). We begin
\begin{align*}
\Delta &\le \|\Exp[\bz \mid \cE] - \Exp[\bz]\| \\
&= \|(1-\beta)^{-1}\Exp[\bz \I\{\cE\} ] - \Exp[\bz]\|\\
&\le \|(1-\beta)^{-1}\Exp[\bz \I\{\cE\} ] + (1-\beta)^{-1}\Exp[\bz]\| - \|\Exp[\bz]\| \cdot |1-(1-\beta)^{-1}|\\
 &\le (1-\beta)^{-1}\|\Exp[  \bz \I\{\cE^c\} ]\|  - \|\Exp[\bz]\| \cdot |1-(1-\beta)^{-1}|.
 \end{align*}
 Rearranging, we have
 \begin{align*}
 \|\Exp[  \bz \I\{\cE^c\} ]\| \ge \Delta_0 := \max\{0,(1-\beta)\Delta - \beta \|\Exp[\bz]\|\}.
 \end{align*}
 And thus, since $\Pr[\cE^c] = \beta$,
 \begin{align*}
 \|\Exp[  \bz \mid \cE^c ]\| \ge \frac{\Delta_0}{\beta}. 
 \end{align*}
 Therefore, 
 \begin{align*}
 \Exp[  \|\bz\|^2  ] &\ge \Exp[  \|\bz\|^2 \I\{\cE^c\} ] \\
 &= \Pr[ \cE^c] \cdot \Exp[  \|\bz\|^2 \mid \cE^c ]\\
  &\ge \Pr[ \cE^c] \cdot \|\Exp[  \bz \mid \cE^c ]\|^2\\
&\ge\beta  \cdot \frac{\Delta_0^2}{\beta^2} = \frac{\Delta_0^2}{\beta} .
 \end{align*}
\qed 
\subsection{Proof of \Cref{lem:var_zero_order_one}}\label{sec:proof:lem_variance_zero_order}
Let's consider that $\nabhatzero$ estimator with a single sample, and drop the superscript $i$. We accommodate the general case with $\bx_1 \sim \rho$. Since $\Variance[\bz] \le \Exp[\|\bz\|^2]$ for any random vector $\bz$, we have
{\small
\begin{align*}
        \textbf{Var}\left[\frac{1}{\sigma^2}V_1(\bx_1, \bar{\bw}^i,\btheta) \left[\sum^H_{h=1} \nabla_{\btheta} \pi(\bx_h,\btheta)^\top \bw^i_h\right]\right] &\le \mathbb{E}_{\bar{\bw}^i,\bx_1}\left\|\frac{1}{\sigma^2}V_1(\bx_1, \bar{\bw}^i,\btheta) \cdot \sum^H_{h=1} \rmD_{\btheta} \pi(\bx_h^i,\btheta)^\top \bw^i_h\right\|^2_2 \\
        &\le \frac{B_V^2}{\sigma^4} \mathbb{E}_{\bar{\bw}^i,\bx_1}\left \| \sum^H_{h=1} \rmD_{\btheta} \pi(\bx_h,\btheta)^\top \bw_h^i\right\|^2_2 \\
        &= \frac{B_V^2}{\sigma^4} \mathbb{E}_{\bar{\bw}^i,\bx_1}\left[ \sum_{h_1=1}^H\sum^H_{h_1=1} \left\langle \rmD_{\btheta} \pi(\bx_{h_1},\btheta)^\top \bw_{h_1}^i,  \rmD_{\btheta} \pi(\bx_{h_2},\btheta)^\top \bw_{h_2}^i \right\rangle\right]\\
        &= \frac{B_V^2}{\sigma^4}  \sum_{h_1=1}^H\sum^H_{h_1=1}  \mathbb{E}_{\bar{\bw}^i,\bx_1}\left[\left\langle \rmD_{\btheta} \pi(\bx_{h_1},\btheta)^\top \bw_{h_1}^i,  \rmD_{\btheta} \pi(\bx_{h_2},\btheta)^\top \bw_{h_2}^i \right\rangle\right].
    \end{align*}
}

   We claim that $\mathbb{E}_{\bar{\bw}^i,\bx_1}\left[\left\langle \rmD_{\btheta} \pi(\bx_{h_1},\btheta)^\top \bw_{h_1}^i,  \rmD_{\btheta} \pi(\bx_{h_2},\btheta)^\top \bw_{h_2}^i \right\rangle\right] = 0$ unless $h_1 = h_2$. Suppose $h_1 \ne h_2$. Since inner products are symmetric, we may assume without loss of genearlity that $h_1 < h_2$. Then, $\bx_{h_2}$, $\bx_{h_1}$ and $\bw_{h_1}$are all functions of $\bx_{1}$ and $\bw_{1:h_2-1}$,whereas $\bw_{2}$ is independent of these. Hence, since $\Exp[\bw_2] = 0$, the cross term vanishes. Thus, we are left with
   {\small
   \begin{align*}
   \textbf{Var}\left[\frac{1}{\sigma^4}V_1(\bx_1, \bar{\bw}^i,\btheta) \left[\sum^H_{h=1} \nabla_{\btheta} \pi(\bx_h,\btheta)^\top \bw^i_h\right]\right]  &\le \frac{B_V^2}{\sigma^4}  \sum_{h=1}^H \mathbb{E}_{\bar{\bw}^i,\bx_1}\left[\left\langle \rmD_{\btheta} \pi(\bx_{h},\btheta)^\top \bw_{h}^i,  \rmD_{\btheta} \pi(\bx_{h},\btheta)^\top \bw_{h}^i \right\rangle\right]\\
   &\le \frac{B_V^2}{\sigma^4}  \sum_{h=1}^H \mathbb{E}_{\bar{\bw}^i,\bx_1}\left[ \|\rmD_{\btheta} \pi(\bx_{h},\btheta)\|_{\op} \|\bw_{h}^i\|^2 \right]\\
   &\le \frac{B_V^2  B_{\pi^2}}{\sigma^4}  \sum_{h=1}^H \mathbb{E}_{\bar{\bw}^i,\bx_1}\left[ \ \|\bw_{h}^i\|^2 \right] = \frac{ B_V^2 B_{\pi^2}}{\sigma^4} \cdot Hn \sigma^2 = \frac{Hn B_V^2 B_{\pi}^2}{\sigma^2},   
   \end{align*}}
   as needed.
\qed


\section{Interpolation}

\subsection{Bias and variance of the interpolated estimator}\label{app:interpolatedbiasvariance}

Here we describe the bias and variance of the interpolated estimator. The first is a straightforward consequence of linearity of expectation and the expectation computations in \Cref{eq:tractableform}.

\begin{lemma}[Interpolated bias]\label{lem:interpolated_bias} Assuming the costs and dynamics satisfies the conditions of \Cref{lem:ZOBG_bias} (formally, \Cref{cor:gaussian_zobg}), then for all $\alpha \in [0,1]$,
\begin{align*}
\Exp[\nabbaralpha F(\btheta)] - \nabla F(\btheta) = \alpha \left(\Exp[\nabbarone F(\btheta)] - \nabla F(\btheta)\right).
\end{align*}
If in addition, the costs and dynamics satisfy the conditions of \Cref{lem:FOBG_bias} (formally, \Cref{prop:lipschitz_rl}), then $\Exp[\nabbaralpha F(\btheta)] = \nabla F(\btheta)$.
\end{lemma}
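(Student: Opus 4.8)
The plan is to reduce the claim to two facts already established in the excerpt: the unconditional unbiasedness of the \zobg{} (\Cref{lem:ZOBG_bias}, formally \Cref{cor:gaussian_zobg}) and the conditional unbiasedness of the \fobg{} under Lipschitz assumptions (\Cref{lem:FOBG_bias}, formally \Cref{prop:lipschitz_rl}). Since the statement concerns only first moments, the entire argument rests on linearity of expectation, and no concentration or measure-theoretic machinery beyond what already guarantees the two cited lemmas is needed.

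First I would expand $\Exp[\nabbaralpha F(\btheta)]$ using the definition $\nabbaralpha F(\btheta) = \alpha \nabbarone F(\btheta) + (1-\alpha)\nabbarzero F(\btheta)$ together with linearity of expectation, obtaining $\Exp[\nabbaralpha F(\btheta)] = \alpha\,\Exp[\nabbarone F(\btheta)] + (1-\alpha)\,\Exp[\nabbarzero F(\btheta)]$. Here I would remark that although the \fobg{} and \zobg{} are generated from \emph{independent} trajectories (as stipulated when defining the \aobg{}), this independence plays no role in the bias computation; it matters only for the variance, since linearity of expectation holds irrespective of the joint dependence between the two estimators. Next I would substitute $\Exp[\nabbarzero F(\btheta)] = \nabla F(\btheta)$ from \Cref{lem:ZOBG_bias}, giving $\Exp[\nabbaralpha F(\btheta)] = \alpha\,\Exp[\nabbarone F(\btheta)] + (1-\alpha)\nabla F(\btheta)$. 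Subtracting $\nabla F(\btheta)$ from both sides and collecting the terms proportional to $\alpha$ yields exactly $\Exp[\nabbaralpha F(\btheta)] - \nabla F(\btheta) = \alpha\bigl(\Exp[\nabbarone F(\btheta)] - \nabla F(\btheta)\bigr)$, which is the first claim.

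For the second claim I would simply invoke \Cref{lem:FOBG_bias}, whose additional hypotheses (local Lipschitzness of $\phi$ and continuous differentiability of $c_h$) give $\Exp[\nabbarone F(\btheta)] = \nabla F(\btheta)$; the bracketed term then vanishes and $\Exp[\nabbaralpha F(\btheta)] = \nabla F(\btheta)$ for every $\alpha \in [0,1]$.

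There is essentially no hard step here: the only thing to be careful about is not to accidentally appeal to independence, and to confirm that the cited unbiasedness lemmas already carry all the regularity hypotheses (finite moments, polynomial growth, and—for the \fobg{}—local Lipschitzness) needed for the relevant expectations to exist in the almost-sure sense of the paper's preliminaries. Thus the only ``obstacle'' is bookkeeping: ensuring the conditions of \Cref{lem:ZOBG_bias} and \Cref{lem:FOBG_bias} are in force so that $\Exp[\nabbarzero F(\btheta)]$ and $\Exp[\nabbarone F(\btheta)]$ are well defined and equal their claimed values before linearity is applied.
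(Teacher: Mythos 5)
Your proposal is correct and follows exactly the paper's argument: the paper likewise dispenses with this lemma as a direct consequence of linearity of expectation combined with the unbiasedness of the \zobg{} (and, under the additional Lipschitz hypotheses, of the \fobg{}). Your remark that independence of the two sample sets is irrelevant for the bias and only matters for the variance is accurate and consistent with how the paper separates \Cref{lem:interpolated_bias} from \Cref{lem:interpolated_variance}.
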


\begin{lemma}[Interpolated variance]\label{lem:interpolated_variance} Assume that $\nabbarone F(\btheta)$ and $\nabbarzero F(\btheta)$ are constructed using two independent sets of $N$ trajectories. Then We have that
\begin{align*}
\Variance[\nabbaralpha F(\btheta)] &=  \alpha^2 \Variance[\nabbarone F(\btheta)] + (1-\alpha)^2 \Variance[\nabbarzero F(\btheta)] \\ 
&= \frac{\alpha^2}{N} \Variance[\nabhatzero F_i(\btheta)] + \frac{(1-\alpha)^2 }{N} \Variance[\nabhatone F_i(\btheta)].
\end{align*}
\end{lemma}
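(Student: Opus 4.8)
The plan is to exploit the independence of the two trajectory batches together with the elementary fact that, for independent random vectors, variance is additive under linear combinations. First I would recall the definition $\nabbaralpha F(\btheta) = \alpha \nabbarone F(\btheta) + (1-\alpha)\nabbarzero F(\btheta)$ and expand using $\Variance[\bz] = \Exp[\|\bz - \Exp[\bz]\|^2]$. Writing $\bz_1 = \nabbarone F(\btheta)$ and $\bz_0 = \nabbarzero F(\btheta)$ and centering each term, the squared norm expands into $\alpha^2 \Variance[\bz_1] + (1-\alpha)^2 \Variance[\bz_0] + 2\alpha(1-\alpha)\,\Exp[\langle \bz_1 - \Exp[\bz_1], \bz_0 - \Exp[\bz_0]\rangle]$.

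The key step is to show the cross term vanishes. Since $\bz_1$ and $\bz_0$ are constructed from two independent sets of trajectories, they are independent random vectors, so the expectation of the inner product factors through the expectation of each centered factor: $\Exp[\langle \bz_1 - \Exp[\bz_1], \bz_0 - \Exp[\bz_0]\rangle] = \langle \Exp[\bz_1 - \Exp[\bz_1]], \Exp[\bz_0 - \Exp[\bz_0]]\rangle = \langle \mathbf{0}, \mathbf{0}\rangle = 0$. This yields the first displayed equality. (One should be slightly careful to justify interchanging the expectation with the bilinear inner product, which is immediate here since all second moments are finite under \Cref{lem:var_zero_order_one} and the polynomial-growth assumptions.)

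For the second equality, I would use that each batched estimator is a sample mean of $N$ i.i.d. per-sample estimates, $\nabbar^{[k]} F(\btheta) = \frac{1}{N}\sum_{i=1}^N \nabhat^{[k]} F_i(\btheta)$. Invoking additivity of variance over the independent summands together with quadratic scaling, $\Variance[\frac{1}{N}\sum_i \nabhat^{[k]} F_i] = \frac{1}{N^2}\sum_i \Variance[\nabhat^{[k]} F_i] = \frac{1}{N}\Variance[\nabhat^{[k]} F_i]$, where the last equality uses that the per-sample estimates are identically distributed. Substituting $k=1$ and $k=0$ into the first equality gives the stated $1/N$ factors (matching the lemma up to the labeling of its two terms).

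The proof has no real obstacle: the only substantive ingredient is the independence of the two batches, which is precisely the hypothesis and is exactly what forces the covariance cross term to zero; everything else is routine expansion and the standard variance-of-a-mean identity.
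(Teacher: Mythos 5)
Your proposal is correct and follows essentially the same route as the paper's proof: expand the variance of the convex combination, kill the cross term via independence of the two trajectory batches, and then apply the standard variance-of-a-sample-mean identity to each batched estimator. Your parenthetical observation about the labeling of the two terms is also right — the second displayed line of the lemma as printed swaps the superscripts $[0]$ and $[1]$ relative to the first line, which is a typo rather than a substantive issue.
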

\begin{proof} Let $X = \nabbarone F(\btheta)$ and $Y = \nabbarone F(\btheta)$. Since the \zobg{} and \fobg{} are assumed to use independent trajectories, $X$ and $Y$ are independent, and thus 
\begin{align*}
\Variance[\alpha X+(1-\alpha) Y] &= \Exp[\|\alpha (X -\Exp[X]) + (1-\alpha) (Y-\Exp[Y])\|^2 ] \\
&= \alpha^2\Exp\|X-\Exp[X]\|^2 + (1-\alpha)^2\Exp\|Y-\Exp[Y]\|^2 +\ \alpha \underbrace{\Exp[\langle X - \Exp[X], Y - \Exp[Y] \rangle]}_{=0}\\
&= \alpha^2\Variance[X] + (1-\alpha)^2\Variance[Y],
\end{align*}
which establishes the first equality. The second equality follows from decompsing each of $X = \nabbarone F(\btheta)$ and $Y = \nabbarone F(\btheta)$ as the empirical mean of $N$ i.i.d random variables. 
\end{proof}
The following lemma justifies using $\left( \alpha^2\sighatsq_1 + (1-\alpha)^2 \sighatsq_2\right)$ as a proxy for the variance: 
\begin{lemma}[Empirical variance]\label{lem:empirical_variance}  For $k = 0,1$, we have
\begin{align*}
\frac{1}{N}\Exp[\sighatsq_k] = \Variance[\nabbar^{[k]}].
\end{align*}
Thus,
\begin{align*}
\Exp[\left( \alpha^2\sighatsq_1 + (1-\alpha)^2 \sighatsq_2\right)] = N \cdot \Variance[\nabbaralpha].
\end{align*}
\end{lemma}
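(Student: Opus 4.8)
The plan is to reduce both claims to the standard fact that the Bessel-corrected sample variance is an unbiased estimator of the population variance, combined with the $1/N$ scaling of variance under averaging of i.i.d. samples (as already used in \Cref{lem:var_zero_order_one,lem:interpolated_variance}). Throughout, the relevant second moments are finite by the polynomial-growth conditions of \Cref{asm:polynomial}, so all expectations below are well-defined.

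First I would fix $k \in \{0,1\}$ and set $Z_i := \nabhat^{[k]} F_i(\btheta)$, which are i.i.d. random vectors (one per independent trajectory) with common mean $\mu := \Exp[Z_1]$ and sample mean $\bar{Z} := \nabbar^{[k]} F(\btheta) = \frac{1}{N}\sum_{i=1}^N Z_i$. The key algebraic identity, obtained by expanding $\|Z_i - \bar{Z}\|^2 = \|Z_i-\mu\|^2 - 2\langle Z_i-\mu,\, \bar{Z}-\mu\rangle + \|\bar{Z}-\mu\|^2$, summing over $i$, and using $\sum_{i=1}^N (Z_i - \mu) = N(\bar{Z}-\mu)$, is
\begin{align*}
\sum_{i=1}^N \|Z_i - \bar{Z}\|^2 = \sum_{i=1}^N \|Z_i - \mu\|^2 - N\|\bar{Z} - \mu\|^2.
\end{align*}
Taking expectations and using $\Exp\|Z_i - \mu\|^2 = \Variance[Z_1]$ together with $\Exp\|\bar{Z} - \mu\|^2 = \Variance[\bar{Z}] = \frac{1}{N}\Variance[Z_1]$ (the latter because $\bar{Z}$ is an average of $N$ independent terms, so the cross terms vanish), the right-hand side equals $N\Variance[Z_1] - \Variance[Z_1] = (N-1)\Variance[Z_1]$. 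Dividing by $N-1$ yields $\Exp[\sighatsq_k] = \Variance[Z_1] = N\Variance[\nabbar^{[k]} F(\btheta)]$, which is exactly the first claim $\tfrac{1}{N}\Exp[\sighatsq_k] = \Variance[\nabbar^{[k]} F(\btheta)]$.

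Finally, the second claim follows by linearity of expectation applied to the first. By \Cref{lem:interpolated_variance}, $\Variance[\nabbaralpha F(\btheta)] = \alpha^2\Variance[\nabbarone F(\btheta)] + (1-\alpha)^2\Variance[\nabbarzero F(\btheta)]$; multiplying by $N$ and substituting $N\Variance[\nabbar^{[k]} F(\btheta)] = \Exp[\sighatsq_k]$ for $k=1$ and $k=0$ gives
\begin{align*}
N\cdot\Variance[\nabbaralpha F(\btheta)] = \alpha^2\Exp[\sighatsq_1] + (1-\alpha)^2\Exp[\sighatsq_0] = \Exp\!\left[\alpha^2\sighatsq_1 + (1-\alpha)^2\sighatsq_0\right],
\end{align*}
which is the stated identity (the subscript $\sighatsq_2$ appearing in the lemma statement should read $\sighatsq_0$, matching the objective in \Cref{eq:tractableform} and the definition of empirical variance for $k\in\{0,1\}$). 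I do not anticipate any genuine obstacle; the only points deserving care are the independence of the samples used to form the empirical variances (so that the cross term above vanishes and the batched-variance scaling is exact) and verifying that the needed second moments are finite, both of which are already in place under the paper's assumptions.
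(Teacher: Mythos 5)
Your proposal is correct and follows essentially the same route as the paper: the paper's proof simply cites the standard unbiasedness computation for the sample variance (in the scalar case) and invokes \Cref{lem:interpolated_variance} for the second claim, whereas you spell out that standard computation explicitly for vector-valued samples. Your observation that $\sighatsq_2$ in the statement should read $\sighatsq_0$ is also right.
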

\begin{proof} The first part of the lemma follows from a standard unbiasedness computation for a sample variance (see, e.g. \citet[Theorem 3.17]{wasserman2004all} for the scalar case). The second part of the lemma follows from \Cref{lem:interpolated_variance}. 
\end{proof}

\subsection{Closed-form for interpolation}\label{app:closed_form}
Recall \Cref{lem:closed_form}:
    With $\gamma = \infty$, the optimal $\alpha$ is $\alpha_{\infty} := \frac{\sigma_0^2}{\sigma_1^2 + \sigma_0^2}$. For finite $\gamma \ge \epsilon$, \Cref{eq:tractableform} is 
    \begin{equation}
        \alpha_{\gamma} := \begin{cases}
            \alpha_{\infty} & \text{ if } \quad \alpha_{\infty} B \leq \gamma - \varepsilon \\ 
            \frac{\gamma-\varepsilon}{B} & \text{ otherwise }. 
        \end{cases}
    \end{equation}
    \begin{proof}
        Intuitively, the objective is convex with a linear constraint, so meets its optimality either at the unconstrained minimum or at the constraint surface. This is implied by complementary slackness of the KKT conditions, since an optimal $\alpha^*$ satisfies:
        \begin{align*}
            2\alpha^* \hat{\sigma}^2_1 + 2(1-\alpha^*) \hat{\sigma}^2_0 + \lambda B = 0 \\
            \lambda (\epsilon - \gamma + \alpha^* B)=0,
        \end{align*}
        where the first line is stationarity of the Lagrangian and the second line is complementary slackness. Clearly, either $\lambda=0$ and the minimum is met at the inverse-weighted solution of the variances, or the constraint is zero and we have $\alpha^*=(\gamma - \epsilon)/B$.
    \end{proof}

\subsection{Proof of \Cref{lem:robustness}}\label{app:triangle}
We give a more detailed proof of \Cref{lem:robustness} here. 
\begin{align*}
&\|\nabbaralpha F(\btheta) - \nabla F(\btheta) \| \\
& = \|\alpha \nabbarone F(\btheta) + (1-\alpha) \nabbarzero F(\btheta) - \nabla F(\btheta) \| \\
& = \|\alpha \nabbarone F(\btheta) + (1-\alpha) \nabbarzero F(\btheta) - \alpha \nabla F(\btheta) - (1-\alpha) \nabla F(\btheta) \| \\
&\le (1-\alpha)\| \nabbarzero F(\btheta)- \nabla F(\btheta)\| + \alpha \| \nabbarone F(\btheta)- \nabla F(\btheta)\| \\
&\le (1-\alpha)\| \nabbarzero F(\btheta)- \nabla F(\btheta)\| + \alpha \left(\|\nabbarone F(\btheta)- \nabbarzero F(\btheta)\| + \|\nabbarzero F(\btheta) - \nabla F(\btheta)\|\right) \\
&\le \| \nabbarzero F(\btheta)- \nabla F(\btheta)\| + \alpha \| \nabbarone F(\btheta)- \nabbarzero F(\btheta)\|\\
&\le \epsilon + \alpha \| \nabbarone F(\btheta)- \nabbarzero F(\btheta)\|\\
& \le \gamma.
\end{align*}

\subsection{Empirical Bernstein confidence}\label{app:bernstein}
Here describe our confidence estimate based on the \zobg{}. Recall that that \zobg{} is
\begin{align*}
\nabbarzero F(\btheta) = \frac{1}{N} \sum_{i=1}^N \nabhatzero F_i(\btheta), \quad \text{where} \quad \nabhatzero F_i(\btheta) \sum_{i=1}^N V_1(\bx_1^i,\bw_{1:H}^i,\btheta)\cdot \left[\sum_{i=1}^H \rmD_{\btheta} \pi(\bx_h,\btheta)^\top \bw_h \right].
\end{align*}
Our estimate is based on the matrix Bernstein inequality due (see, e.g. \cite{matrixconcentration}) specified  below. 
\begin{lemma}[Matrix Bernstein inequality]\label{lem:Bernstein} Let $X_1,\dots,X_N$ be $N$ i.i.d random $d$-dimensional random \emph{vectors} with $\|X_1 - \Exp[X_1]\| \le R$ almost surely, and $\Exp[\|X_1\|^2] \le \sigma^2$. Then,
\begin{align*}
\Pr\left[ \left\|\frac{1}{N}\sum_{i=1}^N X_i - \Exp[X]\right\| \ge t \right] \le (d+1)\exp\left(\frac{-Nt^2/2}{\sigma^2 + Rt/3}\right)
\end{align*}
Hence, with probability, for any $\delta > 0$,
\begin{align*}
\Pr\left[\left\|\frac{1}{N}\sum_{i=1}^N X_i - \Exp[X]\right\| \ge  \sqrt{\frac{2\sigma^2\log\frac{d+1}{\delta}}{N}} + \frac{2R}{3N}\log\frac{d+1}{\delta}\right] \le 1- \delta. 
\end{align*}
\end{lemma}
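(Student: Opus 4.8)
The plan is to reduce the vector Bernstein bound to the standard Hermitian matrix Bernstein inequality of \cite{matrixconcentration} via the symmetric dilation trick; this reduction is exactly what produces the $(d+1)$ prefactor. First I would center the summands: set $Y_i := X_i - \Exp[X_i]$, so the $Y_i$ are i.i.d., mean zero, satisfy $\|Y_i\| \le R$ almost surely (by hypothesis), and have $\Exp[\|Y_i\|^2] = \Exp[\|X_i\|^2] - \|\Exp[X_i]\|^2 \le \sigma^2$. Since $\frac{1}{N}\sum_i X_i - \Exp[X] = \frac{1}{N}\sum_i Y_i$, it suffices to bound $\Pr[\|\sum_i Y_i\| \ge Nt]$.

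Next I would introduce the Hermitian dilation $\mathcal{H}:\R^d \to \R^{(d+1)\times(d+1)}$ defined by $\mathcal{H}(v) = \begin{bmatrix} 0 & v^\top \\ v & \mathbf{0}_{d\times d}\end{bmatrix}$. The key algebraic facts are that $\mathcal{H}$ is linear, that $\lambda_{\max}(\mathcal{H}(v)) = \|\mathcal{H}(v)\|_{\op} = \|v\|$, and that $\mathcal{H}(v)^2 = \mathrm{diag}(\|v\|^2,\, vv^\top)$. Linearity gives $\sum_i \mathcal{H}(Y_i) = \mathcal{H}(\sum_i Y_i)$, hence $\lambda_{\max}(\sum_i \mathcal{H}(Y_i)) = \|\sum_i Y_i\|$. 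I then apply matrix Bernstein to the independent, mean-zero, $(d+1)$-dimensional Hermitian matrices $A_i := \mathcal{H}(Y_i)$: the per-term bound is $\lambda_{\max}(A_i) \le R$, and for the variance proxy, $\Exp[A_i^2] = \mathrm{diag}(\Exp\|Y_i\|^2,\, \Exp[Y_iY_i^\top])$ satisfies $\|\Exp[A_i^2]\|_{\op} = \max\{\Exp\|Y_i\|^2,\, \|\Exp[Y_iY_i^\top]\|_{\op}\} \le \Exp\|Y_i\|^2 \le \sigma^2$, where I use $\|\Exp[Y_iY_i^\top]\|_{\op} = \sup_{\|u\|=1}\Exp[(u^\top Y_i)^2] \le \Exp\|Y_i\|^2$. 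Summing gives variance parameter $\|\sum_i \Exp[A_i^2]\|_{\op} \le N\sigma^2$, so matrix Bernstein yields
\[
\Pr\!\left[\textstyle\left\|\sum_i Y_i\right\| \ge s\right] \le (d+1)\exp\!\left(\frac{-s^2/2}{N\sigma^2 + Rs/3}\right).
\]
Setting $s = Nt$ and cancelling a factor of $N$ in the exponent gives the first display.

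For the high-probability statement I would set the right-hand side equal to $\delta$ and invert. Writing $L := \log\frac{d+1}{\delta}$, the threshold solving $\frac{Nt^2/2}{\sigma^2 + Rt/3} = L$ is the positive root $t^\star = \frac{LR/3 + \sqrt{(LR/3)^2 + 2NL\sigma^2}}{N}$ of the resulting quadratic; applying $\sqrt{a+b}\le\sqrt{a}+\sqrt{b}$ bounds $t^\star \le \sqrt{2\sigma^2 L/N} + \frac{2RL}{3N}$. Taking $t$ equal to this larger value makes the exponent at least $L$, so the tail is at most $(d+1)e^{-L} = \delta$, which is the claimed bound.

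The only genuinely hard input is the Hermitian matrix Bernstein inequality itself, whose proof rests on matrix moment-generating-function bounds via Lieb's concavity theorem; I would cite it from \cite{matrixconcentration} rather than reprove it. Given that black box, the real work here is bookkeeping: checking the dilation identities for $\lambda_{\max}$ and $\mathcal{H}(v)^2$, verifying that centering does not inflate the second-moment parameter beyond $\sigma^2$, and performing the elementary algebra that inverts the tail bound. None of these steps is an obstacle in the usual sense, so the main point to get right is simply that the dilation faithfully transfers both the almost-sure norm bound and the variance proxy to the matrix setting.
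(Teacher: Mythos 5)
Your proof is correct. The paper does not actually prove this lemma---it states it as a known result and cites \cite{matrixconcentration}---so your job was to supply the standard reduction, and you have done so faithfully: the Hermitian dilation $\mathcal{H}(v)$ is exactly the device that converts Tropp's Hermitian matrix Bernstein inequality into the vector statement with the $(d+1)$ prefactor, and your verification of the two transferred hypotheses (the almost-sure bound $\lambda_{\max}(\mathcal{H}(Y_i)) = \|Y_i\| \le R$, and the variance proxy $\|\Exp[\mathcal{H}(Y_i)^2]\|_{\op} = \max\{\Exp\|Y_i\|^2, \|\Exp[Y_iY_i^\top]\|_{\op}\} \le \Exp\|Y_i\|^2 \le \sigma^2$ via centering) is the entire content of the argument. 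Two small remarks. First, when you pass from the threshold $t^\star$ to the larger value $\sqrt{2\sigma^2 L/N} + \tfrac{2RL}{3N}$, you are implicitly using that the exponent $\tfrac{Nt^2/2}{\sigma^2+Rt/3}$ is increasing in $t>0$; this is true and easy, but worth one line. Second, your inversion yields the tail probability bounded by $\delta$, whereas the lemma as printed says $\le 1-\delta$; your version is the intended (and correct) one---the paper's second display contains a typo.
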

As stated, \Cref{lem:Bernstein} does not apply to our setting because (a) the variance of each $X_i := \nabhatzero F_i(\btheta)$ is unknown, and (b) $X_i$ are not uniformly bounded (due to the Gaussian noise $\bw_h^i$ being unbounded.) We address point (a) by replacing $\Variance[X_i]$ with the following empirical upper bound
\begin{align*}
\bar{\sigma}_{0}^2 := \sum_{i}\|\nabhatzero F_i(\btheta)\|^2 \ge \sighatsq_0.
\end{align*}
To address point (b), we take $R$ to be some educated guess on the problem using the gradient samples from the system (e.g. $R=\max_i \| \nabhatzero F_i(\btheta)-\nabbarzero F(\btheta)\|$). In practice, since the confidence bound $\epsilon$ directly scales with $R$, and the user needs to set some threshold term $\gamma$ on $\epsilon + \alpha B$, a guess on the scale of $R$ is already decided by the user threshold $\gamma$. Thus, rather than viewing $R$ as a rigorous absolute bound on the max deviation that we have to compute, we interpret it as a hyperparameter balancing how much we should be cautious against an extreme deviation outside the events covered by the variance term. We find that this approach, while not entirely rigorous, performs well in simulation. The following remark sketches how a rigorous confidence interval could be derived.  
\begin{remark} For a statistically rigorous confidence interval, one would have to (a) control the error introduced by using an empirical estimate of the variance, and (b) control the non-boundedness of the $X_i$ vectors. The first point could be addressed by generalizing the empirical Bernstein inequality \cite{maurer2009empirical} (which slightly inflates the confidence intervals to accomodate fluctuations in empirical variance) to vector-valued random variables. Point (b) can be handled by a truncation argument, leveraging the light-tails of Gaussian vectors. Nevertheless, we find that our naive approach which substitutes in the empirical variance for the true variance and our choice of $R$ has good performance in simulation, so we do not pursue more complicated machinery. In fact, we conjecture that a more rigorous concentration bound may be overly conservative and worse in experiments. 
\end{remark}

\end{document}